%% file: main.tex
\documentclass[oneside]{article}

\input{./packages.tex}
\input{./macros.tex}

\title{\bf \huge Working paper: Towards a Category-theoretic Comparative Framework for Artificial General Intelligence
}

\author{
  Pablo de los Riscos, Fernando  Corbacho \\
  Cognodata R+D \&\\
  Universidad Autonoma de Madrid \\
  \texttt{\{pablo.delosriscos, fernando.corbacho | @cognodata.com\}} \\
  \And
  Michael A. Arbib \\
  University of California \\
  San Diego\\
  \texttt{arbib@usc.edu} \\
}

\begin{document}
\maketitle
\setcounter{tocdepth}{2}

\begin{abstract}

AGI has become the "Holy Grail" of AI with the promise of human level intelligence and the major Tech companies around the world are investing unprecedented amounts of resources in its pursuit. Yet, there does not exist a  single formal definition and only some empirical AGI benchmarking frameworks currently exist.  
The main purpose of this paper is to develop a general, algebraic and category-theoretic framework for describing, comparing and analysing different possible AGI architectures.
Thus, this Category-theoretic formalization would also allow to compare different possible candidate AGI architectures, such as,  Reinforcement Learning (RL), Universal AI, Active Inference, Causal Reinforcement Learning, Schema-based Learning (SBL), etc. It will allow to unambiguously expose their commonalities and differences, and what is even more important, expose areas for future research. 
From the applied Category-theoretic point of view, we take as inspiration "Machines in a Category" to provide a modern view of "AGI Architectures in a Category".
More specifically, this first position paper provides, on one hand, a first exercise on RL, Causal RL and SBL Architectures in a Category, and on the other hand, it is a first step on a broader research program that seeks to provide a unified formal foundation for AGI systems, integrating architectural structure, informational organization, semantic/agent realization, agent–environment interaction, behavioural development over time, and the empirical evaluation of properties. This framework is also intended to support the definition of architectural properties, both syntactic and informational, as well as semantic properties of agents and their assessment in environments with explicitly characterized features.
In the present paper, however, we restrict attention to the architectural layer and its categorical formalization, offering only brief formalization of agents implementation given an architecture.
The paper ends with a proposed Work Plan to achieve the ultimate objective of constructing a general Category-theoretic comparative Framework for a very broad spectrum of AGI Agent Architectures.
We claim that Category Theory and AGI will have a very \textit{symbiotic relation}. That is, AGI will immensely benefit from a Category-theoretic general formalization, while, at the same time, Category Theory will become the front line mathematical paradigm thanks to the extremely wide interest in AGI.

\end{abstract}

\keywords{Category Theory \and AGI \and Reinforcement Learning \and Causal Reinforcement Learning \and Active Inference \and Provably Bounded Optimal Agents \and Universal AI \and Schema-based Learning }

\newpage
\tableofcontents

\input{Parts/1_Introduction}

\input{Parts/2_Related_Work}

\input{Parts/3_ArchAgents_Category}
\input{Parts/4_Agents_Category}

\input{Parts/5_Properties}
\input{Parts/6_Architectures_Cases_Studies}

\input{Parts/7_Agents_implementations}
\input{Parts/Work_in_Progress}

\input{Parts/Conclusion}

\newpage

\appendix
\input{Parts/Framework_Theorems}



\twocolumn
\nocite{*}
\bibliographystyle{unsrt}  
\bibliography{bib}
\onecolumn

\end{document}

%% file: packages.tex
\usepackage{style}

\usepackage[utf8]{inputenc} 
\usepackage[T1]{fontenc}    
\usepackage{hyperref}       
\usepackage{url}            
\usepackage{booktabs}       
\usepackage{amsfonts}       
\usepackage{nicefrac}       
\usepackage{microtype}      
\usepackage{lipsum}
\usepackage{fancyhdr}       
\usepackage{graphicx}       
\graphicspath{{media/}}     

\usepackage[mono=false]{libertine} 

\usepackage{luatex85}

\usepackage{iftex,listings,fancyvrb,longtable,xcolor,soul,setspace,babel}
\usepackage{latexsym}
\usepackage{capt-of}
\usepackage{booktabs}
\usepackage[most]{tcolorbox}
\usepackage{times}
\usepackage{tikz}
\usepackage{amsmath,amsthm,amssymb,mathrsfs,amsfonts,dsfont}
\usepackage{epsfig,graphicx}
\usepackage{tabularx}
\usepackage{blkarray}
\usepackage{slashed}
\usepackage{caption}
\usepackage{lipsum} 
\usepackage[toc,title,titletoc,header]{appendix}
\usepackage{minitoc}
\usepackage{color}
\usepackage{multicol} 
\usepackage{multirow}
\usepackage{bm}
\usepackage{imakeidx} 
\usepackage{hyperref}  
\usepackage{etoolbox}
\usepackage{filecontents}
\usepackage{catchfile}
\usepackage{pdfpages}
\usepackage{amscd}
\usepackage{tikz-cd}
\usepackage{pifont}
\usepackage{wrapfig}

\hypersetup{
    colorlinks=true,
    citecolor=magenta,
    linkcolor=blue,
    filecolor=green,      
    urlcolor=cyan,
}
\usepackage[capitalise]{cleveref}
\usepackage{subcaption}
\usepackage{enumitem}
\usepackage{mathtools}
\usepackage{physics}
\usepackage[linesnumbered,ruled,vlined,algosection]{algorithm2e}
\SetCommentSty{textsf}
\usepackage{epigraph}
\usepackage{geometry}

\usepackage{thmtools}
\declaretheorem[numberwithin=section]{theorem}

\declaretheorem[numberwithin=section]{lemma}
\declaretheorem[numberwithin=section]{proposition}

\declaretheorem[sibling=theorem]{corollary}
\usepackage{changepage}

\makeindex[columns=3, intoc, title=Alphabetical Index, options= -s index.ist]

\usepackage{fancyhdr}
\makeatletter
\def\chaptermark#1{\markboth{\protect\hyper@linkstart{link}{\@currentHref}{Chapter \thechapter ~ #1}\protect\hyper@linkend}{}}
\def\sectionmark#1{\markright{\protect\hyper@linkstart{link}{\@currentHref}{\thesection ~ #1}\protect\hyper@linkend}}
\makeatother

\newcommand{\cmark}{\textcolor{green!60!black}{\ding{51}}}
\newcommand{\xmark}{\textcolor{red!80!black}{\ding{55}}}

%% file: macros.tex
\usetikzlibrary{positioning}

\sethlcolor{yellow}
\newwrite\commentsfile
\AtBeginDocument{%
  \IfFileExists{comments.txt}{}{%
\immediate\openout\commentsfile=comments.txt
  }%
}
\AtEndDocument{%
  \immediate\closeout\commentsfile
}

\newcommand{\readcomments}{%
  \IfFileExists{comments.txt}{%
    \newcommand{\allcomments}{\input{comments.txt}}%
  }{%
    \def\allcomments{}%
  }%
}

\makeatletter
\renewcommand\paragraph{\@startsection{paragraph}{4}{\z@}%
  {-3.25ex \@plus -1ex \@minus -.2ex}%
  {1em}%
  {\normalfont\normalsize\bfseries}}
\makeatother

\newcommand{\probP}{\text{I\kern-0.15em P}} 

\newtheoremstyle{breakthm}
  {\topsep}        
  {\topsep}        
  {\itshape}       
  {}               
  {\bfseries}      
  { }              
  {\newline}       
  {}   

\theoremstyle{breakthm}
\newtheorem{definition}{Definition}[subsection]

\newtheorem{thm}{Theorem}[section]

\newtheorem{remark}{Remark}[section]
\tcolorboxenvironment{boxeddefinition}{
  boxrule=0.5pt,
  colframe=black,
  colback=white,
  enhanced,
  rounded corners=southeast, 
  arc=3pt,               
  before skip=10pt,
  after skip=10pt
}

\numberwithin{equation}{subsection}

%% file: Parts/1_Introduction.tex
\section{Introduction}
\label{sect:introduction}
Artificial General Intelligence (AGI) encompasses a wide and heterogeneous family of agent architectures: reinforcement learning, universal AI agents, active inference, causal reinforcement learning, bounded-optimal agents, and schema-based learning architectures, among many others. Although these frameworks differ in motivation, mathematical
formulation and operational behavior, all share the same structural intuition: an agent is a compositional system that transforms perceptual inputs into actions while updating internal states according to characteristic computational laws. 
Despite this apparent unity, there is currently no formal framework that allows us to compare AGI architectures, state precise relationships between them, derive structural guarantees, or identify principled reasons for their empirical differences. Most existing theories provide results internal to their own formulation (e.g. convergence of value iteration in RL, coherence of Bayesian inference), yet they do not explain how these theories relate to one another, nor how an agent defined in one formalism may be translated, embedded, or approximated within another.

In this regard, this paper proposes a comparative framework based on Category Theory. Rather than viewing an architecture as a concrete algorithm, we treat it as a structured theory of computational interconnections: a specification of admissible interfaces, primitive components, and compositional wiring patterns. This shifts the focus from implementation details to structural organization. Crucially, we distinguish two layers that are often conflated: on one hand, the \emph{syntactic layer}, which governs how operative modules may be composed, and the \emph{knowledge management layer}, on the other hand, which governs how information is represented, transformed, and reused within that structure. Architectures may exhibit similar module flows while differing fundamentally in how they encapsulate models, aggregate evidence, or modularize experience. Thus, making this separation explicit is essential for identifying genuine structural differences and formally characterizing architectural properties.
We formalize these layers using hypergraph categories as a compositional language. They provide a natural algebraic framework to describe types, multi-input/multi-output modules, copying and merging of information, and wiring patterns in a uniform manner. This choice is structural rather than aesthetic. That is, the hypergraph structure captures the relational and resource-sensitive character of information flow in agent architectures, allowing architectural constraints to be expressed independently of any particular semantics.

That is, an architecture syntax does not determine a single representation of knowledge, but constrains how syntactic configurations admit and transform informational states. This separation allows us to analyze how different knowledge organizations inhabit the same syntactic scaffold, and how syntactic changes modify the admissible space of knowledge structures, enabling principled comparisons within and across different architectural families.

Furthermore, we introduce a third structural component, namely, the \emph{constraint layer}. While the syntax and knowledge layers specify, respectively, the admissible compositional patterns and the space of internal informational structures, they do not by themselves completely determine which concrete realizations qualify as valid instances of the architecture. The constraint layer fills this gap by encoding the restrictions that candidate implementations must satisfy in order to be considered admissible agents realizing the architecture.
This layer consists of two complementary components: a \emph{relational interface} and a \emph{constraint system}. The relational interface between syntax and knowledge is modeled through a profunctorial correspondence. That is, the architectural syntax does not prescribe a unique representation of knowledge, but instead it constrains how syntactic configurations can access, interact with, and transform informational states. This abstraction allows multiple knowledge organizations to inhabit the same syntactic scaffold, while ensuring that syntactic variations systematically induce changes in the admissible space of knowledge structures. As a result, it enables principled comparisons across architectural families at the level of information flow and interaction.
The constraint system, in turn, captures the mathematical conditions that must be satisfied by any concrete realization of the architecture. Rather than representing constraints as isolated formulas, we organize them as an indexed family over architectural scopes, formalized as a fibrational structure of admissible constraints. Intuitively, this provides a compositional and implementation-independent language for expressing heterogeneous requirements, such as representability conditions, fixed-point equations, optimality principles, or factorization properties, and for evaluating their satisfaction under semantic realizations. This formulation ensures that constraints are both structurally grounded in the architecture and robust under architectural transformations.  

Concrete agents are instantiations of a specific architecture, and are defined as monoidal functors
\[I:\mathcal{G}_A\to\mathcal{E}, \qquad J:Know_A\to\mathcal{E},\]
which interpret the abstract architecture $A$ inside a semantic universe $\mathcal{E}$ (e.g., $\mathsf{Stoch},\mathsf{FinStoch},\mathsf{Set}$, Kleisli categories ). Such functors provide the implementation-level semantics of the agent while fulfilling the constraint layer that characterize the architecture.

This construction naturally extends to a Grothendieck fibration
\[ Agents(\mathcal{E}) \;:=\;
\int_{A \in ArchAgents} Agents(A,\mathcal{E}). \] 
where the base category $\mathbf{ArchAgents}$ consists of different architecture types as well as their structure-preserving translations, and the fiber over each architecture contains all its compatible implementations. Morphisms between architectures correspond to translations of computational structure and induce reindexing functors relating their agents.

In order to study the theoretical capabilities of architectures, we also introduce for each architecture $A$ a first formalization of structural properties as posets/categories
\[\mathbf{Props}(A),\]
capturing notions such as convergence guarantees, expressivity, sample efficiency, stability, causal identifiability, or modularity. 

Furthermore, each concrete agent is equipped with a property valuation, assigning to each architectural property a degree of fulfillment based on its semantic implementation. This provides a principled mathematical basis for relating:
\begin{itemize}
    \item \emph{structural guarantees} (properties of the architecture),
    \item \emph{semantic capabilities} (properties of the agent), and
    \item \emph{performance measurements} (empirical evaluations).
\end{itemize}

In summary, this paper proposes a unified category-theoretic framework in
which:
\begin{enumerate}
    \item architectures are algebraic theories of interconnection (free hypergraph categories with designated structural diagrams and specific constraints),
    \item agents are admissible semantic interpretations of architectures (monoidal functors into a system category),
    \item properties form a functorial structure of theoretical guarantees (posets of laws derivable from the architecture),
    \item and the entire framework organizes into a fibration of agents over architectures, enabling principled comparisons, translations, and analyzes across formalisms.
\end{enumerate}

%% file: Parts/2_Related_Work.tex
\section{Related Work}
\subsection{AGI and Cognitive Architectures Related Work}

There exists very good extensive AGI functional descriptions \cite{Goertzel14}, cognitive architectures for AGI \cite{Goertzel2023,Goertzel2014P1,Goertzel2014P2,Laird2012Robotics,Laird2017,Laird2012Soar,Laird1990}
\cite{Rosenbloom2016a,Rosenbloom2016b,Sun2005,Sun2016}
some formal frameworks for AGI \cite{Catt2023,He2025,Hutter2005,Hutter2024,Veness2011,Wang2012}, 
empirical AGI benchmarking frameworks \cite{chollet2019}, and even proposals for a  Common Standard Model for Cognitive Architectures \cite{Laird2017}. 
Yet, there still is a need for a formal comparative framework that allows precise, mathematically grounded comparison among different candidate AGI architectures clearly and formally exposing their differences, dependencies and properties. 
As Laird, Labiere and Rosenbloom \cite{Laird2017} pointed out, one important attempt to bring several of these threads back together was work on a “generic architecture for humanlike cognition”  by Goertzel, Pennachin, and Geisweiller \cite{Goertzel2014P1}, which conceptually amalgamated key ideas from the CogPrime \cite{Goertzel2014P1,Goertzel2014P2},
CogAff (Sloman 2001), 
LIDA, MicroPsi (Bach 2009), and 
4D/RCS architectures, plus a form of deep learning (Arel, Rose, and Coop 2009). 
A number of the goals of that effort were similar to those identified for the standard model of the mind \cite{Laird2017}; however, the result in \cite{Goertzel2014P1} was more about assembling disparate pieces from across these architectures rather than identifying what is common among them, with a bias, thus, more toward completeness than concord. Whereas, the emphasis of the standard model of the mind is more towards concord and convergence. 
Regarding our view on AGI, we also borrow ideas from the definition of Artificial Intelligence based on Wang´s \cite{Wang2019} and its different types based on Legg and Huttter´s definitions \cite{Legg2007}.
We also ascribe that  Life-long continual cumulative learning is essential for AGI as pointed out by Thorisson and colleagues  \cite{Thorisson2019} and that intelligence is measured by the efficiency of skill-acquisition on unknown tasks as described in Chollet´s \cite{chollet2019} "Abstract and Reasoning Corpus"  (ARC-AGI) benchmark to measure fluid intelligence.  

\subsection{Category-Theoretic Related Work}

Due to space limitations, we only highlight the main research areas that intersect with the framework proposed in this paper. Further details are included in the extended version \cite{RiscosExtended}. 
We have been initially motivated to provide an updated view of  "Machines in a Category" \cite{arbibfuzzy,MachinesCategoryExpository,ARBIB1975313,ARBIB19809} towards AGI Architectures in a Category.
Current work on Machine Learning from a Category-theoretic (CT) perspective is reviewed by \cite{Gavranovic2024,Jia2025,Schiebler2021}, following the  seminal work by Fong and collaborators \cite{Fong2019}.
More specifically, work on Reinforcement learning from a CT perspective is very relevant for our work. In this regard, seminal current research by Hedges and Rodriguez-Sakamoto \cite{Hedges2023,Hedges2024ReinforcementLI} provides an important step towards formalizing RL under CT.
Bakirtzis and colleagues \cite{BakirtzisJMLR2025,BakirtzisECAI} also provide a very interesting different view on reusability, reducibility and compositionality in reinforcement learning. Mahadevan´s seminal work defines \cite{MahadevanUDM,MahadevanURL}
Univeral Reinforcement Learning, and by specialization RL, as essentially a process of stochastic metric coinduction in a universal
coalgebra defined over a topos of (action)value functions, where all TD-type algorithms can be analyzed in terms of iterative procedures for discovering final coalgebras.
Additionally, a few research papers have begun to analyze certain aspects of AGI from a CT point of view.
The book by  Hedges and colleagues is a very original and important first attempt to bring the Categorical perspective and formalization to the General Intelligence endeavor \cite{SwanGI2022}. They provide a  review of the wide IA capabilities spectrum and also propose Categorical Cybernetics \cite{Capucci2022} as the core formal foundation. 
On the other hand, some recent papers also provide with other views on different uses of CT towards AGI \cite{AbbottCategoryTheoryAGI,Mahadevan26,sennesh2023computingcategoriesmachinelearning}
\cite{YanAGICatLogic,yuan2023categoricalframeworkgeneralintelligence}.
While they all represent valid attempts to some specific AGI capabilities, they lack overall generality. 
We take a different approach by emphasizing the architectural level with different structural, knowledge and semantic layers allowing a general comparative framework that is able to encompass many possible candidate AGI architectures. 
Research on Systems Theory \cite{Baez2015categoriescontrol,Bonchi2019,Coecke2016,Fong2016,Libkind2025,Myers21,Myers23} and Causality and Markov processes from the CT perspective \cite{Baez_2016,DiLavore2023,DiLavore2025,Fritz2020,jacobs2019causalinference,Kissinger2017,Mahadevan23,Mahadevan25,Perrone2024} also relates to our semantic implementation of specific agents as further detailed in \cite{RiscosCATSBL,RiscosExtended}.

%% file: Parts/3_ArchAgents_Category.tex
\section{The ArchAgents Category}
\label{sect:ArchAgentCat}
In this section we introduce the category $\mathbf{ArchAgents}$, whose objects are abstract agent architectures and whose morphisms are translations between them. Our guiding principle is that an architecture does not specify algorithms, models, or learning procedures, but rather defines algebraic constraints on admissible components, their interconnections, and specific knowledge management mechanisms. Concrete agents arise only later as semantic interpretations of these architectures.

We adopt the formalism of hypergraph categories \cite{fong2016algebraopeninterconnectedsystems,fong2019hypergraphcategories} as the foundation for representing network-style architectures for syntax and knowledge dynamics.
Hypergraph categories are symmetric monoidal category equipped with special commutative Frobenius structures, enabling graphical representation of wiring diagrams, feedback, and open interconnections.

\subsection{What is an agent´s  architecture?}
Before introducing the formal definitions, we clarify what we mean by agent´s architecture and provide some high-level examples.
We define an agent´s architecture as its blueprint that defines some structural and domain constraints. In that regard, we first distinguish two orthogonal architectural dimensions:
\begin{enumerate}
    \item Syntactic structure:  it determines how the map from perception to action is organized as a compositional flow. This includes the intermediate stages involved in decision-making, learning, or model use. The architecture specifies which modules exist, how they are connected, and what their inputs and outputs are, but it does not specify the internal algorithms or implementations of these modules.
    \item Knowledge structure: it determines what  kinds of representational knowledge structures (what we call knowledge units)  may exist or be constructed, and what structural transformations can be performed over them. That is, the architecture determines the different ways in which information from the environment can be encapsulated, as well as the transformations the agent is allowed to perform on such knowledge carriers.
\end{enumerate}
Importantly, the architecture does not determine:
\begin{itemize}
    \item the specific models used to instantiate those knowledge units,
    \item nor the concrete algorithms used for implementation.
\end{itemize}
These two architectural dimensions, although orthogonal, are obviously coupled inside the architecture. That is, the operational modules induce which transformations over knowledge are used, and the representational structure of knowledge must be reflected in the syntactic interfaces (ports) of the architecture.
We now illustrate this notion of architecture and its two dimensions through some examples.

\subsubsection{Illustrative Example: A Ford Assembly Line}

Before introducing the formal definitions and more complex examples, it is useful to distinguish informally between \emph{syntax} and \emph{knowledge} through a simple industrial example.
Consider a car assembly line, such as a classical Ford-style manufacturing process. At an abstract level, the assembly line imposes a structured workflow: the chassis must be prepared before the engine can be mounted, the engine must be installed before certain electrical components are connected, and the wheels cannot be attached before the axle and suspension are in place. In other words, there is a constrained order of admissible operations. This defines the \emph{syntactic structure} of the process.

More precisely, the syntax specifies:
\begin{itemize}
    \item which stages or modules exist in the assembly pipeline,
    \item which outputs of one stage may serve as inputs to another,
    \item and which compositions are admissible or forbidden.
\end{itemize}

For example, ``install wheel'' cannot be composed meaningfully before ``mount axle'', and ``tighten wheel bolts'' only makes sense once the wheel has already been positioned. These are not implementation details, but structural constraints on the organization of the process.
However, once this syntactic organization is fixed, each stage may still admit multiple concrete realizations. For instance, the task ``tighten wheel bolts'' may be carried out:
\begin{itemize}
    \item manually by a human worker tightening the bolts in a specific order,
    \item or by an industrial robot tightening all the bolts at the same time.
\end{itemize}

Likewise, ``paint chassis'' may be realized by different methodologies, and ``inspect alignment'' may rely on some human judgments or automated rule systems. These alternatives do not change the compositional structure of the assembly line itself; they only change \emph{how} the processes are done. This corresponds to the \emph{knowledge layer}.
Thus, in this example, the distinction is the following:

\begin{quote}
\textbf{Syntax} determines the admissible workflow of the architecture, that is, the organizational pattern of the process.\\
\textbf{Knowledge} determines the admissible internal realizations of each component determining how to proceed, that is, the methods, models, or mechanisms used to execute each stage.
\end{quote}

This distinction is central for agent architectures. Two agents may share exactly the same syntactic organization, for example, the same perception $\to$ inference $\to$ action $\to$ update pipeline, while differing radically in the internal knowledge structures or procedures used inside each module. Conversely, two agents may use similar knowledge mechanisms but differ in how those mechanisms are compositionally organized. Our framework treats these as two orthogonal but coupled architectural dimensions.

\subsubsection{Example 2: The Doctor's clinic}
Consider a doctor in a daily clinical practice. Each patient activates the same operational scheme: the doctor gathers symptoms, formulates a provisional diagnosis, prescribes a treatment, observes the outcome, and revises the decision if necessary. The flow $observe \to decide \to act \to evaluate$ remains stable over time, regardless of the specific diseases encountered. This constitutes the operational architecture and what evolves is the internal organization of knowledge.

When confronted with a new disease that shares symptoms with a known previous one, the doctor may initially be forced to split what was previously considered a single condition into two distinct explanatory models. Once differentiated, each one can be refined by incorporating new distinguishing features. As more cases accumulate, broader regularities may emerge, for example, that certain age groups respond worse to specific symptoms, leading to more general rules. Eventually, such regularities may be codified into systematic protocols guiding future decisions.
Throughout this process, the operational structure remains unchanged, but the organization of knowledge becomes progressively refined. That is, differentiated, structured and generalized.

Now consider a second doctor with exactly the same operational flow. Externally, both doctors interact with patients in identical ways. The difference lies in their evaluation step, how they manage knowledge.
The second doctor has constraints on the structure of his understanding. He assumes a single unified explanatory model for all diseases, so he is constrained to do parametric updates within a fixed hypothesis structure. When faced with unexpected outcomes, he merely adjust confidence levels in the treatments. If a therapy fails more often than expected, he updates the internal probabilities (e.g., from 80\% to 60\% effectiveness), but he does not consider the possibility that two distinct diseases may underlie similar symptoms.

When a genuinely distinct disease appears, the first physician can isolate it conceptually and construct a new explanatory model. The second doctor, lacking structural operations such as model differentiation, is forced to fit all cases into a single schema. His performance degrades, not because of differences in perception or action, but because his architecture permits only parametric uncertainty adjustment, not structural reorganization of knowledge.
The difference lies not in observation or action, but in the admissible operations over knowledge units.

\subsubsection{Example 3: The Navigation Robot}
Consider two robots that must navigate a city daily to reach a café. Both update their knowledge using the same learning rule, That is, when a route takes longer than expected, they adjust their internal time estimates. Thus, in terms of parameter updating, they are identical.
Yet, the first robot processes its perceptions monolithically. Each time it observes traffic, weather, and time of day, it treats this information as a single undifferentiated block, computes a globally optimal action, executes it, and then updates its model. Its flow is: $global\;perception \to global\;decision \to action \to global\;update$
On the other hand, the second robot introduces structural decomposition. Upon receiving perceptual input, it first separates information into distinct components: traffic state, weather conditions, etc. It then uses its model to generate predictions separately (e.g., travel time estimation, risk assessment), and only then combines these partial outputs to determine the next action. After observing the outcome, it updates its knowledge only taking into account the component responsible for the error.

However, in this case the difference lies entirely in the syntactic organization of the perception–action pipeline, not in the way knowledge itself is modified.
Although both robots use the same rule for adjusting estimates, the organization of their perception–action pipeline differs. The second robot can isolate subtasks and errors, reuse components, and adapt locally since it does modularize its perception. On the other hand, the first robot must always use and update its entire model globally.
Thus, the distinction is not in how knowledge is updated, but in how the perception–action flow is structured.

\subsubsection{The Constraint Layer}

So far, we have described an architecture in terms of two interacting layers: a syntactic layer, specifying admissible compositional patterns, and a knowledge layer, specifying admissible representations and transformations of information. However, this description is still too permissive. In general, not every syntactically well-formed composition nor every formally admissible knowledge construction should be considered a valid instance of a given architecture.

The role of the \emph{constraint layer} is precisely to restrict this space of possibilities.

At an abstract level, the constraint layer specifies \emph{laws} that admissible architectures must satisfy. These laws do not introduce new components or transformations, but rather impose conditions on how existing ones may be used, combined, or interpreted. In this sense, the constraint layer refines the raw generative capacity of the syntactic and knowledge layers into a structured class of admissible systems.

More concretely, constraints may act on different aspects of the architecture:

\begin{itemize}
    \item \textbf{Syntactic constraints:} restricting admissible implementations over the wiring patterns.
    
    \item \textbf{Knowledge constraints:} restricting the realization of the knowledge units or transformations. For example, requiring that beliefs form a Bayesian model, that updates are likelihood-based, or that representations admit only parametric updates.
    
    \item \textbf{Interface constraints:} enforcing compatibility conditions between syntax and knowledge, ensuring that the types and arities of syntactic modules align with the structure of knowledge representations.
    
    \item \textbf{Dynamical or consistency constraints:} imposing global properties such as convergence, coherence, or invariance under certain transformations.
\end{itemize}

From a category-theoretic perspective, these constraints can be understood as objects of \emph{subcategories}, or more generally as \emph{properties over designated fragments of the architecture}. That is, while the architecture defines a free or generative compositional structure, the constraint layer carves out a subtheory of admissible realization when this compositional structure moves from theory to practice.

This distinction is essential. Two architectures may share exactly the same syntactic generators and knowledge primitives, yet differ in their constraint layer. In such cases, the difference does not lie in what can be expressed, but in what is considered \emph{valid} during the implementation. For example, a reinforcement learning architecture and a fully Bayesian agent may share similar structural components, yet differ in whether belief updates must satisfy Bayesian coherence, or whether value functions must satisfy Bellman consistency.

In summary, the constraint layer transforms an architecture from a purely generative specification into a \emph{normative theory}: it determines not only what can be built, but what counts as an admissible realization. This layer will play a central role in distinguishing agent families, defining admissibility of agents, and enabling precise comparisons between architectures.

\subsection{Architectural Presentations and Generated Categories}
First, we explain the methodological framework used to define the specific hypergraph categories considered in this work. Our construction relies on the relationship between colored PROPs and hypergraph categories. For this reason, we introduce a notion of hypergraph presentation that mirrors the presentation of a free colored PROP.
The guiding principle is the classical construction of a free colored PROP from a signature. That is, one specifies a set of colors (types), a collection of generating morphisms with prescribed input and output profiles over those colors, and a family of equations between composite expressions. In the present setting, this syntactic data is further enriched by freely adjoining, on each color, the structure of a special commutative Frobenius algebra, subject to the usual Frobenius, unit, counit, associativity, commutativity, and specialness axioms. Some researchers have worked before on the correspondence between such presentations and hypergraph categories. That is, every hypergraph category arises (up to equivalence) from a PROP equipped with compatible Frobenius structures on its objects, and conversely, any PROP presented in this way canonically induces a hypergraph category. \cite{Zanasi_2017,fong2018sevensketchescompositionalityinvitation}

Accordingly, we present free hypergraph categories directly in terms of generators, relations, and Frobenius structure. This is not an ad hoc reformulation, but rather the natural categorical analog of the classical presentation of free PROPs, internalized to the hypergraph setting. We adopt the notation $\mathrm{Types}$ and $\mathrm{Gen}$ for the underlying signature data, instead of other more traditional symbols such as $\Sigma_0$ and $\Sigma_1$, in order to emphasize the semantic interpretation that will be relevant in later sections.

\begin{definition}[Hypergraph Presentation]
A \emph{free hypergraph presentation} is a triple $(Types, Gen, Eq)$ where:
\begin{itemize}
    \item $Types_A$ is a set of formal object symbols representing the interfaces of the syntactic or knowledge part, such as perceptual channels, action interfaces, or memory ports,

    \item $Gen_A$ is a set of morphism symbols representing primitive components. Each generator $g \in Gen_A$ is equipped with a typing
    \[g : X_g \to Y_g,\]
    where $X_g$ and $Y_g$ are formal tensor expressions over $Types_A$ under a symmetric monoidal product $\otimes$.
    Generators specify admissible interconnections between interfaces (e.g. policy modules, inference modules, causal structural learners, perceptual module, memory updates, etc.), without any algorithmic or semantic interpretation,
    
    \item  $Eq_A$ is a (possibly empty) set of equations consisting solely of:
    \begin{itemize}
      \item the axioms of symmetric monoidal categories,
      \item the Frobenius algebra axioms induced by the hypergraph structure,
      \item additional purely syntactic wiring equalities required by the architectural presentation.
    \end{itemize}

\end{itemize}

This presentation generates the following free hypergraph category:
\[\mathsf{Hyp}\langle Types, Gen \mid Eq \rangle .\]
\end{definition}

\begin{remark}
We assume that the Frobenius structure on types is part of the ambient hypergraph setting and therefore need not be explicitly specified in $Eq$. 
\end{remark}

From this definition, we can present the two main ingredients that define an architecture: the syntactic and the knowledge dimensions. We formalize both using hypergraph categories because their symmetric monoidal structure, together with special commutative Frobenius algebras, provides a canonical way to represent compositional wiring diagrams with copying, merging, and feedback. This choice is independent of implementation details and ensures that architectures can be compared at the structural level. Moreover, working within this algebraic setting equips us with a well-developed categorical toolkit for reasoning about equivalence, compositionality, and invariants. Thus, enabling the formal comparison and prove of structural properties for different AI architectures.

\paragraph{Syntactic layer.}
A \emph{syntactic presentation} $A := (STypes_A, SGen_A, SEq_A)$ is a
hypergraph presentation whose objects represent syntactic interfaces
(e.g.\ perceptual channels, action interfaces, memory ports) and whose
generators represent primitive syntactic components. It generates the
\textbf{syntactic layer category}
\[\mathsf{Syn}_A := \mathsf{Hyp}\langle STypes_A, SGen_A \mid SEq_A \rangle .\]

\paragraph{Knowledge layer.}
The syntactic layer specifies admissible wiring patterns between
interface types, but it does not by itself determine how persistent internal knowledge
is structured, transformed, or accessed. These aspects are syntactic by
nature and are specified independently of any concrete learning algorithm or
semantic interpretation.
Analogously, a \emph{knowledge presentation}
$(\mathsf{KTypes}_A,\mathsf{KGen}_A,\mathsf{KEq}_A)$ is a hypergraph presentation
whose objects represent types of internal knowledge resources and whose
generators represent admissible knowledge transformations, such as updating,
combining, encapsulating, or discarding knowledge. It induces the
\textbf{knowledge layer category}
\[
\mathsf{Know}_A := \mathsf{Hyp}\langle \mathsf{KTypes}_A, \mathsf{KGen}_A
\mid \mathsf{KEq}_A \rangle .
\]

\subsection{Syntax Patterns and Admissible Workflows}
After the definition of the syntactic and knowledge hypergraphs, we are going to identify what are the possible morphisms inside each category. This is important since the syntactic morphisms represents all the possible ways that an agent could possibly behave, but under each architecture, the implemented agents will only follow one of those posibilities, these is what we define as the syntactic diagram. It is also important to interpret what the knowledge layer morphisms will be, since will determine all the possible workflows of transformations over the knowledge units. This is what we call knowledge workflows, that will be instantiated during implementation, and for example it is what can truly differentiates the learning behaviour of two agents during empirical evaluation, going further than the specific properties of the algorithms used.

\begin{definition}[Syntax Diagram]
Given $\mathsf{Syn}_A$, a \emph{syntax diagram} $\mathcal{G}_A$ is the full
symmetric monoidal subcategory of $\mathsf{Syn}_A$ generated by a distinguished architectural pattern, that is, a morphism $g_A \in \mathsf{Syn}_A$, together with the class $\langle g_A \rangle \subseteq \mathrm{Mor}(\mathsf{Syn}_A)$ freely generated from $g_A$ under:
\begin{itemize}
    \item symmetric monoidal structural isomorphisms,
    \item Frobenius algebra equations,
    \item composition and tensoring with identities.
\end{itemize}
\end{definition}

The category $\mathcal{G}_A$ specifies the admissible \emph{syntax diagrams}, i.e. the syntactic compositions considered meaningful at the syntactic level. The generating morphism $g_A$ acts as an syntactic skeleton, without fixing any semantic interpretation. Analogously, we identify the morphisms of an architectural knowledge category as knowledge workflows. \\

\begin{definition}[Knowledge Workflow]
A knowledge workflow is a morphism in the knowledge layer category $\mathsf{Know}_A$, that is,
\[ w : X \to Y \in \mathsf{Know}_A,\]
\end{definition}

Knowledge workflows are not further restricted at the architectural level, as they are designed an made explicit during the implementation and the specific algorithmic instantiations.

\subsection{The Constraints layer}
The syntax layer determines the permitted wiring patterns between interfaces, while the knowledge layer defines the knowledge units and the permitted ways to perform operations on them. However, these two layers do not by themselves specify the constraints that the architecture imposes on potential implementations, meaning what can be considered admissible agents within an architecture. 
This is what we denote as the constraints layer of the architecture and it covers the following two aspects: 
\begin{enumerate}
    \item the relationship between the syntactic and the knowledge layers, i.e. which syntactic and knowledge types are related and which syntactic generators induce knowledge workflows in the implemented agents, and
    \item the different conditions or constraints that the architecture imposes over the admissible realizations, e.g. Bellman consistency of update operators in RL, expectimax-conducted policy in UAI architectures or the obligation of some knowledge units to be causal in the CRL architecture.     
\end{enumerate} 

\subsubsection{The Relational Interface}
The syntax layer determines admissible wiring patterns between interfaces, but it does not specify how it is related with the knowledge layer that formalizes the knowledge management dynamics. This interaction is captured by a profunctorial interface.

\begin{definition}[Relational Interface]
Let $\mathcal{G}_A$ be an syntax diagram and $\mathsf{Know}_A$ be the knowledge layer category.
We define the \emph{relational interface} $\Phi_A$ that relates both categories as
\[ \Phi_A : \mathcal{G}_A \nrightarrow \mathsf{Know}_A,
\qquad
\Phi_A : \mathcal{G}_A^{\mathrm{op}} \times \mathsf{Know}_A
\longrightarrow \mathbf{Set}.
\]
\end{definition}

The profunctor $\Phi_A:\mathcal{G}_A \nrightarrow Know_A$ specifies where and how syntactic components may interact with internal knowledge resources, without committing to any particular representation or transformation mechanism for knowledge.

A profunctor is appropriate here because the relationship between syntax and knowledge is not functorial. That is, a syntactic component may access multiple knowledge types, and a knowledge type may serve multiple syntactic contexts. Thus, the interaction is relational rather than strictly functorial.
Moreover, the orientation $\Phi_A:\mathcal{G}_A \nrightarrow Know_A$ captures an essential asymmetry in this relation. That is, the syntactic components are defined independently, while their admissible interactions may depend on the available knowledge structures. In this sense, syntax is parametrically constrained by knowledge, but it is not determined by it.

Although we formalize the syntax–knowledge interaction as a profunctor, alternative categorical interfaces could be considered. In particular, this interaction may admit a refinement in terms of optic-like structures (e.g. lenses or more general profunctor optics), which explicitly model bidirectional access, update, and contextual embedding of knowledge inside syntactic components. Conversely, one may also consider simplifying the interface by restricting the profunctor to be $\mathbf{Bool}$-valued, capturing only the existence of a relation rather than its full set-valued structure, thereby avoiding potential overformalization. However, for the present work, the profunctorial formulation provides a sufficiently general and implementation independent abstraction. It captures dependency and admissibility without imposing additional algebraic structure. A systematic optic-based treatment is deferred to future work.

\begin{remark}[Object-Level Interpretation]
For $s \in \mathrm{Ob}(\mathcal{G}_A)$ and $k \in \mathrm{Ob}(\mathsf{Know}_A)$,
the set $\Phi_A(s,k)$ indicate if there exist a relation between the syntactic type $s$ and the knowledge type $k$. That is, the profunctor induces a structural partition among the syntactic types: the types that interact with some type of knowledge $k$ ($\Phi(d,k) \neq \varnothing $) or not ( $\Phi(d,k) = \varnothing $)
\end{remark}

\begin{remark}[Functorial Action]
Functoriality of $\Phi_A$ ensures that syntactic refinement and knowledge transformation act coherently on admissible interactions, contravariantly in syntactic diagrams and covariantly in knowledge transformations.
\end{remark}
Intuitively, the profunctor does not prescribe how knowledge is represented or transformed, but only specifies where and how architectural structure may interact with internal knowledge resources.

\begin{remark}[Generator-Level Classification]
When the syntactic interface is specified by its action on the primitive generators of $\mathcal{G}_A$, it induces a natural classification of syntactic components into knowledge-using, knowledge-transforming or knowledge-agnostic elements.
\end{remark}

\subsubsection{The Constraint System}
We define the Constraint System as an indexed family of constraints organized fibrationally over the different architectural scopes. This is intended to capture those statements that distinguish architectures not only by their compositional organization, but also by the mathematical conditions imposed on their admissible realizations. First we introduce the category of scopes given the syntactic and knowledge layers and the relational interface.

\begin{definition}[Category of Scopes]
Let $\mathcal G_A$ be a syntax diagram, $\mathsf{Know}_A$ a knowledge layer, and $\Phi_A$ a relational interface.
The category of scopes $\mathcal C_A$ is a category whose objects are designated architectural fragments, including:
\begin{itemize}
    \item types, generators or morphisms in $\mathcal G_A$ or $\mathsf{Know}_A$,
    \item interface fragments induced by $\Phi_A$,
    \item and structured combinations of the above.
\end{itemize}
Morphisms in $\mathcal C_A$ represent inclusions, refinements,
or transformations of scopes.
\end{definition}

\begin{definition}[Architectural Constraint Fibration and Constraint]
Given a syntax diagram $\mathcal G_A$, a knowledge layer $\mathsf{Know}_A$, and relational interface $\Phi_A$, an architectural constraint fibration is a fibration
\[
\pi_A : \mathcal P_A \longrightarrow \mathcal C_A,
\]
where for each scope $X \in \mathcal C_A$, the fiber
$\mathcal P_A(X)$ is a poset (or category) of constraints
that can be imposed on realizations of $X$.

A \emph{constraint} is a pair $\rho = (X,\, P)$, with $X \in \mathcal C_A $ and $ P \in \mathcal P_A(X)$. We will usually denote $X$ as $\mathrm{scope}(\rho)$.
\end{definition}

\begin{remark}[Interpretation of the Constraint Fibration]
Intuitively, the fiber $\mathcal P_A(X)$ collects constraints that can
be imposed on semantic realizations of the architectural fragment $X$.
In typical cases, such constraints can be understood as specifying
admissible subclasses of a suitable space of realizations,
$\mathrm{Mod}_{\mathcal E}(X)$. $\mathcal{E}$ is how we will denote the category of implementations or realization when formalizing agents in the next section \ref{sect:AgentCat}. 

We deliberately leave the precise construction of
$\mathcal P_A$ abstract, as different semantic settings may
instantiate it in different ways (e.g. subobject lattices,
logical predicates, or structured constraint classes).
\end{remark}

\begin{definition}[Constraint Layer]
The \emph{constraint layer} is given by
\[\mathcal T_A = (\Phi_A,\; \pi_A : \mathcal P_A \to \mathcal C_A,\; \mathcal R_A),\]
where $\mathcal R_A =\{\rho_i=(X_i,P_i) \; s.t. \;X_i \in C_A \wedge  P_i \in P_A(X_i)\}$ is a distinguished set of constraints.
\end{definition}

\begin{remark}[Constraints Typology]
\label{remark:Constraint typology}

Although constraints are formalized as constraints in the fibers
of the fibration $\pi_A : \mathcal P_A \to \mathcal C_A$, it is useful
to retain a coarse typology reflecting the possible different mathematical
nature of such constraints.
Concretely, properties in $\mathcal P_A(X)$ typically arise from
recurring structural patterns, including:

\begin{itemize}
    \item \textbf{Diagrammatic / commutativity constraints:}
    requiring that certain diagrams in a realization commute;
    
    \item \textbf{Factorization constraints:}
    requiring that a realization factors through a designated
    decomposition or interface;

    \item \textbf{Membership / representability constraints:}
    restricting realizations to lie within a specified class
    (e.g.\ value-like objects, probabilistic models);

    \item \textbf{Fixed-point constraints:}
    requiring the existence of elements satisfying
    $x = F(x)$ for a designated operator;

    \item \textbf{Optimality / extremality constraints:}
    imposing that a realization is optimal with respect to
    a given order, functional, or preference structure;

    \item \textbf{Order or inequality constraints:}
    expressing monotonicity, bounds, or preference relations;

    \item \textbf{Interface compatibility constraints:}
    relating realizations across the syntax--knowledge interface
    via $\Phi_A$.
\end{itemize}

This classification is not part of the formal definition of the
constraint system and it is not a final typology or taxonomy, but provides a useful organizational layer for understanding the types of constraints that appear in practice. Different architectures typically select constraints from several of these classes.
\end{remark}

\subsection{The Category $\mathbf{ArchAgents}$}

\begin{definition}[Agent Architecture]
An \emph{agent architecture} is a triple 
\[A := (\mathcal{G}_A,\mathsf{Know}_A,\mathcal{T}_A),\]

where 
\begin{itemize}
    \item $\mathcal{G}_A \subseteq \mathsf{Syn}_A$ is a distinguished symmetric monoidal subcategory of the hypergraph category $Syn_A$, specifying the admissible \emph{syntactic diagrams}, that is, the syntactic compositions considered meaningful at the architectural level.
    
    \item $\mathsf{Know}_A$ is a hypergraph category, called the \emph{knowledge layer category}, whose objects represent types of knowledge units/resources and whose morphisms represent admissible knowledge transformations workflows.

    \item $\mathcal{T}_A$ is the constraint layer containing the relational interface profunctor, specifying how the syntactic level is related with the knowledge level, and the constraint system containing the architectural Constraint fibration the distinguished set of constraints.
\end{itemize}

\end{definition}

The architecture $A$ constrains both the space of admissible syntactic compositions and the internal structure and transformation of knowledge, independently of any particular agent implementation or learning algorithm, whenever it fulfills the requirements of the constraint layer.

\begin{definition}[Architecture Morphisms]
Let $A,B$ be agent architectures. A morphism $F : A \to B$
consists of the following data: 

\begin{enumerate}
    \item \textbf{Structural translation:}
    a pair of symmetric monoidal functors
    \[
    F_{\mathcal{G}} : \mathcal{G}_A \to \mathcal{G}_B,
    \qquad
    F_{\mathsf{Know}} : \mathsf{Know}_A \to \mathsf{Know}_B;
    \]

    \item \textbf{Interface compatibility:}
    a natural transformation
    \[
    \Phi_A \;\Rightarrow\; \Phi_B \circ
    (F_{\mathcal{G}}^{\mathrm{op}} \times F_{\mathsf{Know}});
    \]

    \item \textbf{Constraint part translation:}
    a functor
    \[
    F_{\mathcal C} : \mathcal C_A \to \mathcal C_B
    \]
    mapping architectural scopes in $A$ to scopes in $B$,
    induced by $(F_{\mathcal G},F_{\mathsf{Know}})$ and the
    interface transformation. Together with the transportation of the properties via a functor $F^\sharp : \mathcal P_A \to \mathcal P_B$
    such that
    \[
    \pi_B \circ F^\sharp = F_{\mathcal C} \circ \pi_A,
    \]
    i.e., $F^\sharp$ maps properties over a scope $X$ in $A$
    to properties over the translated scope $F_{\mathcal C}(X)$ in $B$;

\end{enumerate}

No additional coherence conditions are imposed between the structural and constraint components, allowing abstraction-preserving architectural refinements. Notice that no kind of constraint preservation is imposed, so there exist morphisms between architectures with different levels of restrictions.

\end{definition}

\begin{definition}[The Category $\mathbf{ArchAgents}$]
The category $\mathbf{ArchAgents}$ has:
\begin{itemize}
    \item Objects: agent architectures $(\mathcal{G}_A,\mathsf{Know}_A,\mathcal{T}_A)$;
    \item Morphisms: architecture morphisms;
    
    \item Composition: given morphisms $F$ and $G$,
    their composite $G \circ F$ is defined componentwise as follows:
    \begin{itemize}
        \item \textbf{on syntax}: $(G \circ F)_{\mathcal G} :=G_{\mathcal G} \circ F_{\mathcal G}$
        \item \textbf{on knowledge}: $(G\circ F)_{\mathsf{Know}} := G_{\mathsf{Know}} \circ F_{\mathsf{Know}}$,
        \item \textbf{on the relational interface}: the natural transformation is given by the composite obtained by pasting the corresponding natural transformations,
        \item \textbf{on scopes}: $(G \circ F)_{\mathcal C}:= G_{\mathcal C} \circ F_{\mathcal C}$ and  
        \item \textbf{on constraints}$: (G \circ F)^\sharp:=G^\sharp \circ F^\sharp.$
    \end{itemize}    
    
    \item Identities:
    for each architecture $A$, the identity morphism is given by the identity functors on $\mathcal G_A$ and $\mathsf{Know}_A$, the identity natural transformation on $\Phi_A$ and the identity functor on $\mathcal C_A$ and $\mathcal P_A$.
\end{itemize}

These operations satisfy the associativity and identity laws by
functoriality of composition and associativity of natural
transformation pasting.
\end{definition}

Thus $\mathbf{ArchAgents}$ forms a category whose objects represent abstract agent architectures and whose morphisms capture structure-preserving
translations between them.

This categorical setting enables the study of general results about agent architectures, including:

\begin{itemize}
    \item \textbf{Architectural equivalence theorems}: characterizing when two
    architectures are equivalent up to symmetric monoidal and hypergraph
    structure, despite differing in their concrete decomposition into modules.

    \item \textbf{Architectural reduction theorems}: showing that specialized
    architectures (e.g., Reinforcement Learning) arise as subarchitectures,
    forgetful images, or functorial reductions of more expressive ones (e.g.,
    Causal or Structural Learning architectures).

    \item \textbf{Structural knowledge transport}: analyzing how morphisms
    between architectures induce translations between their internal knowledge
    structures, preserving or collapsing classes of knowledge transformations.

    \item \textbf{Expressive capacity and irreversibility}: relating the
    existence of non-invertible architecture morphisms to losses of internal
    representational or transformational capacity, providing a formal notion
    of architectural expressiveness. (AGI and not AGI)

    \item \textbf{Architectures of maximal expressiveness and universal families}:
    characterizing architectures that are maximal with respect to structural and
    knowledge expressiveness within a given class, and identifying families of
    architectures from which broad classes of agent designs can be obtained via
    systematic reduction or specialization. (AGI as a property of belonging to a universal family with enough expressivity/properties)
    
    \item \textbf{Generative architectural templates}: identifying minimal or
    canonical architectural patterns from which broad families of agent
    architectures can be constructed via systematic enrichment.
\end{itemize}

%% file: Parts/4_Agents_Category.tex
\section{The \textbf{Agents} Category}
\label{sect:AgentCat}
Architectures specify the abstract compositional structure of agents. Concrete agents arise as semantic realizations of these structures. In this section, we briefly formalize the notion of an agent implementing an architecture.

\subsection{Semantic Interpretations of Architectures}

We assume a symmetric monoidal category $\mathcal{E}$ representing concrete implementation systems. This category provides the semantic setting in which the architectural design is realized.
Typical examples include categories such as $\mathsf{Stoch}$,
$\mathsf{FinStoch}$, $\mathsf{Set}$, Kleisli categories, or
categories induced by programming languages.

\begin{definition}[Knowledge-Compatible Generators]
Let $A = (\mathcal{G}_A,\mathsf{Know}_A,\Phi_A,\mathcal{T}_A)$
be an architecture.
A generator $g : X \to Y$ in $\mathcal{G}_A$ is \emph{knowledge-compatible} if there exist $K_X,K_Y \in \mathrm{Ob}(\mathsf{Know}_A)$ such that
\[ \Phi_A(X,K_X) \neq \varnothing,
\qquad \Phi_A(Y,K_Y) \neq \varnothing.\]

We denote by $Gen_{\Phi}$ the class of such generators.
\end{definition}

Generators in $Gen_{\Phi}$ are syntactic components whose interfaces are related, via $\Phi_A$, to types in the
knowledge category $Know$. They belong to the syntactic category $\mathcal{G}_A$ and should not be confused with morphisms of $Know_A$. Intuitively, they represent syntactic operations whose interfaces interact with knowledge structures. Since these operations manage knowledge, they must correspond to the knowledge transformations defined in $Know_A$. We introduce this notion because an agent consists of a concrete implementation of the syntactic layer together with a realization of the knowledge layer, compatible on knowledge-relevant generators.

\begin{definition}[Agent / Admissible Realization]
\label{def:agent_final}
Let $A$ be an architecture and $\mathcal{E}$ a symmetric monoidal category.
An \emph{agent} (or admissible realization) of $A$ in $\mathcal{E}$ consists of the strong symmetric monoidal functors
\[
I:\mathcal{G}_A \to \mathcal{E},
\qquad
J:\mathsf{Know}_A \to \mathcal{E},
\]
such that:

\begin{enumerate}
    \item \textbf{(Interface compatibility)}  
    for every $g : X \to Y$ in $Gen_{\Phi}$, there exists
    $k_g : K_X \to K_Y$ in $\mathsf{Know}_A$ such that
    $I(g)$ is induced by $J(k_g)$ in $\mathcal{E}$, in a way
    compatible with the interpretation of $X$ and $Y$. Thus, the implementation of syntactic components interacting with knowledge must arise from the corresponding knowledge-level (possibly compound) transformations.

    \item \textbf{(Constraint satisfaction)}  
    for every $\rho \in \mathcal R_A$, $(I,J)$ must satisfy the constraints $(I,J)\models \rho$. (see definition \ref{def:Constraint Satisfaction})
\end{enumerate}
\end{definition}

\begin{remark}
The previous condition expresses that architectural components whose interfaces interact with knowledge types must be implemented through the corresponding knowledge transformations.
Specifically, if a generator
\[ g : X \to Y \]
belongs to $Gen_{\mathrm{K}}$, then the architecture specifies that its input and output interfaces are connected to knowledge objects. The implementation $I(g)$ is therefore required to arise from a knowledge-level transformation
\[ k_g : K_X \to K_Y \]
via the realization functor
\[ J : Know \to \mathcal{E}. \]

In this way, the behaviour of knowledge-relevant architectural
components is constrained by the structure of the knowledge category.
\end{remark}

\begin{definition}[Constraint Satisfaction]
\label{def:Constraint Satisfaction}
Let $(I,J)$ be a realization in $\mathcal{E}$. For each scope $X \in \mathcal C_A$, $(I,J)$ induces an interpretation $(I,J)_X \in \mathrm{Mod}_{\mathcal E}(X).$ We say that $(I,J)$ \emph{satisfies} a constraint $\rho = (X,C)$, written as $(I,J)\models \rho$, if $(I,J)_X$ satisfies $C$.
\end{definition}

We denote by $\mathrm{Agents}(A,\mathcal{E})$ the class of all agents of $A$ in $\mathcal{E}$.

Therefore, objects in $Agents(A,\mathcal{E})$ represent different concrete implementations of the same architectural specification and semantic universe of implementation, while morphisms correspond to translations and transformations between such implementations.

\begin{definition}[Category of Agents]
Let $A$ be an architecture and let $\mathcal{E}$ be the implementation category.
We define $Agents(A,\mathcal{E})$ as the category whose objects are admissible agents implementing $A$ in $\mathcal{E}$:

\begin{itemize}
    \item objects: agents $(I,J)$;
    \item morphisms $(\eta,\theta):(I,J)\to(I',J')$ given by
    monoidal natural transformations
    \[
    \eta:I\Rightarrow I',
    \qquad
    \theta:J\Rightarrow J'.
    \]
\end{itemize}
\end{definition}

Objects represent different realizations of the same architecture in $\mathcal{E}$, while morphisms correspond to transformations between such implementations.

\subsection{Reindexing and admissibility}
An architecture does not determine a unique agent, but rather a space of \emph{admissible realizations}, that is, implementations of the syntactic and knowledge layer satisfying both the structural interface and the constraint layer. This opens up the possibility for formalizing how agents with the same, or different architectures, are related. In this regard, architecture morphisms naturally induce translations between agents implemented on different architectures. Intuitively, if an architecture $A$ can be mapped into another architecture $B$, then, an agent implementing $B$ can be interpreted as an agent of $A$ by transporting its implementation along this architectural map, but this does not mean that this reinterpreted $B$-agent is admissible within architecture $A$. This appendix refines the categorical organization of such agents, extending the previous formulation to incorporate constraint satisfaction.

Let  $F : A \to B$ be a morphism in $\mathbf{ArchAgents}$.
At the syntactic and knowledge level, $F$ induces a reindexing operation by precomposition:
\[
F^\ast(I_B,J_B) := (I_B \circ F_G,\; J_B \circ F_{\mathsf{Know}}).
\]

Thus, an agent implementing the architecture $B$ can be reinterpreted as an agent of $A$ by translating both the architectural structure and the knowledge structure along the morphism $F$.
Intuitively, if we have implemented an agent in architecture $B$ and implementation $\mathcal{E}$, we can generate agents in architecture $A$ with the same implementation if we know the morphism that translate $A$ into $B$ ($f:A \to B$). For instance, building the syntactic $I_A$ will be done by first translating it into the corresponding syntactic part of $B$, and then applies the implementation provided by the
$B$-agent. Thus,
\[
I_A(X)=I_B(f_{\mathcal{G}}(X)).
\]
The same its done with $J$. However, in the presence of constraints, reindexing is no longer automatically admissibility-preserving.

\begin{remark}[Constraint transport along architecture morphisms]
Let $\rho = (X,P)$ be a constraint in $A$. The morphism $F$ induces a constraint
\[
F(\rho) := (F_C(X), F^\sharp(P))
\]
in $B$.
\end{remark}

\begin{definition}[Admissibility-preserving morphism]
An architecture morphism $F : A \to B$ is said to be \emph{admissibility-preserving} if for every admissible agent $(I_B,J_B)$ of $B$, the reindexed agent $F^\ast(I_B,J_B)$ is admissible in $A$.
\end{definition}

This holds whenever constraint satisfaction is stable under pullback along $F$, i.e.
\[
(I_B,J_B) \models F(\rho)
\;\;\Rightarrow\;\;
F^\ast(I_B,J_B) \models \rho.
\]

\paragraph{Interpretation.}
Not every architectural translation preserves admissibility: some morphisms forget constraints, weaken them, or change their scope. This reflects the fact that architectural abstraction and refinement interact non-trivially with semantic requirements.

\subsection{The indexed structure of admissible agents}

Fix an implementation category $\mathcal{E}$.

The assignment
\[
A \mapsto Agents(A,\mathcal{E})
\]
together with admissibility-preserving reindexing functors defines a contravariant pseudofunctor
\[
Agents(-,\mathcal{E}) : \mathbf{ArchAgents}^{op} \to \mathbf{Cat}.
\]

\begin{definition}[Total category of admissible agents]
The total category of admissible agents (for the implementation category
$\mathcal{E}$) is given by the Grothendieck construction
\[
Agents(\mathcal{E}) := \int_{A \in \mathbf{ArchAgents}} Agents(A,\mathcal{E}).
\]
\end{definition}

An object is a pair $(A,(I,J))$ where $(I,J)$ is an admissible agent of $A$.

A morphism
\[
(A,(I,J)) \to (B,(I',J'))
\]
consists of:
\begin{itemize}
    \item an architecture morphism $F : A \to B$;
    \item a monoidal natural transformation
    \[
    \eta : (I,J) \Rightarrow F^\ast(I',J'),
    \]
\end{itemize}
such that both agents involved are admissible.

\subsection{Fibred structure and constraints}

\begin{thm}
The projection
\[
p : Agents(\mathcal{E}) \to \mathbf{ArchAgents}
\]
defines a fibration over the subcategory of admissibility-preserving morphisms.
\end{thm}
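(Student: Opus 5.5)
The plan is to reduce the statement to the standard fact that the Grothendieck construction of a (pseudo)functor $\mathbf{B}^{op}\to\mathbf{Cat}$ yields a cloven fibration $\int \mathbf{B}\to\mathbf{B}$. Write $\mathbf{ArchAgents}_{adm}$ for the wide subcategory of $\mathbf{ArchAgents}$ on admissibility-preserving morphisms. First I will check that this really is a subcategory. Identities are admissibility-preserving because $\mathrm{id}^\ast(I,J)=(I,J)$. For composites, $(G\circ F)^\ast = F^\ast\circ G^\ast$ holds strictly: by the componentwise definition of composition, $(I\circ G_{\mathcal G})\circ F_{\mathcal G} = I\circ (G_{\mathcal G}\circ F_{\mathcal G})$, and likewise on knowledge. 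So if $G^\ast$ sends admissible agents to admissible agents and $F^\ast$ does too, then so does $(G\circ F)^\ast$.

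Next I will show that reindexing $F^\ast$ is a functor $Agents(B,\mathcal E)\to Agents(A,\mathcal E)$. On objects it is well defined by admissibility preservation; the interface-compatibility clause is also needed, and I discuss it below. On morphisms, a pair $(\eta,\theta)$ is sent to its whiskering $(\eta F_{\mathcal G},\theta F_{\mathsf{Know}})$. A whiskering of a monoidal natural transformation by a strong symmetric monoidal functor is again monoidal. Functoriality of $F^\ast$ and strict equalities $(G\circ F)^\ast=F^\ast G^\ast$ and $\mathrm{id}^\ast=\mathrm{id}$ follow from associativity of whiskering. Hence $Agents(-,\mathcal E)$ is in fact a strict functor $\mathbf{ArchAgents}_{adm}^{op}\to\mathbf{Cat}$, which a fortiori gives the pseudofunctor asserted in the paper. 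I then identify $Agents(\mathcal E)$, restricted over $\mathbf{ArchAgents}_{adm}$, with its Grothendieck construction. The morphisms described in the paper are pairs $(F,\eta:(I,J)\Rightarrow F^\ast(I',J'))$, which are exactly the morphisms of $\int$, and $p$ is the first projection.

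Cartesian lifts are then explicit. Given $F:A\to B$ admissibility-preserving and an object $(B,(I',J'))$, the lift is $(F,\mathrm{id}):(A,F^\ast(I',J'))\to(B,(I',J'))$. For cartesianness, take any $(H,\zeta):(C,(I'',J''))\to(B,(I',J'))$ together with a factorization $H=F\circ K$ in the base. The unique mediating arrow is $(K,\zeta)$, using $\zeta:(I'',J'')\Rightarrow H^\ast(I',J') = K^\ast F^\ast(I',J')$. Uniqueness holds because the second component is forced by the equation $(F,\mathrm{id})\circ(K,\xi)=(H,K^\ast(\mathrm{id})\circ\xi)=(H,\xi)$. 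This step is routine once the strict equalities above are in place.

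The main obstacle is the first clause of Definition~\ref{def:agent_final}, interface compatibility, which is part of admissibility beyond constraint satisfaction. For $F^\ast(I',J')$ to satisfy it, I need that whenever $g\in Gen_\Phi$ for $A$, the image $F_{\mathcal G}(g)$ is knowledge-compatible in $B$ and its witness $k$ pulls back along $F_{\mathsf{Know}}$. I will try two routes. The first is to read ``admissible'' in the definition of admissibility-preserving as covering both clauses of Definition~\ref{def:agent_final}; this makes the issue disappear by hypothesis, and I expect this is the intended reading. The second is to use the interface natural transformation $\Phi_A\Rightarrow\Phi_B\circ(F_{\mathcal G}^{op}\times F_{\mathsf{Know}})$, which shows that nonempty $\Phi_A(X,K)$ forces nonempty $\Phi_B(F X,F K)$. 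That yields knowledge-compatibility of $F(g)$, but producing a witness of the form $F_{\mathsf{Know}}(k)$ requires an extra fullness-type assumption. I will state that assumption explicitly if the first reading is rejected.
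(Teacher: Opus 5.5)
Your proposal is correct and takes the same route as the paper. The paper's sketch simply invokes the Grothendieck construction for the reindexing pseudofunctor and restricts to admissibility-preserving morphisms; you additionally supply the details the sketch leaves implicit: closure of $\mathbf{ArchAgents}_{adm}$ under identities and composites, strict functoriality of $F^\ast$ via whiskering, and the explicit cartesian lifts $(F,\mathrm{id})$. Your first resolution of the interface-compatibility point is the intended one, since the paper defines admissibility-preserving as sending admissible agents, in the full sense of Definition~\ref{def:agent_final}, to admissible agents, so no extra fullness assumption is needed.
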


\begin{proof}[Sketch]
The classical Grothendieck construction yields a fibration for the underlying pseudofunctor. The restriction to admissible agents requires stability of constraint satisfaction under reindexing, which holds precisely along admissibility-preserving morphisms.
\end{proof}

\paragraph{Conceptual consequence.}
The fibrational structure separates three levels:
\begin{itemize}
    \item architectural structure (base category),
    \item admissible semantic realizations (fibres),
    \item transport of implementations under architectural change (cartesian liftings).
\end{itemize}

Crucially, the constraint layer acts as a \emph{selection principle} on fibres, carving out admissible subcategories that are not invariant under arbitrary architectural morphisms.

\begin{remark}[Non-fibrational behaviour in general]
If arbitrary morphisms of $\mathbf{ArchAgents}$ are allowed, the projection need not define a fibration, reflecting that constraint satisfaction is not functorial in general. This is a feature rather than a limitation: it encodes the fact that not all architectural transformations preserve semantic validity.
\end{remark}

%% file: Parts/5_Properties.tex
\section{Properties of Architectures and Agents}
\label{sect:properties}
\paragraph{\Large Disclaimer.}
\textbf{\large The formalization presented in this section is deprecated and will be revised in future versions of this framework. 
Nevertheless, the underlying conceptual ideas and structural intuitions remain valid and will be preserved, albeit 
reformulated within a more robust and coherent mathematical setting.
}\\

Architectures specify the structural and informational laws of an agent
type, while agents instantiate these laws through concrete semantic
models and algorithms.
Accordingly, architectural properties concern invariants of the
architectural presentation itself, whereas semantic properties of agents
concern the behaviour of particular implementations and must be
established by mathematical or empirical certification.
This section formalizes both notions.

\subsection{Structural Properties of Architectures}

Let $A = (H_A,Info_A,\Omega_A)$ be an architecture.  
Recall that $\mathcal H_A$ is a hypergraph category freely generated by a
set of types and generators, modulo purely structural equations.
Every architectural diagram is therefore a morphism of $\mathcal H_A$,
constructed compositionally from these generators.

\begin{definition}[Structural property]
A structural property of an architecture $A$ is a judgment
\[ H_A \vdash \varphi, \]
where $\varphi$ is an equation, commutation law, or structural predicate
between string diagrams of $H_A$, derivable using only:
\begin{itemize}
  \item the equations defining the presentation of $\mathcal H_A$, and
  \item the axioms of hypergraph categories (monoidality, symmetry, units,
  Frobenius laws, etc.).
\end{itemize}

\end{definition}

Structural properties depend exclusively on:
\begin{itemize}
    \item the admissible generators and types of the architecture,
    \item the algebraic relations imposed between them, and
    \item the universal equational theory of hypergraph categories.
\end{itemize}

Typical examples of structural properties include:
\begin{itemize}
    \item existence or absence of feedback loops;
    \item reachability or accessibility relations between types;
    \item factorability or decomposability of diagrams;
    \item invariance of wiring patterns under Frobenius operations;
    \item equivalence of alternative architectural decompositions.
\end{itemize}

Structural properties are established by diagrammatic reasoning, that is,
by transforming string diagrams using the allowed equational rules or by
combinatorial inspection of their wiring structure.

Hypergraph categories provide a fully compositional equational calculus. 
That is,  if each component satisfies a structural property, and if the structural equations of the architecture are stable under composition and tensoring, then the property automatically lifts to any composite diagram.
Formally, given generators $g_1,\dots, g_n$, if
\[H_A \vdash \varphi(g_1), \dots, H_A \vdash \varphi(g_n),\]
and if $\varphi$ is preserved by monoidal composition and categorical
composition, then, for any diagram $d$ constructed from the generators
$g_i$, we have
\[ H_A \vdash \varphi(d). \]

Structural properties are preserved by morphisms of architectures:
if $F : A \to B$ is an architecture morphism and $H_A \vdash \varphi$,
then
\[ H_B \vdash F(\varphi).\]
Thus, structural reasoning is stable under refinement, embedding, or
translation of architectures.

\begin{proposition}
If $F : A \to B$ is a morphism in $\mathbf{ArchAgents}$ and
$H_A \vdash \varphi$, then $H_B \vdash F(\varphi)$.
\end{proposition}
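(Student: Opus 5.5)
The proof is an induction on derivations, after fixing what $F$ acts on. Here $H_A$ is the syntactic hypergraph category of $A$ (the $\mathcal G_A$, or its ambient $\mathsf{Syn}_A$, of Section~\ref{sect:ArchAgentCat}). The relevant component of an architecture morphism is the symmetric monoidal functor $F_{\mathcal G}:\mathcal G_A\to\mathcal G_B$, or $F_{\mathsf{Know}}$ if $\varphi$ concerns the knowledge layer.

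For an equation $\varphi \equiv (d_1 = d_2)$ between string diagrams, set $F(\varphi) :\equiv (F_{\mathcal G}(d_1)=F_{\mathcal G}(d_2))$. For a structural predicate built from such equations, $F(\varphi)$ is defined by translating each atomic equation and keeping the logical connectives.

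\textbf{Step 1 (soundness reading).} I will read $H_A\vdash\varphi$ as: $\varphi$ holds in $H_A$, i.e.\ the two diagrams are equal morphisms of $H_A$. This is sound because $H_A=\mathsf{Hyp}\langle Types,Gen\mid Eq\rangle$ is the quotient of the free term category by the congruence generated by $Eq$ together with the hypergraph axioms. So any derivation using only these rules yields a genuine equality in $H_A$.

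\textbf{Step 2 (transport).} Since $F_{\mathcal G}$ is a functor, $d_1=d_2$ in $H_A$ gives $F_{\mathcal G}(d_1)=F_{\mathcal G}(d_2)$ in $H_B$. Compatibility with composition, and with $\otimes$ up to the coherence isomorphisms of a (strong) symmetric monoidal functor, lets this be pushed through the inductive structure of derivations. The cases are reflexivity, symmetry, transitivity, congruence under $\circ$ and $\otimes$, instances of axioms, and instances of $Eq_A$. An axiom instance is sent either to an equation that already holds in $H_B$, or to one that holds after conjugating by the monoidal structure isomorphisms.

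\textbf{Step 3 (completeness back to derivability).} To conclude $H_B\vdash F(\varphi)$ rather than merely that $F(\varphi)$ holds in $H_B$, I invoke completeness of the presentation of $H_B$. Equality of morphisms in $\mathsf{Hyp}\langle\cdots\rangle$ is exactly derivability from its equations and the hypergraph axioms, since the congruence is by definition generated by them. Hence truth in $H_B$ coincides with derivability.

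\textbf{Main obstacle.} The Frobenius and SMC axioms are the hardest step. An architecture morphism is only required to be \emph{symmetric monoidal}, not a hypergraph functor, so $F_{\mathcal G}$ need not send the chosen Frobenius structure on $X$ to the Frobenius structure on $F_{\mathcal G}(X)$. An equation in $\varphi$ involving copy/merge nodes could then fail to translate. I would first try to repair this by noting that $\mathcal G_A$ is freely generated, so one may assume (or must add as a hypothesis) that $F_{\mathcal G}$ is a hypergraph functor, preserving the special commutative Frobenius maps. Under that assumption every axiom instance maps to an axiom instance and the induction closes. Failing that, the statement must be restricted to predicates $\varphi$ not mentioning Frobenius structure, where plain functoriality suffices.

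Predicates that are not equational, such as ``absence of feedback loops'', are not preserved by arbitrary functors; for instance a non-faithful $F$ may identify diagrams. So $F(\varphi)$ must be read as the image statement, and the proposition holds only for the positive, equationally generated fragment. I would state this scope restriction explicitly.
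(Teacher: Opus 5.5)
The paper gives no proof of this proposition. It is asserted, in a section the authors flag as deprecated, on the strength of the remark that structural reasoning is ``stable under refinement''; that is, it relies implicitly on functoriality of $F$, and $F(\varphi)$ is never defined. Your proposal is therefore a genuinely different and more careful route. You fix $F(\varphi)$ as the image of an equation under $F_{\mathcal G}$, and you use that equality in a presented hypergraph category coincides with derivability (your Steps 1 and 3). This gives a correct proof for equational $\varphi$. Your scope restriction is also a real correction to the paper, which lists non-equational items such as absence of feedback loops among its structural properties. The precise reason that example fails is that $F_{\mathcal G}$ may send a generator to a diagram containing a trace built from cup and cap. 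Non-faithfulness is what breaks inequational claims such as ``non-collapsible'' loops. What survives is the positive fragment: equations, conjunctions, disjunctions and existentials such as factorability or reachability, since images of witnesses are witnesses.

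Two parts of your argument should change. First, your ``main obstacle'' is misdiagnosed. With $F(\varphi):\equiv (F_{\mathcal G}(d_1)=F_{\mathcal G}(d_2))$, no preservation of Frobenius structure is needed. A functor is a function on hom-sets, so it sends equal morphisms to equal morphisms, whether or not $d_1,d_2$ contain copy/merge nodes. Hence neither the hypergraph-functor hypothesis nor the restriction to Frobenius-free $\varphi$ is required. That hypothesis matters only for a different reading, in which $F(\varphi)$ redraws the same diagram using $B$'s own Frobenius maps.

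Second, Step 2's induction on derivations is not only superfluous but unavailable. An architecture morphism supplies $F_{\mathcal G}$ only on $\mathcal G_A$, while a derivation in $\mathsf{Syn}_A$ may pass through terms outside $\mathcal G_A$, where $F$ is undefined. The right argument is Step 1, then functoriality, then Step 3, with $\varphi$ restricted to diagrams lying in $\mathcal G_A$. You need that restriction anyway, since that is where $F$ acts. It costs nothing: $\mathcal G_A$ is a subcategory of $\mathsf{Syn}_A$, so an equation between its morphisms holds in one iff it holds in the other.
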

Thus architectural properties are functorially stable under refinement
or translation of architectures.

\subsection{Informational Properties of Architectures}

Structural properties characterize the purely syntactic organization of
an architecture.  However, many fundamental differences between agent
types arise not from wiring alone, but from the way information is
handled, encapsulated, and propagated across architectural components.
These aspects are captured by informational properties.

Let
\[ A = (H_A,Info_A,\Omega_A)\]
be an architecture. Informational properties are properties of the
information category $Info_A$ and of the architectural
interpretation functor $\Omega_A$, and therefore constrain
all agents instantiating the architecture.

\begin{definition}[Informational Property]
An informational property of an architecture $A$ is a predicate
\[ \Psi(Info_A,\Omega_A)\]
that depends only on:
\begin{itemize}
  \item the categorical structure of $Info_A$, and
  \item the way architectural generators and diagrams of $H_A$
  are interpreted as morphisms in $Info_A$.
\end{itemize}
Informational properties do not depend on any particular semantic model,
learning algorithm, or agent implementation.
\end{definition}

\subsection{Semantic Properties of Agents}

Structural reasoning does not capture the behavioral or algorithmic
properties of particular agents.  
Such properties depend on the semantic interpretation
$F : H_A \to \mathbf{Sys}$ and, therefore, require additional information.
We model this information by attaching proofs to agent's implementations.
Semantic properties never modify $H_A$ or its relations.  
They attach behavioural guarantees to agents, not to
architectures, enabling compatibility with classical proofs not expressed
in categorical terms.

We give a mathematical account of semantic properties and
their certificates. The presentation is intentionally modula. That is, the
logical language used to express semantic properties is abstracted via
the notion of an institution, whereas certificates follow a
proof-carrying style and are instantiated within the semantic interpretation
of an architecture.

\begin{definition}[Institution]
An institution $\mathcal{I}=(\mathbf{Sign},\mathbf{Sen},\mathbf{Mod},\models)$
consists of:
\begin{itemize}
  \item a category $\mathbf{Sign}$ of signatures;
  \item a functor $\mathbf{Sen}:\mathbf{Sign}\to\mathbf{Set}$ assigning
    to each signature $\Sigma$ a set of sentences $\mathbf{Sen}(\Sigma)$;
  \item a functor $\mathbf{Mod}:\mathbf{Sign}^{op}\to\mathbf{Cat}$
    assigning to $\Sigma$ a category $\mathbf{Mod}(\Sigma)$ of models;
  \item for every signature $\Sigma$ a satisfaction relation
    $\models_\Sigma\subseteq \mathbf{Mod}(\Sigma)\times\mathbf{Sen}(\Sigma)$
    such that satisfaction is invariant under signature morphisms
    (the usual institution satisfaction condition).
\end{itemize}
\end{definition}
\paragraph{Intuitive explanation of Institution}
An institution abstracts the notion of a “logical system’’ away from any particular syntax or semantics. It does this by separating four components:
\begin{itemize}
    \item Signatures describe the symbols available for forming expressions (types, operations, predicates, state spaces, transition operators, etc).
    \item Sentences are the well-formed statements that can be written using such symbols (equations, inequalities, temporal properties, convergence claims, etc.).
    \item Models provide concrete interpretations of the symbols in a signature (a specific MDP, a particular transition kernel, an operator implementing a learning update, etc.).
    \item Satisfaction is the relation specifying when a model makes a sentence true.

\end{itemize}
The key feature is that institutions are logic-independent, that is, the satisfaction relation is stable under change of signature, and no assumption is made about the concrete syntax or proof system.
This makes institutions perfect for interfacing external theorems proved in any mathematical framework, with our agent semantics. That is, a theorem is represented through its signature and statements, while its proof lives outside the institution and agents contribute only the models that instantiate the signature so that the theorem becomes applicable.

\begin{definition}[Theorem / Theorem-signature]
A theorem $T$ is a triple $(\Sigma_T,\; \Gamma \vdash \varphi)$
where $\Sigma_T\in\mathbf{Sign}$ is a signature, $\Gamma\subseteq\mathbf{Sen}(\Sigma_T)$
is a (finite) set of premises (hypotheses) and $\varphi\in\mathbf{Sen}(\Sigma_T)$ is the conclusion. We say $T$ has signature $\Sigma_T$.
\end{definition}

\begin{remark} 
Every theorem $T=(\Sigma_T,\Gamma\vdash\varphi)$ used in our framework  is assumed to be backed by an external mathematical proof establishing 
the semantic entailment $\Gamma \models_{\Sigma_T} \varphi$. 
This proof may exist in any logical or mathematical setting (analysis, 
probability, optimization, control theory, category theory, etc.) and is not represented 
inside the institution. 
The institution retains only the abstract logical shape of the theorem, 
while agents provide the semantic models that instantiate their signature. 
Whenever the semantic interpretation of an agent satisfies the hypotheses  $\Gamma$, the conclusion $\varphi$ is inherited automatically by soundness of the external proof.
\end{remark}

Let $A=(H_A,D_A)$ be an architecture and $F:H_A\to\mathbf{Sys}$ an
$A$-agent. Assume that $\mathbf{Sys}$ is equipped with an institution
$\mathcal{I}$, or that there is a faithful embedding of the semantic
universe of $\mathbf{Sys}$ into $\mathcal{I}$, so that agents admit
models in $\mathbf{Mod}(\Sigma)$ for appropriate $\Sigma$.\\

\begin{definition}[Instantiation / signature morphism]
A signature instantiation of a theorem-signature $\Sigma_T$
into the agent $F$ is a signature morphism
$\tau : \Sigma_T \longrightarrow \Sigma_A$ in $\mathbf{Sign}$,
where $\Sigma_A$ is a signature for which the agent $F$ provides a
concrete model $M_F \in \mathbf{Mod}(\Sigma_A)$.
The instantiation $\tau$ maps abstract symbols of $T$ to concrete
objects in the semantic vocabulary of $F$.
\end{definition}

\begin{definition}[Semantic certificate]
Let $T=(\Sigma_T,\Gamma\vdash\varphi)$ be a theorem in an institution
$\mathcal{I}$. Let $F:H_A\to\mathbf{Sys}$ be an $A$-agent and let
$\tau:\Sigma_T\to\Sigma_A$ be a signature instantiation with associated
agent-model $M_F\in\mathbf{Mod}(\Sigma_A)$. A semantic certificate
for the claim ``$F$ satisfies the conclusion of $T$ under $\tau$'' 
is a triple
\[cert = \big(T, \tau, evds\big)\]
where:
\begin{enumerate}
  \item $\tau:\Sigma_T\to\Sigma_A$ is the signature instantiation
          mapping the abstract symbols of $T$ to the concrete semantic 
          components of the agent.
  \item $evds$ is a verification artifact that establishes the semantic 
          validity of the instantiated hypotheses: 
          \[M_F\models_{\Sigma_A}\tau(\Gamma)\]
    The artifact may take different forms, such as:
    \begin{itemize}
        \item a machine-checkable proof term (e.g.\ Coq/Lean proof),
        \item explicit witness objects and decidable checks  (e.g.\ verifying stochasticity, contraction constants, Lipschitz bounds, structural constraints),
        \item a human-readable mathematical argument that clearly identifies why each assumption in $\Gamma$ holds for $M_F$.
    \end{itemize}
\end{enumerate}
We write $\mathsf{check}(\mathsf{cert},M_F)$ for the (computable)
verifier which validates $\mathsf{evidence}$ against $M_F$.
\end{definition}

\begin{remark}
The certificate does not contain a proof of the theorem $T$ itself. The 
theorem is assumed to be justified by an external mathematical proof.  
The evidence $evds$ only establishes that the semantic interpretation 
$M_F$ of the agent satisfies the hypotheses $\Gamma$ under the chosen 
instantiation $\tau$. Once this is verified, the soundness of the external proof ensures that $M_F$ also satisfies the conclusion $\varphi$.
\end{remark}

\begin{proposition}[Soundness of certificate transfer / theorem instantiation]
Let $T=(\Sigma_T,\Gamma\vdash\varphi)$ be a theorem of the institution
$\mathcal{I}$ and let $\tau:\Sigma_T\to\Sigma_A$ be a signature
morphism. Assume:
\begin{enumerate}
  \item There exists a (trusted) external proof object $\pi$ certifying
        the validity of $T$, i.e.\ $\pi$ witnesses that for every
        $\Sigma_T$-model $M$, if $M\models\Gamma$ then $M\models\varphi$.
        (The proof does \emph{not} involve the agent.)
  \item The evidence $evds$ verifies that the agent-model
        $M_F \in \mathbf{Mod}(\Sigma_A)$ satisfies the instantiated
        premises, that is that $\mathsf{check}((T,\tau,evds),M_F)=\mathrm{true}$, and therefore, the evidence establishes $M_F\models_{\Sigma_A}\tau(\Gamma)$.
\end{enumerate}
Then, by the soundness of the logic of $\mathcal{I}$ and the invariance of
satisfaction under signature morphisms,
\[M_F \models_{\Sigma_A} \tau(\varphi).\]

In plain words: if the theorem is valid in general and the agent satisfies the
instantiated hypotheses, then the agent also satisfies the instantiated
conclusion.
\end{proposition}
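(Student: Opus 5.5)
The plan is to argue in three short steps, each using only the definitions of institution, theorem, and semantic certificate given above. The key observation is that a signature morphism $\tau:\Sigma_T\to\Sigma_A$ induces, via the contravariant functor $\mathbf{Mod}$, a reduct functor $\mathbf{Mod}(\tau):\mathbf{Mod}(\Sigma_A)\to\mathbf{Mod}(\Sigma_T)$. We write $M_F|_\tau := \mathbf{Mod}(\tau)(M_F)$ for the reduct of the agent-model. This is the $\Sigma_T$-model obtained by reading the abstract symbols of $T$ through their concrete instantiations in the agent. The proof moves the question from $\Sigma_A$ down to $\Sigma_T$, applies the external proof $\pi$ there, and moves back up.

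\textbf{Step 1 (transport the premises down).} By hypothesis 2, $\mathsf{check}$ returns true, so $M_F\models_{\Sigma_A}\tau(\gamma)$ for every $\gamma\in\Gamma$. Here $\tau(\gamma)$ abbreviates $\mathbf{Sen}(\tau)(\gamma)$. The institution satisfaction condition says
\[
M'\models_{\Sigma_A}\mathbf{Sen}(\tau)(\psi)\iff \mathbf{Mod}(\tau)(M')\models_{\Sigma_T}\psi
\]
for every $M'\in\mathbf{Mod}(\Sigma_A)$ and $\psi\in\mathbf{Sen}(\Sigma_T)$. Applying the right-to-left reading of this equivalence gives $M_F|_\tau\models_{\Sigma_T}\Gamma$.

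\textbf{Step 2 (apply the theorem).} By hypothesis 1, $\pi$ witnesses that every $\Sigma_T$-model satisfying $\Gamma$ satisfies $\varphi$. In particular this holds for the model $M_F|_\tau$, so $M_F|_\tau\models_{\Sigma_T}\varphi$.

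\textbf{Step 3 (transport the conclusion up).} Applying the satisfaction condition again, now in the left-to-right direction with $\psi=\varphi$, yields $M_F\models_{\Sigma_A}\tau(\varphi)$, which is the claim.

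I expect the main obstacle to be making the statement precise rather than the argument itself. The definition of institution in the paper states the satisfaction condition only informally ("invariant under signature morphisms"). It also never fixes that the model functor acts by reduct along $\tau$, nor that $\tau(\Gamma)$ means $\mathbf{Sen}(\tau)(\Gamma)$. So the first thing I would do is spell out the standard biconditional displayed above as the intended reading. I would also note that "soundness of the logic" in the statement is used only through $\pi$ as a semantic entailment $\Gamma\models_{\Sigma_T}\varphi$, not through any proof calculus.

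A second subtlety is that hypothesis 2 bundles a computational check with a semantic fact. I would take the phrase "therefore, the evidence establishes $M_F\models\tau(\Gamma)$" as the operative assumption, that is, I would assume $\mathsf{check}$ is sound. Without that assumption no conclusion about the agent would follow.
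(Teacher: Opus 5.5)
Your argument is correct and is essentially the paper's own proof sketch. The paper compresses your Steps 1--3 into a single appeal to the satisfaction condition (``if $M_F\models_{\Sigma_A}\tau(\Gamma)$ then $M_F\models_{\Sigma_A}\tau(\varphi)$''), and you make the reduct $\mathbf{Mod}(\tau)(M_F)$ explicit. One harmless slip: your direction labels are swapped. Step 1 uses the left-to-right direction of your displayed biconditional and Step 3 the right-to-left one, which does not matter since it is an equivalence.
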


\begin{proof}[Proof Sketch]
The hypothesis (1) guarantees that $\Gamma\models_{\Sigma_T}\varphi$ in
the logical system (soundness). By signature morphism $\tau$ and the
institution satisfaction condition, if $M_F\models_{\Sigma_A}\tau(\Gamma)$
then $M_F\models_{\Sigma_A}\tau(\varphi)$. The verifier in (2) ensures
that the premises hold in the concrete model, so the conclusion follows.
\end{proof}

\begin{definition}[Semantic property of an agent]
Let $T=(\Sigma_T,\Gamma\vdash\varphi)$ be a theorem in the ambient
institution, let $\tau:\Sigma_T\to\Sigma_A$ be an instantiation, and
let $F:H_A\to\mathbf{Sys}$ be an $A$-agent with semantic model
$M_F\in\mathbf{Mod}(\Sigma_A)$.
A semantic property of $F$ is a certified theorem instance
\[P = (T,\tau,cert) \]
where $cert=(T,\tau,evds)$ is a semantic certificate showing that
the instantiated hypotheses hold for $M_F$:
\[ \mathsf{check}(cert, M_F)=\text{true}.\]

In this case we say that ``The agent $F$ satisfies the property $P$'', and the satisfaction condition guarantees
\[ M_F \models_{\Sigma_A} \tau(\varphi).\]
\end{definition}

\subsubsection{Composition of certificates (modularity)}
The monoidal structure on $\mathbf{Sys}$ and the functorial nature of
agents allow composition of certificates. We state a sufficient condition.

\begin{definition}[Compositional certificate operator]
Let $cert_1=(T_1,\tau_1,\mathsf{e}_1)$ and
$cert_2=(T_2,\tau_2,\mathsf{e}_2)$ be certificates for two
subagents \(F_1\) and \(F_2\) whose monoidal composition yields the
composite agent \(F=F_1\otimes F_2\). A compositional operator
\[ \odot : \mathsf{Cert}\times\mathsf{Cert} \longrightarrow \mathsf{Cert} \]
produces \(cert=cert_1\odot cert_2\) whenever
the following hold:
\begin{enumerate}
  \item The signatures $\Sigma_{T_1},\Sigma_{T_2}$ and their instantiations $\tau_1,\tau_2$ can be coherently combined into a signature $\Sigma_T$ and an instantiation $\tau:\Sigma_T\to\Sigma_{F}$ for the composite agent (this is typically given by the coproduct or tensoring of signatures and the obvious induced signature morphism).
  \item The premises required by the composed theorem $T$ are covered
    by the union of premises validated by $\mathsf{e}_1$ and $\mathsf{e}_2$, possibly together with additional (verifiable) interface assumptions.
  \item There is a mechanizable construction producing evidence
    $\mathsf{e}$ for the composite that the verifier accepts:
    $\mathsf{check}((T,\tau,\mathsf{e}),M_F)=\mathrm{true}$.
\end{enumerate}
\end{definition}

\begin{lemma}[Preservation under reindexing]
Let $f:A\to B$ be an architecture morphism with induced hypergraph
functor $H(f):H_A\to H_B$. Let $G\in Agents(B)$ be a $B$-agent
with certificate $cert_G=(T,\tau_G,\mathsf{e}_G)$. If the
certificate is stable under the signature translation induced by $f$
and $\mathsf{check}(\mathsf{cert}_G,M_G)=\mathrm{true}$, then the
reindexed agent $f^*(G)=G\circ H(f)$ inherits a transported certificate
$\mathsf{cert}_{f^*(G)}$, obtained by precomposing $\tau_G$ with the
induced signature morphism, that is valid for $f^*(G)$.
\end{lemma}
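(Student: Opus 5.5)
The plan is to reduce the lemma to the Proposition on soundness of certificate transfer, applied to the reindexed agent $f^*(G)=G\circ H(f)$. The central point is that reindexing along $f$ induces a signature morphism between the agent signatures. The semantic model of $f^*(G)$ should then be the reduct of $M_G$ along that morphism.

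\textbf{Step 1.} From the hypergraph functor $H(f):H_A\to H_B$ I extract a signature morphism
\[\sigma_f:\Sigma_{f^*(G)}\to\Sigma_G\]
in $\mathbf{Sign}$. Each symbol interpreting a type or generator of $H_A$ is sent to the symbol interpreting its image under $H(f)$. This is exactly what the hypothesis that the certificate is ``stable under the signature translation induced by $f$'' provides. I will read that hypothesis as two conditions: $\sigma_f$ exists, and the image of $\tau_G$ lies in the image of $\sigma_f$. The second condition means $\tau_G$ factors as
\[\tau_G=\sigma_f\circ\tau'\quad\text{for some } \tau':\Sigma_T\to\Sigma_{f^*(G)}.\]
Note that the statement's phrase ``precomposing $\tau_G$ with the induced morphism'' only typechecks in this factored form, or with a morphism going from $\Sigma_G$ back to $\Sigma_{f^*(G)}$. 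I take the transported instantiation to be $\tau_{f^*(G)}:=\tau'$.

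\textbf{Step 2.} Since $f^*(G)$ interprets each $A$-symbol exactly as $G$ interprets its $f$-image, the model of the reindexed agent is the reduct
\[M_{f^*(G)}=\mathbf{Mod}(\sigma_f)(M_G).\]
This holds because the model functor is contravariant and reindexing is precomposition.

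\textbf{Step 3.} Apply the institution satisfaction condition to $\sigma_f$. For any sentence $\psi\in\mathbf{Sen}(\Sigma_{f^*(G)})$ it gives
\[M_G\models\sigma_f(\psi)\iff \mathbf{Mod}(\sigma_f)(M_G)\models\psi.\]
Take $\psi\in\tau'(\Gamma)$. Then $\sigma_f(\tau'(\Gamma))=\tau_G(\Gamma)$, and $\mathsf{check}(cert_G,M_G)=\mathrm{true}$ gives $M_G\models\tau_G(\Gamma)$. Hence
\[M_{f^*(G)}\models\tau'(\Gamma).\]
The evidence $\mathsf{e}_{f^*(G)}$ is $\mathsf{e}_G$ together with this translation step. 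We then set
\[cert_{f^*(G)}=(T,\tau',\mathsf{e}_{f^*(G)}).\]
The Proposition on soundness of certificate transfer then yields $M_{f^*(G)}\models\tau'(\varphi)$, so the transported certificate is valid.

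\textbf{Main obstacle.} I expect Step 1 to be the hardest: making precise the induced signature morphism and the factorization of $\tau_G$. The paper leaves the passage from agents to institution models informal, so the ``stability'' hypothesis must be unpacked as precisely this factorization condition. I would state that reading explicitly as the meaning of the hypothesis. With that in place, the remaining steps are a direct use of the satisfaction condition.
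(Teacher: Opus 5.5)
Your proposal is correct and follows the same route as the paper's sketch. In both, $f$ induces a signature morphism compatible with $\tau_G$, the instantiation is transported along it, and the institution satisfaction condition carries $M_G\models\tau_G(\Gamma)$ over to the reindexed model. Your version is more precise than the paper's in three places the paper leaves implicit: the direction of $\sigma_f$, the reading of ``stability'' as the factorization $\tau_G=\sigma_f\circ\tau'$, and the identification $M_{f^*(G)}=\mathbf{Mod}(\sigma_f)(M_G)$. That last identification is really a naturality assumption on how agents are assigned models, not a consequence of contravariance alone, and the paper's sketch relies on it tacitly as well.
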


\begin{proof}[Proof sketch]
The architecture morphism induces a translation of semantic vocabulary
(i.e. a signature morphism) compatible with the instantiation $\tau_G$.
Precomposing yields an instantiation for the reindexed agent; since the
evidence $\mathsf{e}_G$ validates the premises in the original model,
and the signature translation respects satisfaction, the transported
certificate verifies the premises for the reindexed model as well.
\end{proof}

\subsubsection{Practical verification workflow.}
Given a claim ``agent $F$ satisfies conclusion of theorem $T$'' we
require the following manifest:
\begin{enumerate}
  \item the reference theorem $T=(\Sigma_T,\Gamma\vdash\varphi)$ and,
    optionally, the trusted external proof $\pi$ of $T$ in the chosen logic;
  \item a signature instantiation $\tau:\Sigma_T\to\Sigma_A$ linking the
    theorem symbols to the semantic vocabulary of $F$,
  \item evidence $\mathsf{e}$ witnessing  $M_F\models_{\Sigma_A}\tau(\Gamma)$;
  \item the output certificate $\mathsf{cert}=(T,\tau,\mathsf{e})$ and the result of $\mathsf{check}(\mathsf{cert},M_F)$.
\end{enumerate}
If $\mathsf{check}$ returns $\mathrm{true}$, the system accepts the
semantic property $M_F\models\tau(\varphi)$ for $F$.

\begin{remark}[Design choices]
Some design choices:
    \begin{itemize} 
      \item The use of institutions ensures that the theorem $T$ need not be reformulated in a canonical logical syntax: only its signature and premises must be instantiable.
      \item Proof-carrying style allows to verify if the implementation of the agent still fulfills the hypotheses of the theorem, and therefore the conclusion remains true for the agent,
      \item The described mechanism integrates with the Grothendieck
        fibration $Agents \to \mathbf{ArchAgents}$: certificates live in the fibre (they are attached to particular agent implementations) and are transported along reindexing functors as described.
    \end{itemize}
\end{remark}

This ensures that:
\begin{itemize}
    \item existing/classical proofs can be used unchanged;
    \item architectural reasoning remains diagrammatic and structural;
    \item semantic properties remain portable across architectures via certificates.
\end{itemize}

 \begin{figure}
     \centering
     \includegraphics[width=\linewidth]{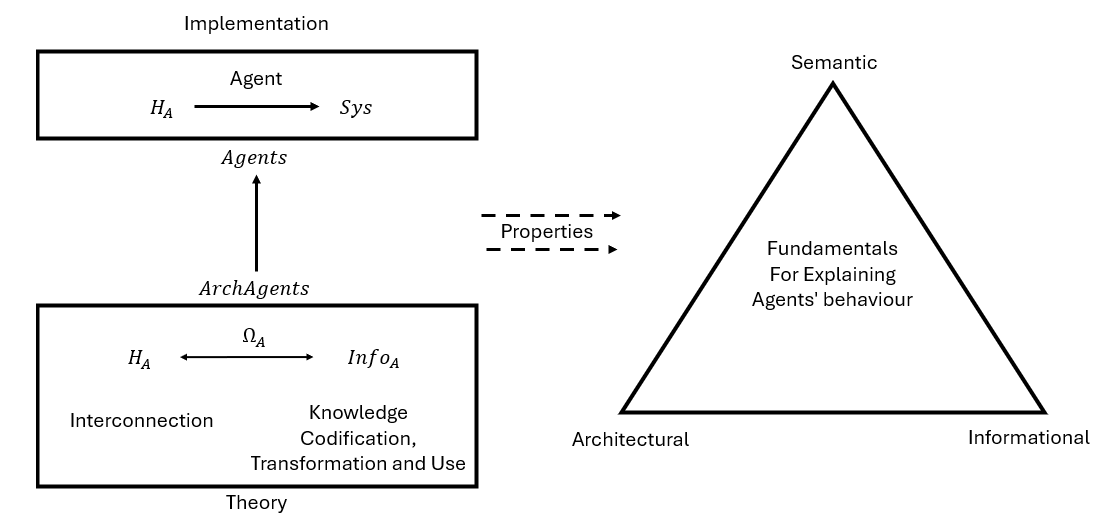}
     \caption{Framework map}
     \label{fig:Framework map}
 \end{figure}

%% file: Parts/6_Architectures_Cases_Studies.tex
\section{Case Studies: From the RL to the SBL Architecture}
\label{sect:CaseStudies}
In this section we present a sequence of agent architectures, each formally defined within the categorical framework introduced above. Rather than proposing new learning algorithms, our goal here is to illustrate how different learning paradigms can be characterized, compared, and analyzed at the architectural level, independently of their concrete implementations.
The examples are organized progressively. We begin with classical Reinforcement Learning as a minimal and widely adopted baseline paradigm, and subsequently enrich its syntactic structure to capture additional cognitive capabilities. Throughout, we place particular emphasis on how each architecture handles information and knowledge, That is, how information flows, how it is stored, how it is updated, and whether it can be structured, factored, or reused. 
This emphasis is motivated by a central architectural challenge in learning agents, namely, avoiding catastrophic forgetting/interference as well as enabling continual learning. From an architectural standpoint, this requires mechanisms for modular, hierarchical, and factorized representations of knowledge. The following examples should therefore be read not only as isolated case studies, but as successive points along a spectrum of increasing informational structure.

\subsection{Case Study I: Reinforcement Learning Architecture}
We begin by illustrating how the classical Reinforcement Learning (RL) architecture can be expressed as an object in the category $\mathbf{ArchAgents}$. This example serves as a baseline architecture against which more expressive learning paradigms will be later compared. In section \ref{app:RL_agent} we also present an agent realizing the RL architecture.

At an architectural level, RL is characterized by a \textbf{flat control structure and a single, globally coupled information flow}. All persistent knowledge acquired through interaction with the environment is aggregated into a \textbf{single parametric carrier, and no explicit internal structure} is imposed on this knowledge.

\subsubsection{RL syntactic layer.}
The syntactic layer $Syn_{RL}$ is freely generated by the syntactic presentation $RL=(STypes_{RL},SGen_{RL},SEq_{RL})$. $STypes_{RL}$ is defined by the following types:
\begin{itemize}
    \item $S$ the state type
    \item $A$ the action type
    \item $E$ the experience type, that involve information about the state, action and reward. This type could be replaced by the reward type $R$ and use a tuple/composition of types in the diagram
    \item $\Theta^s$ the function/model/parameters type representing the "engine" that the agent updates and uses
\end{itemize}
All tensor expressions over these types are available via the symmetric monoidal structure.
The primitive syntactic \textbf{generators} of $Syn_{RL}$ are:
\[
\begin{aligned}
&\mathsf{Policy} : S \otimes \Theta^s \longrightarrow A, \\
&\mathsf{EnvInteraction} : S \otimes A \longrightarrow E, \\
&\mathsf{Update} : \Theta^s \otimes E \longrightarrow \Theta^s.
\end{aligned}
\]
These generators represent abstract syntactic roles: policy
calculation, interaction with the environment, and internal state update. In this case we leave $Eq_{RL}$ empty by now.
Figure \ref{fig:RL string diagram} depicts the corresponding syntax pattern representation $\mathcal{G}_{RL}$. We took inspiration from the general string diagram in Figure 6 from \cite{Hedges2024ReinforcementLI}.

\begin{figure}[!ht]
    \centering
    \includegraphics[width=0.5\linewidth]{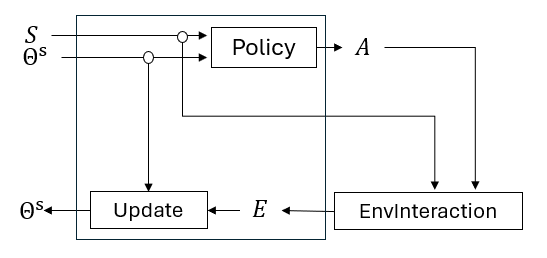}
    \caption{RL string diagram.}
    \label{fig:RL string diagram}
\end{figure}

\subsubsection{RL Knowledge layer}
The knowledge layer $Know_{RL}$ is freely generated by the knowledge presentation $K_{RL}=(KTypes_{RL},KGen_{RL},KEq_{RL}$, where:
\begin{itemize}
    \item $KTypes_{RL}=\{\Theta^k, E^k\}$
    \item $KGen_{RL}=\{Upd:\Theta^k \otimes E^k\rightarrow \Theta^k\}$
    \item $KEq_{RL}$ is empty (apart from the basic frobenius structure)
\end{itemize}

\subsubsection{RL Relational Interface}

The interaction between the syntax layer and the knowledge layer
is mediated by a profunctor
\[
\Phi_{RL} : \mathcal{G}_{RL}^{op} \times Know_{RL} \longrightarrow \mathbf{Set},
\]
which specifies how syntactic types are related to available knowledge
carriers.
At the level of objects, $\Phi_{RL}$ is defined as follows:
\begin{itemize}
    \item $\Phi_{RL}(\Theta^s,\Theta^k)=\{\star\}$, $\Phi_{RL}(E,E^k)=\{\star\}$
    \item $\Phi_{RL}(X,\Theta^k)=\varnothing$ for all $X \in STypes_{RL}$ with $X \neq \Theta^s$.
\end{itemize}

No other object-level relations are specified. The action of $\Phi_{RL}$ on morphisms is the canonical one induced by precomposition in $Syn_{RL}$ and postcomposition in $Know_{RL}$.
Intuitively, this profunctor expresses the fact that the only syntactic entities that store persistent knowledge is the parameter type $\Theta^s$ and $E$. All other syntactic types (such as $S$, $A$) are purely operational and do not correspond to stable knowledge representations.

\begin{table}[h]
\centering
\renewcommand{\arraystretch}{1.25}
\setlength{\tabcolsep}{18pt}

\begin{tabular}{|c|c|c|c|c|}
\hline
\textbf{Types} & $S$ & $A$ & $E$ & $\Theta^s$ \\
\hline
$E^k$ & \xmark & \xmark & \cmark & \xmark \\
\hline
$\Theta^k$ & \xmark & \xmark & \xmark & \cmark \\
\hline
\end{tabular}
\caption{Visualization as a table of the support of the RL relation profunctor}
\end{table}

\subsubsection{RL constraints}

Standard RL is not only characterized by a perception-action-update loop, but also by additional constraints, namely: 
\begin{enumerate}
    \item the persistent knowledge state must be interpretable as an evaluative object over expected cumulative returns (e.g. a value or action-value structure),
    \item learning updates should be Bellman-compatible, in the sense of targeting temporally consistent fixed points,
    \item policy selection should be appropriately guided by such evaluative knowledge, and
    \item the environment interface is typically assumed to satisfy a Markovian transition structure.
\end{enumerate}   
Moreover, RL often carries further ontological assumptions on its types, such as whether the state space is atomic or factorized, whether observations coincide with states, or whether the action space is discrete or continuous.
These restrictions are essential for distinguishing RL from more general adaptive control loops. These constraints would be formalized as follows within the framework

\paragraph{Value/Action Representability}

This constraint captures the defining semantic commitment of RL that  
the internal knowledge state must encode evaluative information 
about future interaction outcomes. 
Without this restriction, the 
architecture degenerates into a generic adaptive control loop.
Formally, the value representability constraint enforces that the 
distinguished knowledge type $\Theta^k$ admits only realizations 
that can be interpreted as expected cumulative return.
Specifically, the value representability constraint is given by 
\[
\rho_{\mathrm{val}}^{RL}
=
(\Theta^k,\; P_{\mathrm{val}}),
\quad
P_{\mathrm{val}} \in \mathcal P_{RL}(\Theta^k),
\]
where $\Theta^k$ is the knowledge type representing evaluative content.
The property $P_{\mathrm{val}}$ requires that any admissible realization
of $\Theta^k$ is representable as an evaluative object over future returns.
Concretely, for a semantic realization $(I,J)$, the interpretation
\[
J(\Theta^k)
\]
must admit a structure equivalent to one among:
\[
V : S \to \mathbb{R},
\qquad
Q : S \times A \to \mathbb{R},
\]
or, more generally, any object encoding expected discounted cumulative
return up to semantic equivalence.
Thus, admissible realizations of $\Theta^k$ are restricted to lie in the
full subcategory of evaluative representations. The type of this constraint under the typology we suggested in the remark \ref{remark:Constraint typology} would be \textbf{membership-in-class / representability}.

\paragraph{Bellman-type Consistency}

This constraint enforces temporal coherence of evaluative knowledge. 
While the previous constraint restricts the representational form 
of $\Theta^k$, the Bellman constraint imposes a dynamical condition. 
That is, valid evaluative representations must be stable under a Bellman-type 
update operator.
This is the key property distinguishing RL from arbitrary predictive 
or descriptive learning systems.
The Bellman consistency constraint is given by
\[ \rho_{\mathrm{Bell}}^{RL} = (X_{\mathrm{Bell}},\; P_{\mathrm{Bell}}),
\quad
P_{\mathrm{Bell}} \in \mathcal P_{RL}(X_{\mathrm{Bell}}),
\]
where the scope
\[ X_{\mathrm{Bell}} = \{\Theta^k,\, Upd\}\]
may be extended to include $S, A, E$ and the and the relevant interface relations through $\Phi_{RL}$, insofar as they contribute to the evaluative semantics of $\Theta^k$.
The property $P_{\mathrm{Bell}}$ requires that, for any realization $(I,J)$:

\begin{itemize}
    \item the interpretation $J(\Theta^k)$ carries an induced operator
    \[ \mathcal B : J(\Theta^k) \to J(\Theta^k),\]
    corresponding to a Bellman-type operator;

    \item the optimal evaluative content is a fixed point of this operator,
    \[ V = \mathcal B(V)
    \quad\text{or}\quad
    Q = \mathcal B(Q);
    \]

    \item the realization of the update generator
    \[ J(Upd) \]
    induces a learning dynamics compatible with $\mathcal B$, in the sense
    that its trajectories converge (in the intended semantics) towards a
    Bellman-consistent solution, typically satisfying
    \[ V^\ast = \mathcal B(V^\ast).\]
\end{itemize}

Thus, $P_{\mathrm{Bell}}$ constrains both the internal structure of
$\Theta^k$ and its dynamical interaction with the update mechanism.
The type of this constraint under the typology we suggested in the remark \ref{remark:Constraint typology} would be \textbf{fixed-point} or optionally \emph{optimality / extremality}.

\paragraph{Policy--Value Compatibility}

This constraint captures the functional role of evaluative knowledge 
in RL. Policies must be informed by value-like representations. 
Without this coupling, the evaluative structure becomes behaviorally 
irrelevant.
This constraint therefore enforces a dependency between the policy 
generator and the evaluative knowledge encoded in $\Theta^k$.
The RL policy guidance constraint is given by
\[ \rho_{\mathrm{pol}}^{RL} = (X_{\mathrm{pol}},\; P_{\mathrm{pol}}),
\quad
P_{\mathrm{pol}} \in \mathcal P_{RL}(X_{\mathrm{pol}}),
\]
where
\[
X_{\mathrm{pol}} = \{\mathrm{Policy},\, \Theta^s,\, \Theta^k\}.
\]

The property $P_{\mathrm{pol}}$ requires that, for any realization $(I,J)$,
the interpretation
\[
I(\mathrm{Policy}) : I(S) \otimes I(\Theta^s) \to I(A)
\]
is guided by evaluative knowledge obtained through the coupling between
$\Theta^s$ and $\Theta^k$ via the interface $\Phi_{RL}$:
\[\exists F \text{ such that } I(\mathrm{Policy}) = F(J(\Theta^k))\]

Concretely, the behavior induced by the policy must depend on a realization of $\Theta^s$ whose associated knowledge content encodes expected cumulative future reward, and must be preferentially aligned with it.
Typical admissible realizations include greedy, soft-greedy, or otherwise
value-derived policies, but the formulation remains abstract and
representation-independent.
The type of this constraint under the typology we suggested in the remark \ref{remark:Constraint typology} would be \textbf{interface compatibility / dependency}.

\paragraph{Markovian Transition Admissibility}
This constraint specifies the structural assumption on the environment 
interface underlying standard RL. That is, the interaction process must admit 
a Markovian factorization.
This distinguishes RL from more general history-dependent or 
non-Markovian interaction models.
The Markov constraint is given by
\[\rho_{\mathrm{Markov}}^{RL} = (X_{\mathrm{Markov}},\; P_{\mathrm{Markov}}),
\quad
P_{\mathrm{Markov}} \in \mathcal P_{RL}(X_{\mathrm{Markov}}),
\]
where
\[
X_{\mathrm{Markov}} = \{\mathrm{EnvInteraction},\, S,\, A,\, E\}.
\]

The property $P_{\mathrm{Markov}}$ requires that, for any realization $(I,J)$, the induced interaction semantics satisfies a Markovian factorization condition.
Specifically, the distribution of the next experience depends only on the
current state-action pair, i.e. the transition/reward structure factors
through the present configuration $(s_t,a_t)$, independently of prior
history except insofar as it is encoded in the current state.
This can be expressed as a conditional independence or factorization
property internal to the semantics of the interaction generator.
The type of this constraint under the typology we suggested in the remark \ref{remark:Constraint typology} would be \textbf{conditional independence / factorization}.

\paragraph{Ontological Constraints on Types}

In addition to operational constraints on dynamics and interaction, 
architectures may impose constraints on the internal structure of 
their types. These constraints express ontological commitments about 
how the relevant entities (states, observations, actions, rewards) 
are organized.
Crucially, such constraints do not merely restrict admissible 
implementations, but induce systematic variations of the architecture 
itself. In this sense, they generate families of related architectures 
rather than refinements of a fixed one.
For instance:
\begin{itemize}
    \item atomicity or factorization of the state type,
    \item the observation type may coincide with the state type (fully observable RL) or be distinct (partially observable variants),
    \item discreteness or continuity of action types,
    \item scalar or structured reward spaces.
\end{itemize}

Such assumptions correspond to constraints of the form
\[\rho = (X, P), \quad P \in \mathcal P_{RL}(X), \]
where $X$ includes the relevant scope, and $P$
specifies the required structural property (e.g. existence of a
factorization, product decomposition, or measurable structure).
These ontological constraints are essential for distinguishing
variants of RL architectures such as offline RL, modular RL, or factored RL and should be treated as first-class elements of the constraint system.
For instance, imposing a factorization constraint on the state type:
\[
S \cong S_1 \otimes S_2 \otimes \cdots \otimes S_n
\]

induces a factored variant of the RL architecture, in which both 
the transition dynamics and the value representation are expected 
to respect this decomposition.
This affects multiple architectural layers simultaneously:

\begin{itemize}
    \item \textbf{Syntax layer:} generators such as $\mathrm{EnvInteraction}$ 
    may factor or decompose across components;

    \item \textbf{Knowledge layer:} evaluative representations may decompose 
    additively or compositionally (e.g. factored value functions);

    \item \textbf{Constraints layer:} additional compatibility conditions 
    may be imposed to ensure that the dynamics preserves or exploits 
    the factorization.
\end{itemize}

Therefore, ontological constraints should be understood as 
\emph{architecture-generating constraints}.That is, rather than only selecting 
a subset of admissible agents within a fixed architecture, they 
define new architectural objects in $\mathbf{ArchAgents}$ which potentially could lead to the implementation of agents with new desirable properties.

\subsubsection{RL as object in ArchAgents}
Finally, the RL architecture is described as an object in
$\mathbf{ArchAgents}$ given by
\[
RL=(\mathcal{G}_{RL}, Know_{RL}, \mathcal{T}_{RL}).
\]
where $\mathcal{T}_{RL}=(\Phi_{RL}, \pi_{RL}:\mathcal{P}_{RL} \to \mathcal{C}_{RL},\mathcal{R}_{RL}=\{\rho_{val}^{RL},\rho_{Bell}^{RL},\rho_{pol}^{RL},\rho_{Markov}^{RL}\}$.

From an architectural perspective, classical Reinforcement Learning exhibits a highly centralized and undifferentiated treatment of information. All the persistent knowledge acquired through interactions with the environment is encoded into a \textbf{single parametric carrier $\Theta^k$}, which is both consumed by the policy to produce actions and updated as a result of experience.
This design has a number of architectural strengths. Firstly, it enforces a clear and simple information flow: experience is aggregated into
parameters, and parameters fully determine future behavior. Secondly, it supports continuous adaptation through repeated endomorphic updates of $\Theta^k$, thus, enabling incremental learning without requiring explicit memory management or representational commitments. Finally, the absence of internal structure in $\Theta^k$ makes the architecture broadly compatible with a wide range of concrete realizations.

However, the simplicity of $Know_{RL}$ also entails significant architectural
limitations. The architecture does not distinguish between different types or sources of knowledge, nor does it provide mechanisms for structuring, isolating, or selectively reusing information. All learned content is collapsed into $\Theta^k$, which acts as an informational bottleneck. As a consequence, Reinforcement Learning architectures lack explicit support for \textbf{modular knowledge, causal abstraction, contextual memory, or hierarchical reutilization}, all of which must be introduced, if at all, at an algorithmic/implementation rather than at the architectural level.
Thus, information transformations are constrained by the following architectural principles:
\begin{itemize}
  \item Information may be freely copied or discarded.
  \item Persistent information must be encoded into $\Theta^k$.
  \item Learning corresponds to transformations
  \[\Theta^k \longrightarrow \Theta^k\]
  representing parametric updates driven by experience.
\end{itemize}

\subsubsection{RL Architecture Properties}
Structural properties:
\begin{itemize}
    \item There exists, at most, one structurally distinct \textbf{feedback loop} involving knowledge: for all diagrams in $\mathcal{G}_{RL}$ they contain at most one loop that updates $\Theta^s$.
    \item \textbf{Indistinguishability} between types of $E$, that is, all experiences are indistinguishable from a syntactic level.
\end{itemize}
Informational properties:
\begin{itemize}
    \item \textbf{Closure of information}, that is, all the persistent information is encoded in $\Theta^k$.
    \item No existence of \textbf{knowledge modularity}, that is, the architecture can not express any type of decomposition, isolation or reusability of knowledge.
\end{itemize}

The limitations observed in the RL architecture are not merely algorithmic, but architectural in nature. In particular, RL provides no dedicated informational space for representing \textbf{causal relations} in the environment, nor does it distinguish between learning a policy for long-term reward optimization and learning a \textbf{structural model of the environment} itself. This limitations motivate the next Case study, in which causal structure is made explicit at the architectural level. In this regard, Causal Reinforcement Learning (CRL) extends the RL architecture by introducing separate informational components for causal variables and structural models,
thereby, \textbf{decoupling policy optimization from causal discovery and reasoning}.

\subsection{Case Study II: Causal Reinforcement Learning Architecture}

The Causal Reinforcememnt Learning (CRL) architecture extends the classical RL architecture by introducing an explicit internal representation of causal structure that allows the agent to reason about \textbf{interventions, counterfactuals, and causal dependencies}, rather than relying solely on associative experience. Instead of collapsing all learned knowledge into a single undifferentiated parameter carrier, CRL distinguishes between policy parameters and a \textbf{causal world model} that supports intervention-aware reasoning.
At an architectural level, CRL is characterized by the presence of the distinct causal model component and by the interpretation of actions as interventions on this model. However, the overall control structure remains globally coupled and similar to classical RL. That is, a model-mediated control loop structure, in which action selection and learning are conditioned not only on parametric policy knowledge but also on an explicit causal representation of the environment.

\paragraph{CRL syntactic layer.}
The syntactic layer $Syn_{CRL}$ is freely generated by the syntactic presentation $CRL=(STypes_{CRL},SGen_{CRL},SEq_{CRL})$. $Types_{CRL}$ is defined by the following type symbols:
\begin{itemize}
    \item $S$ the state type,
    \item $A$ the action type,
    \item $E$ the experience type,
    \item $\Theta_\pi^s$ the policy parameters type,
    \item $\Theta_{CS}^k$ the causal function/model/parameters type, representing the agent's internal causal representation of the environment.
\end{itemize}
All tensor expressions over these types are available via the symmetric monoidal structure.
The primitive syntactic generators of $Syn_{CRL}$ are:
\[
\begin{aligned}
&\mathsf{Policy} : S \otimes \Theta_\pi^s \otimes \Theta_{CS}^s \longrightarrow A, \\
&\mathsf{EnvInteraction} : S \otimes A \longrightarrow E, \\
&\mathsf{Do} : \Theta_{CS}^s \otimes A \longrightarrow \Theta_{CS}^s, \\
&\mathsf{PolicyUpdate} : \Theta_\pi^s \otimes \Theta_{CS}^s \otimes E \longrightarrow \Theta_\pi^s.\\
&\mathsf{CausalUpdate}: \Theta_{CS}^s \otimes E \longrightarrow \Theta_{CS}^s.
\end{aligned}
\]
These generators represent the following abstract syntactic roles:
\begin{itemize}
    \item $\mathsf{Policy}$: action selection conditioned on both parametric and causal knowledge,
    \item $\mathsf{EnvInteraction}$: interaction with the environment that produces the experience,
    \item $\mathsf{Do}$: causal intervention on the internal causal model,
    \item $\mathsf{CausalUpdate}$: learning and refinement of the internal causal model based on the experience,
    \item $\mathsf{PolicyUpdate}$: policy adaptation informed both by the experience and by the causal structure.
\end{itemize}

We do not impose additional equations beyond those required by the axioms of hypergraph categories; thus, $SEq_{CRL}$ remains empty.
The syntactic pattern $\mathcal{G}_{CRL}$ (Figure \ref{fig:CRL string diagram}) explicitly exhibits two coupled feedback loops: one over $\Theta_\pi^s$ and another one over $\Theta_{CS}^s$.

\begin{figure}[ht!]
    \centering
    \includegraphics[width=0.75\linewidth]{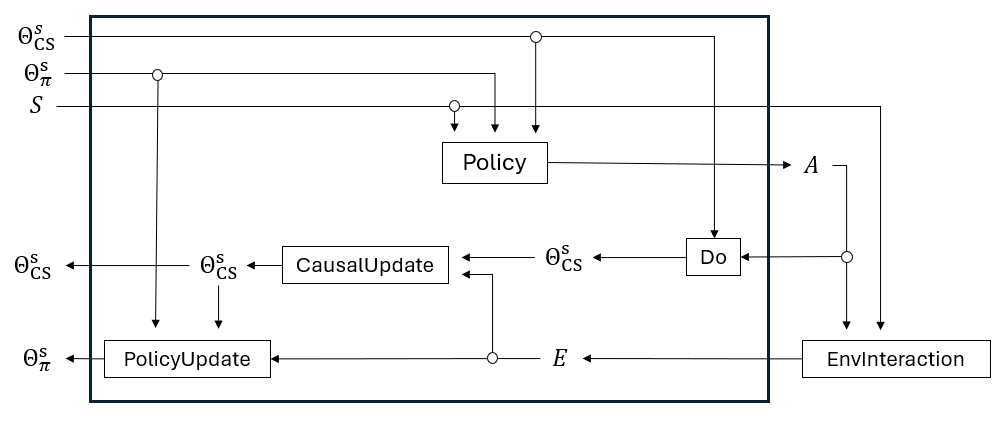}
    \caption{CRL string diagram.}
    \label{fig:CRL string diagram}
\end{figure}

\begin{remark}
The CRL architecture defined here should be understood as a general and minimal causal reinforcement learning architecture. In particular, no assumptions are made about the internal structure of the causal model $\Theta_{CS}$ nor the representation of the state type. 

More refined variants of CRL have been proposed in the literature, such as factorized CRL or CRL with latent variables 

In these architectures, the state is decomposed into multiple observed variables and the causal model explicitly represents factorized relations, as well as extensions that incorporate the explicit modeling and learning of latent or confounding variables. These approaches can be seen as architectural refinements of the present formulation, obtained by introducing \textbf{additional internal structure on $\Theta_{CS}^s$} rather than by modifying the core causal wiring pattern. For now, we maintain the focus on the general case study, since, as it can be noted, these variants suppose big changes in the definition of the syntax or the architectural information structure. 
\end{remark}

\paragraph{CRL knowledge layer.}
The knowledge layer $Know_{CRL}$ is freely generated by the knowledge presentation $Know_{CRL}=(KTypes_{CRL},KGen_{CRL},KEq_{CRL})$ where:
\begin{itemize}
    \item $KTypes_{CRL}=\{\Theta_\pi^k,\Theta_{CS}^k, E^k\}$
    \item $KGen_{CRL}=\{PolicyUpd:\Theta_{CS}^k\otimes \Theta_{\pi}^k \otimes E^k \rightarrow \Theta_{\pi}^k, \quad CausalUpd:\Theta_{CS}^k \otimes E^k \rightarrow \Theta_{CS}^k, \quad CausalIntervention:\Theta_{CS}^k \rightarrow \Theta_{CS}^k \}$
    \item $KEq_{CRL}$ is empty
\end{itemize}
While both $CausalUpd$ and $CausalIntervention$ are endomorphisms on $\Theta_{CS}^k$, they play quite distinct roles. That is, $CausalUpd$ represents learning-driven refinement from experience, whereas $CausalIntervention$ represents deliberate counterfactual or interventional modification used for decision-making.

\paragraph{Relational Profunctor for CRL.}
The interaction between the syntactic diagrams and the internal knowledge resources in CRL is specified by the following relational profunctor:
\[
\Phi_{CRL} : \mathcal{G}_{CRL}^{\mathrm{op}} \times \mathsf{Know}_{CRL}
\longrightarrow \mathbf{Set}.
\]

At the object level, this profunctor has non-empty support on the pairs
\[(\Theta_\pi^s,\Theta_\pi^k), \qquad (\Theta_{CS}^s,\Theta_{CS}^k), \quad (E,E^k)\]
indicating that the syntactic workflows may independently access and transform the policy knowledge and the causal knowledge.
This profunctorial structure makes explicit that CRL admits multiple,
non-collapsible knowledge access patterns, in contrast with the single
carrier architecture of classical RL.

\begin{table}[h]
\centering
\renewcommand{\arraystretch}{1.25}
\setlength{\tabcolsep}{18pt}

\begin{tabular}{|c|c|c|c|c|c|}
\hline
\textbf{Types} & $S$ & $A$ & $E$ & $\Theta^s_\pi$ & $\Theta^s_{CS}$ \\
\hline
$E^k$ & \xmark & \xmark & \cmark & \xmark & \xmark\\
\hline
$\Theta^k_\pi$ & \xmark & \xmark & \xmark & \cmark & \xmark\\
\hline
$\Theta^k_{CS}$ & \xmark & \xmark & \xmark & \xmark & \cmark \\
\hline

\end{tabular}
\caption{Visualization as a table of the support of the CRL relational interface profunctor}
\end{table}

\paragraph{CRL constraints}
For distinguish CRL from generic model-based or multi-module RL agents, standard CRL carries stronger restrictions, in addition to the same constraints as in RL,  namely:  
\begin{enumerate}
    \item the causal carrier must be admissibly realizable as a genuine causal world model rather than as an arbitrary predictive model,
    \item admissible realizations of $\mathsf{Do}$ should satisfy interventional semantics rather than mere observational conditioning,
    \item causal learning updates should preserve or refine the causal interpretability of the model and
    \item policy selection should depend non-trivially on causal knowledge, in particular through the evaluative use of interventional or counterfactual consequences for action selection.
\end{enumerate}

More generally, CRL also presupposes compatibility between the environment interface and an underlying causal generative structure, together with possible ontological assumptions about factorization, manipulability, observability, and latent variables.
These restrictions are essential to distinguish CRL from broader classes of structured or model-based RL agents, but in the present paper we does not formalize them yet under the developed framework

\paragraph{CRL as object in ArchAgents}
Finally, the CRL architecture is described by the following object in ArchAgents:
\[CRL=(\mathcal{G}_{CRL},Know_{CRL},\Phi_{CRL})\]

From the knowledge perspective, CRL introduces a structured and differentiated treatment of knowledge. Persistent knowledge is no longer collapsed into a single carrier but is distributed across two conceptually distinct components:
\begin{itemize}
\item $\Theta_\pi^k$, encoding policy-related parametric knowledge,
\item $\Theta_{CS}^k$, encoding causal knowledge about the environment.
\end{itemize}

This clear separation enables the architecture to explicitly represent and reuse causal information across interactions (along with the $Do$ generator), rather than embedding it implicitly into parameters. The causal model $\Theta_{CS}^k$ (along with the $Do$ generator) works as an intermediary informational structure that shapes both learning and control.
The object $\Theta_{CS}^k$ within the knowledge layer $Know_{CRL}$, unlike $\Theta_\pi^k$, is not treated as a purely atomic carrier. Architecturally, it is required to support transformations corresponding to:
\begin{itemize}
\item updating causal structure from experience,
\item conditioning policy updates on causal information,
\item mediating action selection through causal reasoning and do-intervention.
\end{itemize}

Learning in CRL thus decomposes into two coordinated endomorphic processes: $PolicyUpd$ and $CausalUpd$.
Additionally, updating current knowledge is also possible with $CausalIntervention$.
As in the RL case, this does not impose \textbf{probabilistic semantics, causal discovery algorithms, or identifiability guarantees}. These aspects are intentionally left to concrete algorithmic instantiations, while the architecture specifies only the admissible information flows and structural roles.
In particular, the architecture exhibits two interacting feedback loops: a causal learning loop over $\Theta_{CS}^s$ and a policy learning loop over $\Theta_\pi^s$, coupled through shared access to experience and causal structure.
This interpretation makes explicit a key architectural distinction with respect to classical RL: causal knowledge is no longer implicit in parameters but is represented, updated, and reused as a first-class informational component of the agent architecture.

\paragraph{CRL Architecture Properties}
Structural properties:
\begin{itemize}
    \item \textbf{Multiple coupled feedback loops.}  
    The architecture admits multiple, non-collapsible feedback loops, corresponding to distinct update cycles for the policy parameters $\Theta_\pi$ and for the causal model parameters $\Theta_{CS}$. These loops are structurally distinct but coupled through shared experience and interventions on the causal model.
    
    \item \textbf{Explicit separation of decision and learning regimes.}  
    The wiring of $Syn_{CRL}$ enforces a structural distinction between the model used for action selection and the intervened model under which learning takes place.
    
    \item \textbf{Typed causal mediation.}  
    Actions do not directly influence learning modules but act through explicit causal mediation morphisms (e.g.\ $\mathsf{Do}$), making intervention a first-class structural component of the architecture.
    
\end{itemize}
Informational properties:
\begin{itemize}
    \item \textbf{Partial decomposition of persistent information.}  
    Persistent information is no longer encoded in a single carrier: the architecture separates control knowledge ($\Theta_\pi$) from causal world knowledge ($\Theta_{CS}$), thus, enabling differentiated update and use.
    
    \item \textbf{Causal conditioning of learning.}  
    Learning updates are conditioned on the intervened causal models induced by actions, rather than on purely observational information flows.
    
    \item \textbf{Limited knowledge modularity.}  
    While the architecture distinguishes between policy knowledge and causal knowledge, each remains internally monolithic. The architecture provides no mechanisms for decomposing, isolating, or recombining subcomponents of $\Theta_{CS}$ or $\Theta_\pi$.
    
    \item \textbf{Absence of hierarchical or reusable knowledge units.}  
    Knowledge is persistent and structured by role, but not by scale or reuse. That is, learned causal or control structures cannot be encapsulated, composed, or redeployed as independent informational units.
\end{itemize}

While Causal Reinforcement Learning constitutes a significant architectural improvement over the RL Architecture, it still remains insufficient to account for a wide range of cognitive capabilities that are central for \textbf{continual learning}. These include \textbf{latent variable discovery, macroaction learning, concept learning, goal adaptation, and knowledge reutilization} across different tasks and contexts.
Several architectural paradigms partially address these challenges, such as \textbf{Multi-Model Reinforcement Learning (MMRL)} and \textbf{Hierarchical Reinforcement Learning (HRL)}, 
which introduce modularity or hierarchy along specific dimensions. MMRL promotes the modularization of knowledge through the use of multiple specialized models, while HRL introduces hierarchical structure primarily in the space of actions, enabling temporal abstraction and multi-level control. However, these approaches typically focus on isolated aspects of cognition and do not provide a general architectural account of knowledge organization.

Thus, the final Case Study introduces the \textbf{Schema-Based Learning (SBL)} architecture, which aims to provide a unified architectural framework for \textbf{modular, compositional, and reusable knowledge}. In SBL, schemas and workflows play a central role in structuring knowledge and governing its interaction, thus, offering a principled foundation for continual learning and architectural knowledge scalability. Moreover, SBL enables the agent to progressively
develop the diverse cognitive capabilities discussed above within a
single, coherent architectural framework.

\subsection{From CRL to SBL: Relaxing Architectural Design Principles Assumptions}

The transition from Causal Reinforcement Learning (CRL) to the full Schema-Based Learning (SBL) architecture is not presented as an abrupt architectural replacement, but rather as a sequence of step-wise conservative extensions of the underlying syntactic structure.
At each step, a specific architectural constraint of CRL is relaxed, while the rest of the architecture is left unchanged.
This allows the reader to track precisely which assumptions are removed and how SBL emerges as their joint generalization.
Throughout this subsection, we focus exclusively on the syntactic architecture and its string diagram representation. The full formalization is deferred to the SBL case study.

\paragraph{Baseline: Canonical CRL.}
We start from the canonical CRL architecture introduced in the previous section (Figure~\ref{fig:CRL string diagram}).
In this setting, the agent is characterized by:
\begin{enumerate}
    \item atomic observation and decision interfaces,
    \item a single global causal predictive model and a single global policy model,
    \item a single monolithic update mechanism for both models, and 
    \item a unique feedback loop through which all learning occurs.
\end{enumerate}
This configuration fixes the set of architectural constraints that will be progressively relaxed in the following steps.

\subsubsection{Step 1: Factorization of Interfaces.}
The first relaxation concerns the structure of the agent-environment interface.
Instead of assuming atomic observation and decision spaces, we allow both interfaces to be factorized into multiple components
(Figure~\ref{fig:factored_crl}). Importantly, this modification does not introduce new syntactic processes: the learning loop, update mechanism, and model structure remain unchanged.
However, from a knowledge perspective, this step enriches the information and knowledge management of the architecture, decreasing the risk of suffering the \textbf{curse of dimensionality} while preserving its overall topology. Thus, this factorization is a necessary prerequisite before all the subsequent extensions, because otherwise the problem of dimensionality would be carried over to SBL.
This step only causes changes in the syntactic part, maintaining the knowledge management from CRL
\paragraph{Syntactic layer}
\begin{itemize}
    \item $STypes: \{O_i\},\{D_j\}, E, \Theta_\pi^s, \Theta_{CS}^s$, where $\{O_i\},\{D_j\}$ are the factorized set of observation and decision types that "enrich" the types $S$ and $A$ with a new constraint, obliging these types to hold factorized representations.
    \item In the case of $SGen$, the same generators are kept  (updating the corresponding input and output types), namely, $\mathsf{Policy}, \mathsf{EnvInteraction}, \mathsf{Do}, \mathsf{PolicyUpdate} $ and $\mathsf{CausalUpdate}$.
    \item $SEq$ remains empty.
\end{itemize}


\subsubsection{Step 2: Typed Multi-Model Architecture.}
The second relaxation removes the assumption of an unique global internal model of RL, but in a more general way than CRL has done.
Instead, the agent is allowed to manage a wide collection of different internal models, each operating over (possibly different) subinterfaces of the factorized observation and decision spaces and learned or used for different goals (Figure~\ref{fig:mmrl_string_diagram}) 

. At this stage, these models coexist in parallel and are not yet organized by any higher-level coordination mechanism. Syntactically, this corresponds to replacing a single predictive component with a family of typed components, while still preserving the single learning loop. This step introduces the structural basis for schemas, although their full role will be later formalized.
This step causes changes in the syntactic and knowledge parts, 
\paragraph{Syntactic layer}
\begin{itemize}
    \item $STypes: \{O_i\},\{D_j\}, E, \{\Theta_\pi^s\}, \{\Theta_{CS}^s\}, \{\theta_{CS}^s\}, \{\theta_{\pi}^s\}$, where $\{\Theta_{CS}^s\},\{\Theta_\pi^s\}$ are the types representing the global set of causal and policy models respectively, that is, all the models that the agent stores, and $\{\theta_{CS}^s\}, \{\theta_{\pi}^s\}$ are the local set of models selected in each iteration. This enables the agent to work with multiple models instead of relying on unique models.
    \item For $SGen$, we maintain the old generators (updating the corresponding inputs and outputs types), and two new generators are added for managing the selection and aggregation of new models into the global set: $ \mathsf{SelectModels}, \mathsf{AggModels}, \mathsf{Policy}, \mathsf{EnvInteraction}, \mathsf{Do}, \mathsf{PolicyUpdate} $ and $\mathsf{CausalUpdate}$
    \item $SEq$ remains empty.
\end{itemize}

\paragraph{Knowledge layer}
\begin{itemize}
    \item $KTypes: \{\Theta_\pi^k\}, \{\Theta_{CS}^k\}, \{\theta_{CS}^k\}, \{\theta_{\pi}^k\}$. We replace the unique knowledge carrier units for four knowledge carriers, two global carriers for causal and policy knowledge, and another two local carriers representing ...
    \item For $KGen$  we maintain the same generators, updating the domains, and also add the two corresponding knowledge generators for managing the :$\mathsf{SelectModels}, \mathsf{AggModels},$
    \item $KEq$ remains empty.
\end{itemize}
In this step the profunctor has some changes, since some changes are made in the knowledge types and in the syntactic types related with knowledge
\begin{table}[h]
\centering
\renewcommand{\arraystretch}{1.25}
\setlength{\tabcolsep}{18pt}

\begin{tabular}{|c|c|c|c|c|c|c|c|}
\hline
\textbf{Types} & $\{O_i\}$ & $\{D_j\}$ & $E$ & $\{\Theta^s_\pi\}$ & $\{\Theta^s_{CS}\}$ & $\{\theta^s_{\pi}\}$ & $\{\theta^s_{CS}\}$ \\
\hline
$\{\Theta^k_\pi\}$ & \xmark & \xmark & \xmark & \cmark & \xmark & \xmark & \xmark\\
\hline
$\{\Theta^k_{CS}\}$ & \xmark & \xmark & \xmark & \xmark & \cmark & \xmark & \xmark \\
\hline
$\{\theta^k_{\pi}\}$ & \xmark & \xmark & \xmark & \xmark & \xmark & \cmark & \xmark \\
\hline
$\{\theta^k_{CS}\}$ & \xmark & \xmark & \xmark & \xmark & \xmark & \xmark & \cmark \\
\hline

\end{tabular}
\caption{Visualization as a table of the support of the relational profunctor}
\end{table}

\subsubsection{Step 3: Cognitive Modules}
Next, we relax the assumption that all the internal processing within the agent can be subsumed under a single, homogeneous learning or update mechanism. Instead, the architecture is generalized to allow for multiple heterogeneous internal processes, which we collectively refer to as cognitive modules (Figure~\ref{fig:cog_mod_string_diagram}).
Cognitive modules abstract over a wide range of internal capabilities, including but not limited to decision-making, causal learning, latent variable learning, macroaction learning, concept learning, goal adaptation, memory management, etc. At this stage, no commitment is made regarding the specific function, learning algorithm or semantics of each module. The key architectural change is purely syntactic: internal processing is no longer assumed to be centralized, but distributed across multiple, potentially specialized components, that get as inputs some schemas, signals or memory systems, and produce output signals. These output signals may in turn update the schemas or the memory, or perform a decision.
Moreover, apart from operating on specific parts, the modules may be selectively activated depending on the agent's current context.
This step causes some changes both in the syntactic and the knowledge parts. The goal is to make the architecture more expressive. 
\paragraph{Syntactic layer}
\begin{itemize}
    \item $STypes: \{O_i\},\{D_j\}, E, \{\Theta^s\}, \{\theta^s\}, \mathcal{K}$, where $\mathcal{K}$ are the cognitive module identifiers and $\{\Theta^s\},\{\theta^s\}$ are the types representing the global and local sets of models, respectively. Yet, in this case, the types of models are not restricted only to policy or causal models, but they can be any type of model. This enables the agent to work with multiple types of models instead of relying on certain types of models. This allows to cover a wide variety of cognitive capabilities. A new type of selected cognitive modules set is added. This set decides which flows will be executed subsequently in the same agent iteration. 
    \item For $SGen$, we maintain the old generators $\mathsf{EnvInteraction}$ and $\mathsf{AggModels}$ but some changes are required for other generators, namely:
    \begin{itemize}
        \item $CogModActivation: \{\Theta^s\} \otimes \{O_i\} \to \{\theta^s\} \otimes \mathcal{K}$. This generator not only selects the models for the current iteration, but also selects the cognitive modules $\mathcal{K}$ that will be activated.
        \item $CogModExecution: \{\theta^s\} \otimes \mathcal{K} \to \{\theta^s\} \otimes \{D_j\}$. This generator encompasses the execution of the corresponding flows in the iteration. Since we aim to generalize in this step, we do not design the specific cognitive module flows that this generator represents. For providing some examples of cognitive modules, the flow for deciding the policy that we have seen in previous architectures will be included in the cognitive module of decision-making. 
        \item $UpdateModels:\{\theta^s\} \otimes E \to \{\theta^s\}$. This generator updates the local models selected in the iteration (after being modified by the cognitive modules) and the experience observed
    \end{itemize}
    \item $SEq$ remains empty.
\end{itemize}

\paragraph{Knowledge layer}
\begin{itemize}
    \item $KTypes: \{\Theta^k\}, \{\theta^k\}, \theta^k_1,\dots,\theta^k_n$. We replace the typed knowledge carrier units by a global and local set of carrier units and the different types of knowledge that the agent will lead with. 
    \item For $KGen$  we maintain the same generators $\mathsf{SelectModels}, \mathsf{AggModels}$, and we add all the possible operators that can transform or update the knowledge types that represent the types of knowledge units, we denote it by:
    \begin{itemize}
        \item $Upd_i:\theta_i^k \to \theta_i^k$ the generators responsible for updating the corresponding knowledge unit types.
        \item $Transform:\theta_*^k \to \theta_*^k$ the generators responsible for transforming one type of knowledge unit into another one (we use the notation $\theta_*$ to generalize the type of knowledge).
    \end{itemize}
    \item $KEq$ remains empty depending on the algebraic rules that may be added to the generators $Upd$ or $Transform$.
\end{itemize}
In this step the profunctor has some changes, since some changes are made in the knowledge types and in the syntactic types related with knowledge

\begin{table}[h]
\centering
\renewcommand{\arraystretch}{1.25}
\setlength{\tabcolsep}{18pt}

\begin{tabular}{|c|c|c|c|c|}
\hline
\textbf{Types} & $\{O_i\},\{D_j\},E$ &$\mathcal{K}$ &$\{\Theta^s\}$ & $\{\theta^s\}$ \\
\hline
$\{\Theta^k\}$ & \xmark & \xmark & \cmark & \xmark  \\
\hline
$\{\theta^k\}$ & \xmark & \xmark & \xmark & \cmark    \\
\hline
$\theta^k_1$ & \xmark & \xmark & \xmark & \cmark   \\
\hline
$\cdots$ & \xmark & \xmark & \xmark & \cmark  \\
\hline
$\theta^k_{n}$ & \xmark & \xmark & \xmark & \cmark  \\
\hline
\end{tabular}
\caption{Visualization as a table of the support of the relational profunctor}
\end{table}
\subsubsection{Step 4: Temporal Decoupling and Memory.}
The next architectural relaxation concerns the role of the past experiences in learning and internal processing. Rather than assuming that all learning and adaptation must occur strictly online, at the moment of interaction with the environment, we allow the agent to retain and reuse past experiences through an explicit memory structure (Figure~\ref{fig:memory string diagram}). This memory structure can also potentially include several different types of memories, such as working memory, and emotional memory among others.
This modification is inspired by offline and batch reinforcement learning settings, in which learning processes are driven by queries to stored experience rather than by focusing solely on immediate environmental feedback. The introduction of memory enables internal models (e.g. schemas) and cognitive modules to access, revisit, and reorganize past interactions whenever required, supporting learning regimes that are not tied to a single temporal scale or update schedule.
Importantly, this extension does not replace online learning (included in the short-term memory). Instead, the architecture is generalized to accommodate both online and memory-based learning within the same syntactic framework. Some cognitive modules may rely primarily on immediate experience, while others may operate by querying long-term memory to perform retrospective analysis, or may rely on memory related with previous cognitive process executions.
At the syntactic level, memory appears as a persistent type object that mediates access to stored experience. A loop that updates memory also appears, yet the overall feedback structure of the architecture remains unchanged.
Thus, the introduction of memory indirectly relaxes the constraint of strict temporal synchrony between learning and decision-making.

This step causes changes in the syntactic and knowledge parts. 
\paragraph{Syntactic level}
\begin{itemize}
    \item $STypes: \{O_i\},\{D_j\}, E, \{\Theta^s\}, \{\theta^s\}, \mathcal{K}, \mathcal{M}^s$, where $\mathcal{M}^s$ represent the type that will hold the memory 
    \item For $SGen$, we maintain and update the old generators and we also add a new one for managing the aggregation of memories: $AggMem:\mathcal{M^s}\otimes\{\theta^s\} \to \{\theta^s\}$
    \item $SEq$ remains empty.
\end{itemize}

In the knowledge layer we would have the following changes
\paragraph{Knowledge level}
\begin{itemize}
    \item $KTypes: \{\Theta^k\}, \{\theta^k\}, \theta_1^k,\dots,\theta_n^k, \mathcal{M}^k$. We add the new unit of knowledge that handle the memory
    \item For $KGen$ we maintain the same generators, adding the generator that will handle the aggregation of memories $AggMem:\mathcal{M}^k \to \mathcal{M}^k$
    \item $KEq$ remains empty depending on the algebraic rules that want to be added to the generators $Upd$ or $Transform$.
\end{itemize}
In this step the profunctor has some changes, since some changes are made in the knowledge types and in the syntactic types related with knowledge.

\begin{table}[h]
\centering
\renewcommand{\arraystretch}{1.25}
\setlength{\tabcolsep}{18pt}

\begin{tabular}{|c|c|c|c|c|}
\hline
\textbf{Types} & $\{O_i\},\{D_j\},E,\mathcal{K}$ &$\mathcal{M}^s$ &$\{\Theta^s\}$ & $\{\theta^s\}$ \\
\hline
$\{\Theta^k\}$ & \xmark & \xmark & \cmark & \xmark  \\
\hline
$\{\theta^k\}$ & \xmark & \xmark & \xmark & \cmark    \\
\hline
$\theta^k_1$ & \xmark & \xmark & \xmark & \cmark   \\
\hline
$\cdots$ & \xmark & \xmark & \xmark & \cmark  \\
\hline
$\theta^k_{n}$ & \xmark & \xmark & \xmark & \cmark  \\
\hline
$\mathcal{M}^k$ & \xmark & \cmark & \xmark & \xmark  \\
\hline
\end{tabular}
\caption{Visualization as a table of the support of the relational profunctor}
\end{table}
\subsubsection{Step 5: Body-Mind Mediation}
The final relaxation concerns the interpretation of the agent's interfaces with the environment.
Up to this point, we have implicitly assumed that the environment directly provides observations compatible with the factorized internal interfaces. We now remove this assumption by introducing a mediating layer, referred
to as the Body, which transforms raw environmental signals into structured observations suitable for the internal architecture part
(Figure~\ref{fig:body-mind string diagram}). The Mind retains the previously introduced syntactic structure, including schemas, cognitive modules, and memory. This modification does not affect the internal learning dynamics, but clarifies the separation between the semantic interaction with the environment and the syntactic organization of internal processes.

This step only causes changes in the syntactic part. 
\paragraph{Syntactic level}
\begin{itemize}
    \item $STypes: S,A,\{O_i\},\{D_j\}, E, \{\Theta^s\}, \{\theta^s\}, \mathcal{K}, \mathcal{M}$. In this case we reintroduced the types of the state and action, maintaining the types of factorized observations and decisiones.
    
    \item For $SGen$, we maintain the old generators and add two new generators for handling the factorization of states and actions into observation and decisions
    \begin{itemize}
        \item $FactorState: S \to \{O_i\}$.
        \item $FactorAction: \{D_j\} \to A$.
    \end{itemize}
    \item $SEq$ remains empty.
\end{itemize}

\begin{figure}[!ht]
    \centering
    \begin{subfigure}[b]{0.49\textwidth}
        \includegraphics[width=\linewidth]{figures/string_diagrams/CRL_string_diagram.png}
        \caption{Step 0: CRL Baseline}
        \label{fig:factored_crl}    
    \end{subfigure}
    \hfill
    \begin{subfigure}[b]{0.49\textwidth}
        \includegraphics[width=1.1\linewidth]{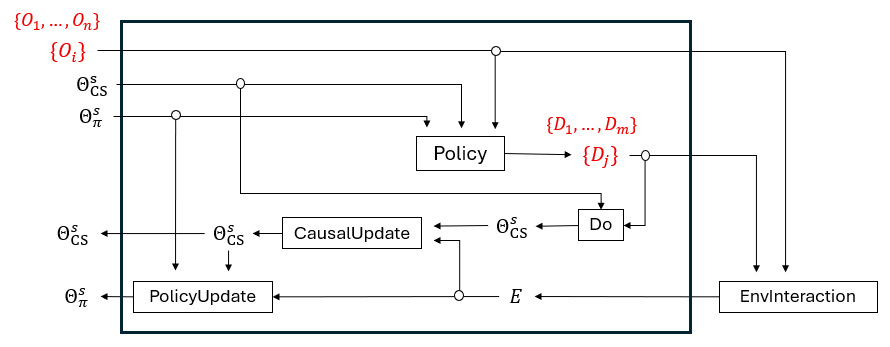}
        \caption{Step 1: Factored representation addition}
        \label{fig:factored_crl}    
    \end{subfigure}
    \hfill
    \begin{subfigure}[b]{0.49\textwidth}
        \includegraphics[width=\linewidth]{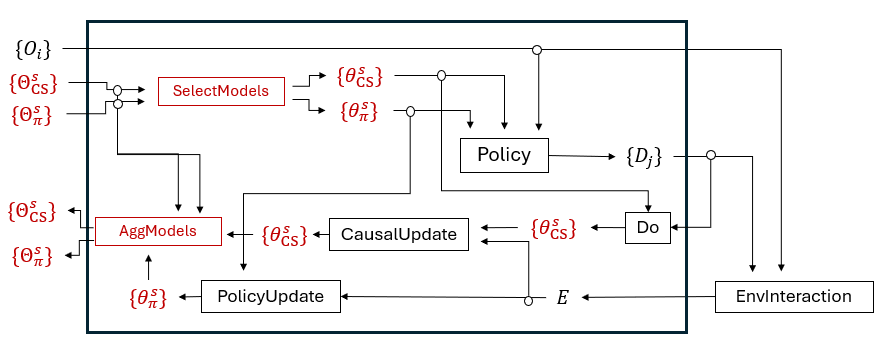}
        \caption{Step 2: Multiple Models addition}
        \label{fig:mmrl_string_diagram}    
    \end{subfigure}
    \hfill
    \begin{subfigure}[b]{0.49\textwidth}
        \includegraphics[width=1.1\linewidth]{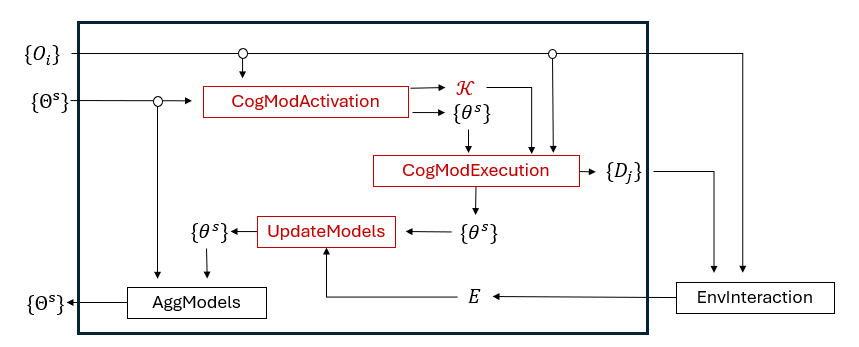}
        \caption{Step 3: Cognitive modules generalization addition}
        \label{fig:cog_mod_string_diagram}    
    \end{subfigure}
    \hfill
    \begin{subfigure}[b]{0.49\textwidth}
        \includegraphics[width=\linewidth]{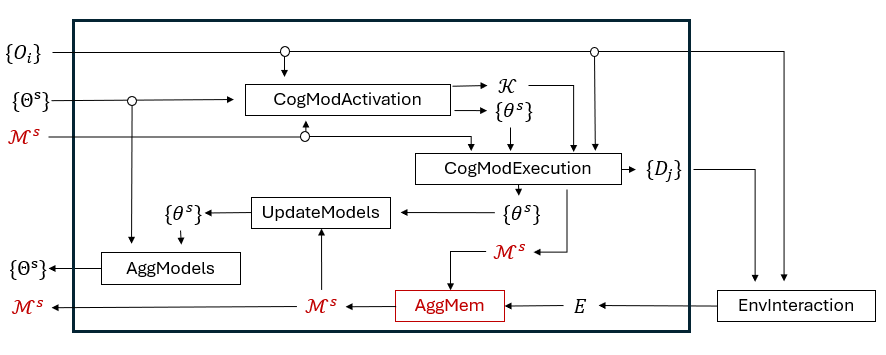}
        \caption{Step 4: Memory addition}
        \label{fig:memory string diagram}    
    \end{subfigure}
    \hfill
    \begin{subfigure}[b]{0.49\textwidth}
        \includegraphics[width=\linewidth]{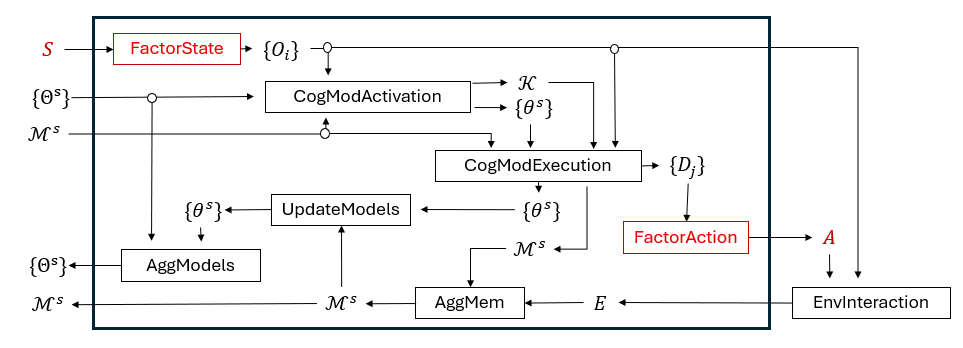}
        \caption{Step 5: Embodiment addition}
        \label{fig:body-mind string diagram}    
    \end{subfigure}
    \label{fig:Step_CRL_to_SBL}
    \caption{Figures of consecutive change steps over the CRL architecture. The changes done in each step compared with the previous one are colored in red.}
\end{figure}

\paragraph{Towards SBL}
The resulting architecture corresponds to the prototype of the SBL Architecture case study developed in the next section. Thus, SBL emerges not as a departure from CRL, but as the minimal architecture obtained by systematically relaxing CRL’s and RL's assumptions about interface structure, model uniqueness, update atomicity, temporal synchrony, and environmental compatibility.

\subsection{Case Study III: Schema-Based Learning Architecture}

Schema-Based Learning (SBL) instantiates a strongly modular cognitive architecture, characterized by an explicit separation between body-level interaction, mind-level processing, and structured internal knowledge dynamics.
Unlike RL and CRL, SBL features multiple interacting cognitive modules, localized knowledge updates, and heterogeneous schema types.
Before specifying the architectural presentation, we briefly clarify the notion of schema within our categorical framework.
Informally, a schema is a structured, reusable unit of knowledge that encapsulates a pattern of interaction between an input and an output. Unlike the monolithic parameter carriers of RL or CRL, schemas are modular, typed knowledge components with their own (intrinsic) parameters. Schemas  may also be instantiated, composed, activated, and updated independently.
Formally, at the architectural level, a schema is represented as an object in the knowledge category $Know_{SBL}$, together with a family of generators describing admissible schema transformations (creation, refinement, combination, contextualization, etc.). Thus, schemas are not primitive algorithms but algebraic knowledge carriers equipped with structured workflows.


\subsubsection{SBL syntactic level.}

The hypergraph category $Syn_{SBL}$ is freely generated by the following
types symbols:
\begin{itemize}
    \item $S$ raw sensory data (body-level sensors),
    \item $A$ raw effector commands (body-level actuators),
    \item $\{O_i\}$ structured factorized observations in mind,
    \item $\{D_j\}$ structured factorized decisions in mind,
    \item $E$ the experience type,
    
    \item $\{\Theta^s\}$: the global set of models carriers (global set of schemas),
    \item $\{\theta^s\}$ the local models carriers,
    \item $\mathcal{M}^s$ global memory state, encompassing multiple memory systems,
    \item $\mathcal{K}$ cognitive module identifiers,
    
\end{itemize}
All tensor expressions over these types are available via the symmetric
monoidal structure.
The primitive architectural generators of $Syn_{SBL}$ include:
\[
\begin{aligned}
&\mathsf{PerceptInst}
    : S \longrightarrow \{O_i\}, \\[0.3em]
&\mathsf{MotorInst}
    : \{D_j\} \longrightarrow A, \\[0.6em]
&\mathsf{CogModActivate}
    : \{O_i\} \otimes \{\Theta^s\} \otimes \mathcal{M}^s
    \longrightarrow \{\theta^s\} \otimes \mathcal{K}, \\[0.6em]
&\mathsf{CogModExec}
    : \{\theta^s\} \otimes \mathcal{K} \otimes \{O_j\} \otimes \mathcal{M}^s
    \longrightarrow \{D_j\} \otimes \{\theta^s\} \otimes \mathcal{M}^s, \\[0.6em]
&\mathsf{UpdateSchemas}
    : \{\theta^s\} \otimes \mathcal{M}^s
    \longrightarrow \{\theta^s\}, \\[0.3em]
&\mathsf{AggMem}
    : \mathcal{M}^s \otimes E
    \longrightarrow \mathcal{M}^s, \\
&\mathsf{AggSchemas}
    : \{\theta^s\} \otimes \{\Theta^s\}
    \longrightarrow \{\Theta^s\},\\
&\mathsf{EnvInteraction}
    : S \otimes A
    \longrightarrow E.
\end{aligned}\]

These generators represent abstract architectural roles, that is:
\begin{itemize}
    \item the transformation between body-level signals and structured
    mental representations via perceptive and motor schemas,
    \item the activation and routing of cognitive modules based on current
    observations, memory state, and internal models,
    \item the execution of cognitive modules, producing decisions as well as
    localized effects on schemas and memory,
    \item the explicit and localized integration of schema and memory updates
    into the global memory state and the internal models.
\end{itemize}

Importantly, learning and other cognitive operations are not represented as a primitive architectural operation. Instead, they are realized internally within specific cognitive modules during $\mathsf{CogModExec}$. They become observable at the
architectural level only through the induced schema and memory updates. This design decouples learning from immediate action execution and allows cognitive modules to operate asynchronously and in parallel, driven by memory contents rather than by online interaction alone.
We remind the reader that knowledge management is not a primitive architectural action but is realized through induced knowledge workflows associated with $CogModExec$.
No additional relations are imposed beyond those required by the axioms of hypergraph categories. In particular, $Syn_{SBL}$ does not prescribe any specific learning algorithm, optimization objective, or control policy.  The resulting architectural pattern $\mathcal{G}_{SBL}$ is fully determined by the string diagram in Figure \ref{fig:SBL string diagram}.

\begin{figure}[!ht]
    \centering
    \includegraphics[width=0.75\linewidth]{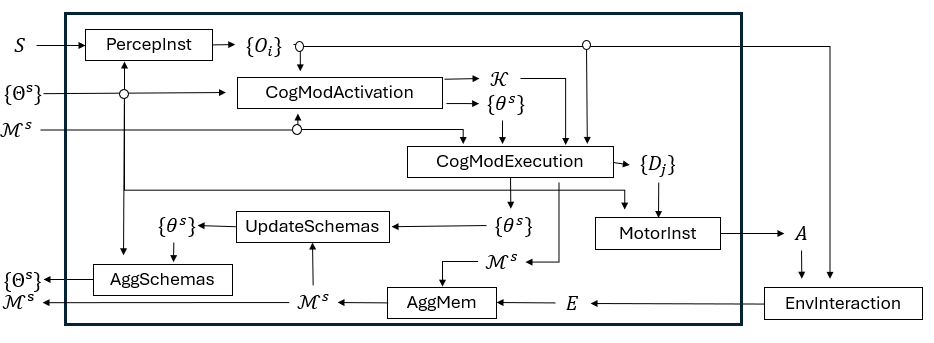}
    \caption{SBL string diagram}
    \label{fig:SBL string diagram}
\end{figure}

\subsubsection{SBL Knowledge level.}

The knowledge architecture $Know_{SBL}$ is freely generated by the knowledge presentation $K_{SBL}=(KTypes_{SBL},KGen_{SBL},KEq_{SBL})$ where:
\begin{itemize}
    \item $KTypes_{SBL} = \{ \Sigma_{Perceptual}, \Sigma_{Motor}, \Sigma_{Predictive}, \Sigma_{Reward}, \Sigma_{Abstract}, \{\Theta^k\}, \mathcal{M}^k \}$. The knowledge architecture $Know_{SBL}$ comprises a heterogeneous family of schemas together with a global memory $\mathcal{M}^k$ and a global carrier of schemas $\{\Theta^k\}$.
    \item Knowledge evolves through a small set of \emph{schema generators}, which define the possible primitive transformations over schemas. Intuitively, schemas can be created from experience, refined or updated through learning, combined into more complex structures, encapsulated into higher–level abstractions, and contextualized depending on the active situation. Additional operators regulate schema selection and the aggregation of information into global memory. These generators can be formalized as typed morphisms describing elementary transformations over schemas and memory states. Therefore, a knowledge workflow in SBL is a morphism in $Know_{SBL}$ obtained by composing the following generators in $KGen_{SBL}$. We denote by $\Sigma_*$ any type of schemas: 
    \begin{itemize}
        \item $SchemaCreate:\mathcal{M}^k \to \Sigma_* \otimes \mathcal{M}^k$
        \item $SchemaDelete:\Sigma_* \otimes \{\Theta^k\} \to \{\Theta^k\}$
        \item $SchemaCombine:\Sigma_* \otimes \Sigma_* \to \Sigma_*$
        \item $SchemaRefine:\Sigma_* \to \Sigma_*$
        \item $SchemaEncap: \Sigma_* \otimes \Sigma_* \to \Sigma_*$
        \item $SchemaCtx:\Sigma_* \to \Sigma_*$
        \item $SchemaUpd:\Sigma_* \to \Sigma_*$
        \item $SchemaSelect:\{\Theta^k\} \to \Sigma$
        \item $AggMem:\mathcal{M}^k \otimes \{\Theta^k\} \to \mathcal{M}^k$
        \item $AggSchemas:\{\Theta^k\} \otimes \Sigma_*  \to \{\Theta^k\}$
    \end{itemize}
    \item $KEq_{SBL}$ is empty by now.
\end{itemize}

\paragraph{Relational Interface for SBL.}
The interaction between architectural workflows and internal knowledge resources in SBL is specified by the relational interface profunctor
\[ \Phi_{SBL} : \mathcal{G}_{SBL}^{\mathrm{op}} \nrightarrow \mathsf{Know}_{SBL}\]

At the object level, this profunctor has non-empty support on the pairs
\[\begin{gathered}
    (\{\theta^s\},\Sigma_{Perceptual}), \quad (\{\theta^s\},\Sigma_{Motor}), \quad (\{\theta^s\},\Sigma_{Pred}), \quad (\{\theta^s\},\Sigma_{Reward}), \\
    (\{\theta^s\},\Sigma_{Abstract}), \quad (\mathcal{M}^s,\mathcal{M}^k), \quad (\{\Theta^s\},\{\Theta^k\})
\end{gathered}\] 
indicating that syntactic diagrams may access and transform both
schema-level and memory-level knowledge resources.
In particular:
\begin{itemize}
    \item cognitive module activation and execution workflows are classified by elements of $\Phi_{SBL}(d,\Sigma)$ whenever they require access to, selection of, or composition over active schemas;

    \item syntactic diagrams that do not activate a schema are not related by the profunctor, ensuring isolation of inactive knowledge.
\end{itemize}

\begin{table}[!ht]
\centering
\renewcommand{\arraystretch}{1.25}
\setlength{\tabcolsep}{18pt}
\begin{tabular}{|c|c|c|c|c|c|c|}
\hline
\textbf{Types} & $S$,$A$,$\{O_i\}$,$\{D_j\}$,$E$ & $\mathcal{K}$ & $\{\theta^s\}$ & $\mathcal{M}^s$ & $\{\Theta^s\}$ \\
\hline
$\Sigma_{Perceptual}$ & \xmark & \xmark & \cmark & \xmark & \xmark  \\
\hline
$\Sigma_{Motor}$ & \xmark & \xmark & \cmark & \xmark & \xmark\\
\hline
$\Sigma_{Pred}$ & \xmark & \xmark & \cmark & \xmark & \xmark\\
\hline
$\Sigma_{Reward}$ & \xmark & \xmark & \cmark & \xmark & \xmark\\
\hline
$\Sigma_{Abstract}$ & \xmark & \xmark & \cmark & \xmark & \xmark\\
\hline
$\mathcal{M}^k$ & \xmark & \xmark & \xmark & \cmark & \xmark\\
\hline
$\{\Theta^k\}$ & \xmark & \xmark & \xmark & \xmark & \cmark \\
\hline

\end{tabular}
\caption{Visualization as a table of the support of the SBL relational interface profunctor}
\end{table}

This profunctorial structure makes explicit that SBL supports modular, localized, and compositional knowledge access patterns, in contrast with the centralized knowledge levels of RL and CRL.

These schema families correspond to the main functional components of the architecture. Namely, 
\begin{itemize}
    \item Perceptual schemas $\Sigma_{Perceptual}$ transform internal or environmental states into observations available to the agent;
    \item Motor schemas $\Sigma_{Motor}$ translate decisions into executable actions in the environment;
    \item Predictive schemas $\Sigma_{Pred}$ learn relations between observations, rewards, and decisions in order to produce predictions about future observations, actions, or rewards;
    \item Reward schemas $\Sigma_{Reward}$ evaluate situations by assigning a real-valued utility signal from observations and decisions, and
    \item Abstract schemas $\Sigma_{Abstract}$ represent higher-level compositional structures over the previous schema types through operations such as combination, encapsulation, or refinement.
\end{itemize}
\subsubsection{SBL Constraints}
As in the previous case studies, this presentation captures the compositional and epistemic organization of SBL, but not yet the full family of architecture-specific admissibility constraints that would characterize which realizations genuinely count as schema-based in the stronger sense. In particular, one expects additional restrictions governing the admissible ontology of schema types, the legality of schema transformations, the modularity and locality of schema updates, and the compatibility conditions between cognitive modules, memory, and schema composition. A first detailed study of this constraint layer is deferred to \cite{RiscosCATSBL}.

\subsubsection{SBL as object of ArchAgents}

Finally the SBL architecture would be described as the following object in ArchAgents:
\[SBL=(\mathcal{G}_{SBL},Know_{SBL},\mathcal{T}_{SBL})\]

Unlike Reinforcement Learning (RL) and Causal Reinforcement Learning (CRL), where persistent knowledge is centralized into a single parametric carrier, the Schema-Based Learning (SBL) architecture exhibits a fundamentally modular treatment of information. Persistent knowledge is not collapsed into a global state but instead resides in a collection of schemas, each constituting an independent informational unit. This collection is not fixed: the agent begins with a set of primitive schemas, and new schemas may be created, transformed, composed, or discarded as the agent evolves.
Schemas are typed functional entities that map between informational spaces and are classified into perceptual schemas, motor schemas, predictive schemas, reward schemas, and abstract schemas constructed compositionally from the former. Although schemas may operate over similar spaces, they remain informationally distinct unless explicitly identified as identical.
Cognitive processing in SBL is organized through cognitive modules, each governed by a fixed workflow. Workflows specify admissible patterns of schema activation, composition, and update during cognitive processes such as perception, decision making, prediction, or learning. While workflows are fixed at the architectural level, they control how schemas are coordinated and transformed during execution. Knowledge management in ${Know}_{SBL}$ is based on these principles:
\begin{itemize}
    \item \textbf{Knowledge Modularity}: schemas are distinct, identifiable units of knowledge.
    \item \textbf{Knowledge Compositionality}: schemas may be composed to form higher-level or abstract schemas.
    \item \textbf{Locality of learning}: learning updates apply only to schemas activated within a workflow.
    \item \textbf{Knowledge Partial isolation}: inactive schemas are not affected by unrelated learning processes.
\end{itemize}

These constraints are architectural rather than algorithmic and hold independently of any particular learning rule or representational choice.

\subsubsection{Properties}

\paragraph{Structural Properties of SBL}

The SBL architecture is characterized by a strongly modular and compositional syntactic organization. Unlike monolithic architectures, the decision-making process is explicitly decomposed into interacting perceptual, predictive, decision, and learning components.

Examples of structural properties include:

\begin{itemize}

\item \textbf{Architectural modularity.}
The perception-action process is decomposed into multiple interacting modules rather than a single global computation. Different schemas may participate in different workflows.

\item \textbf{Compositional workflows.}
Behavior emerges from the composition of schema instantiations and knowledge transformations. Complex behaviors can therefore be constructed from simpler reusable components.

\item \textbf{Workflow flexibility.}
The architecture admits multiple admissible knowledge workflows rather than enforcing a single fixed reasoning pipeline. Different situations may activate different subsets of schemas.

\item \textbf{Local processing.}
Information processing can be restricted to subsets of the architecture. Not all modules need to participate in every decision cycle.

\item \textbf{Hierarchical organization.}
Schemas may be composed to form higher-level schemas, enabling the emergence of hierarchical representations and multi-scale reasoning processes.

\item \textbf{Incremental structural growth.}
The architecture permits the creation, refinement, differentiation, and specialization of schemas during the lifetime of the agent, allowing the structural organization itself to evolve.

\item \textbf{Reusability of components.}
Individual schemas can participate in multiple workflows and contexts without requiring duplication of the underlying knowledge structures.

\item \textbf{Structural extensibility.}
New schemas and schema relations can be incorporated without requiring a complete redesign of the existing architecture.

\end{itemize}

\paragraph{Informational Properties of SBL}

The category ${Know}_{\mathrm{SBL}}$ satisfies several distinctive informational properties arising from its schema-based organization of knowledge.

\begin{itemize}

\item \textbf{Knowledge modularity.}
Knowledge is partitioned into schemas that constitute distinct informational units. Each schema encapsulates a localized portion of the agent's knowledge.

\item \textbf{Modularity by construction.}
Schemas remain explicit entities within the architecture and can be composed, reused, refined, or specialized without being collapsed into a single global representation.

\item \textbf{Non-collapse property.}
The architecture preserves the individuality of schemas. Knowledge acquired in one schema does not automatically overwrite or absorb the contents of other schemas.

\item \textbf{Locality of learning.}
Knowledge updates are applied only to schemas participating in the active workflow, reducing interference between unrelated learning processes.

\item \textbf{Partial isolation of knowledge.}
Inactive schemas remain unaffected by local learning events occurring elsewhere in the knowledge structure, providing a degree of protection against catastrophic interference.

\item \textbf{Factored representation of the environment.}
The Mind operates on factored observation and decision spaces $(O,D)$ rather than monolithic representations. This decomposition allows the agent to organize information into multiple interacting subspaces and facilitates scalable reasoning.

While these spaces are constructed from multiple factors, this factorization does not imply statistical independence, informational orthogonality, or joint sufficiency. Different factors may contain overlapping, correlated, or redundant information. Consequently, the representation is factored but not necessarily factorial.

\item \textbf{Separation between knowledge use and knowledge update.}
The instantiation of schemas during reasoning is conceptually distinct from the processes that modify, refine, or create schemas. Knowledge execution and knowledge evolution therefore constitute separate architectural operations.

\item \textbf{Structural knowledge adaptation.}
The architecture allows not only parameter updates within existing schemas but also structural operations over the knowledge organization itself, including schema creation, differentiation, merging, specialization, abstraction, and refinement.

\item \textbf{Context-sensitive knowledge activation.}
Only a subset of schemas may become active in a given situation, enabling selective access to relevant knowledge without requiring the activation of the complete knowledge base.

\item \textbf{Knowledge reuse and transfer.}
Schemas learned in one context can be instantiated in different workflows and situations, supporting compositional reuse of previously acquired knowledge.

\item \textbf{Hierarchical knowledge organization.}
Knowledge may be organized across multiple levels of abstraction, allowing higher-level schemas to coordinate or summarize lower-level informational structures.

\item \textbf{Potential resistance to catastrophic forgetting.}
Since learning is localized and schemas remain partially isolated, modifications to one schema need not propagate globally through the entire knowledge structure. Although this does not eliminate catastrophic forgetting by itself, it provides architectural mechanisms that may mitigate it.

\item \textbf{Hierarchical and modular overfitting.}
The schema-based organization introduces the possibility that individual schemas, or entire schema hierarchies, become overspecialized to particular contexts. Such overfitting may occur locally even when the overall architecture retains significant generalization capabilities.

\end{itemize}

Persistent knowledge in SBL resides primarily in the evolving collection of schemas and their relations. Memory acts as a supporting mechanism for contextual retrieval, coordination, activation, and workflow construction, while the schemas themselves constitute the main long-term informational substrate of the architecture.

Table \ref{tab:arch_comparison} shows a brief informal comparison of the specific potential properties description for each architecture.

Figure \ref{fig:Representation ArchAgents} shows a representation of the topological space of ArchAgents that encapsulates the possible paths that go from RL to SBL, along with the different steps that we previously detailed. 

\begin{table}[!ht]
\centering
\renewcommand{\arraystretch}{1.25}
\begin{tabular}{lccc}
\hline
\textbf{Architectural dimension} & \textbf{RL} & \textbf{CRL} & \textbf{SBL} \\
\hline
Persistent information structure
    & Undifferentiated carrier $\Theta$
    & $(\Theta_\pi, \Theta_{CS})$
    & Family of schemas $\Sigma$ \\

Feedback structure
    & Single endomorphic loop
    & Two coupled endomorphic loops
    & Multiple decoupled loops \\

Causal structure
    & Not represented
    & Explicit causal model
    & Modular causal schemas \\

Information reuse
    & Not supported
    & Restricted
    & Compositional and reusable \\

Continual learning support
    & Limited
    & Partial
    & Architectural \\

Interface typing
    & Monolithic
    & Weakly typed
    & Strongly typed and factored \\

Body-Mind mediation
    & None
    & None
    & Explicit architectural layer \\

Locality of updates
    & Global
    & Role-based
    & Schema-local \\
\hline
\end{tabular}
\caption{Comparison of agent architectures as objects in $\mathbf{ArchAgents}$, highlighting differences in information and wiring structure, feedback, and modularity.}
\label{tab:arch_comparison}
\end{table}

\subsection{Case Study IV: AIXI Architecture}
AIXI \cite{Hutter2005,Hutter2024,Veness2011} can be represented in our framework as an architecture centered on
history-dependent universal prediction and Bayesian belief updating.
\subsubsection{AIXI syntax layer.}

The hypergraph category $Syn_{AIXI}$ is freely generated by the following
types symbols:
\begin{itemize}
    \item $O$ observations of the agent,
    \item $A$ actions that the agent can perform,
    \item $R$ reward received by the agent,
    \item $H$ historical memory of the agent,
    \item $\varepsilon$ stores the universal predictive kernel that, given the history and an action, generates a distribution over the next observations and rewards $P(\cdot|h,a)$.
\end{itemize}
All tensor expressions over these types are available via the symmetric
monoidal structure.
The primitive architectural generators of $Syn_{AIXI}$ include:
\[
\begin{aligned}
& InitEnvKernel: H \to \varepsilon \\
& Policy: H \otimes \varepsilon \to A \\
& EnvInteraction: H \otimes A \to O \otimes R\\
& UpdateHistory: H \otimes A \otimes O \otimes R \to H
\end{aligned}\]

These generators represent the following operations:
\begin{itemize}
    \item $InitEnvKernel$ is the operator for generating the kernel representing the universal bayesian prior inside $\varepsilon$,
    \item $Policy$ is the operator that generates the next action given the kernel and the history,
    \item $EnvInteraction$ is the operator that represents the interaction with the environment,
    \item $UpdateHistory$ is the operator for updating the history after interacting with the environment.
\end{itemize}

The syntax diagram $\mathcal{G}_{AIXI}$ in figure \ref{fig:AIXI string diagram} corresponds to initializing a universal predictive mixture, selecting an action from the current history and predictive model, interacting with the environment, and updating the history after each step.
The resulting workflow is therefore a closed sequential loop in which decision-making is mediated by a predictive model over future observation-reward continuations.

\begin{figure}[!ht]
    \centering
    \includegraphics[width=0.75\linewidth]{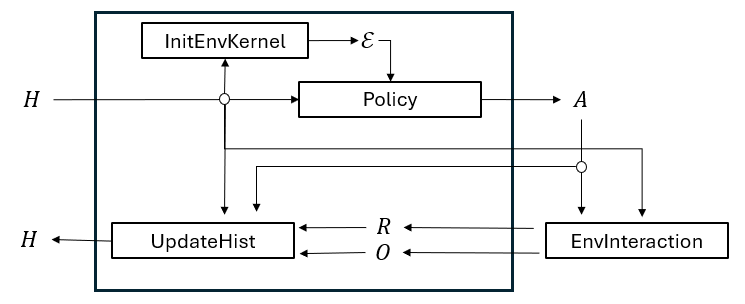}
    \caption{AIXI string diagram}
    \label{fig:AIXI string diagram}
\end{figure}

\subsubsection{AIXI knowledge layer.}

The knowledge architecture $Know_{AIXI}$ is freely generated by the knowledge presentation $K_{AIXI}=(KTypes_{AIXI},KGen_{AIXI},KEq_{AIXI})$ where. At the knowledge level, the architecture distinguishes between memory, hypothesis space, beliefs over hypotheses, and the induced predictive kernel:
\begin{itemize}
    \item $KTypes_{AIXI} = \{ M, K, \{E\}, W\}$, were $M$ is the knowledge unit representing the memory storing the history of the agent, $K$ is the knowledge unit representing the universal kernel, $\{E\}$ is the set of hypothesis about the potential next environments and $W$ is the unit of knowledge that represent the beliefs, that is, the weights over the possible set of environments.
    \item $KGen_{AIXI}$ is composed by the following transformations:
    \begin{itemize}
        \item $AggHist:M \to M$, is the operator that updates the history memory,
        \item $GenHypSpace: I \to \{E\}$, is the operator that generates the universal set of possible environments,
        \item $UniversalPrior:\{E\} \to W$, generates the distribution of beliefs over the environments hypothesis. 
        \item $PosteriorUpd: M \otimes W \to W$,
        \item $KernelMixing:\{E\}\otimes W \to K$, generates the universal predictive kernel from the beliefs and the hypothesis.
    \end{itemize}
    \item $KEq_{AIXI}$ is empty
\end{itemize}

This makes explicit that AIXI is not merely a policy loop, but an architecture whose behavior is mediated by an internal Bayesian epistemic layer. This architecture also shows that knowledge layer may contain internal epistemic structures that are not directly exposed in the syntax layer but that influence the agent behaviour through derived knowledge objects ($\{E\}, W$).

\begin{figure}
    \centering
    \includegraphics[width=0.8\linewidth]{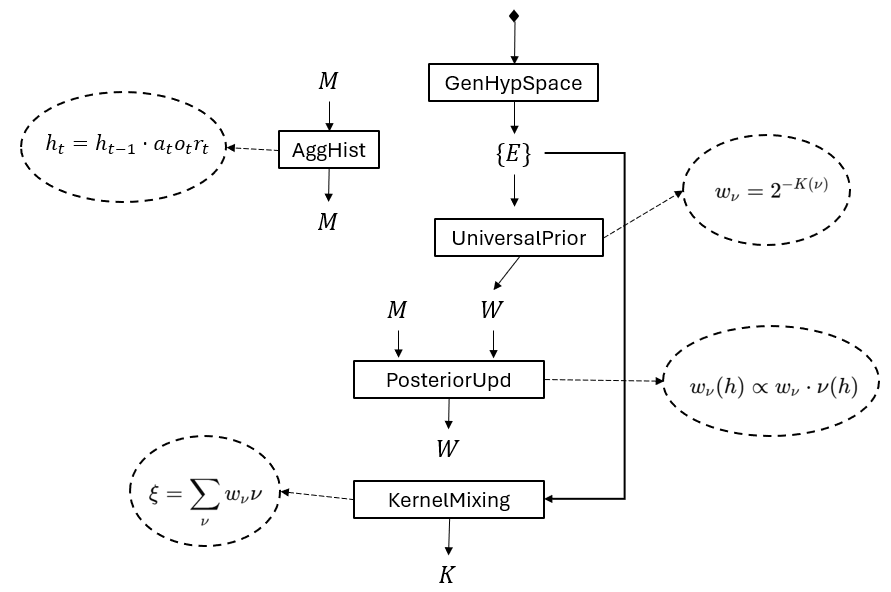}
    \caption{Theoretical AIXI Workflow knowledge and implementation of each operator}
    \label{fig:Theorical AIXI}
\end{figure}

\subsubsection{Relational Interface for AIXI.}
The interaction between the syntactic workflows and the internal knowledge resources in AIXI is specified by the architectural profunctor
\[ \Phi_{AIXI} : \mathcal{G}_{AIXI}^{\mathrm{op}} \nrightarrow \mathsf{Know}_{AIXI}\]

At the object level, this profunctor has non-empty support on the pairs
\[ (H,M), \qquad (\varepsilon,K),\]
reflecting that syntactic history is grounded in internal memory and that the predictive kernel used by the policy is generated from internal epistemic structure.
\begin{table}[h]
    \centering
    \renewcommand{\arraystretch}{1.25}
    \setlength{\tabcolsep}{18pt}
    
    \begin{tabular}{|c|c|c|c|c|c|}
    \hline
    \textbf{Types} & $O$ & $A$ & $R$ & $H$ & $\varepsilon$ \\
    \hline
    $M$ & \xmark & \xmark & \xmark & \cmark & \xmark\\
    \hline
    $K$ & \xmark & \xmark & \xmark & \xmark & \cmark\\
    \hline
    $\{E\}$ & \xmark & \xmark & \xmark & \xmark & \xmark \\
    \hline
    $W$ & \xmark & \xmark & \xmark & \xmark & \xmark \\
    \hline
    
    \end{tabular}
    \caption{Visualization as a table of the support of the AIXI relational interface profunctor}
\end{table}

\subsubsection{AIXI constraints}
This first presentation only captures the compositional skeleton of AIXI, not yet the full family of architectural laws that distinguish it from a generic Bayesian predictive agent.
In particular, standard AIXI carries stronger restrictions, namely:
\begin{enumerate}
    \item the hypothesis space should be admissibly interpretable as a universal class of computable or semi-computable environments, 
    \item the prior over hypotheses should satisfy a universal simplicity-weighting principle (Solomonoff/ Kolmogorov), 
    \item belief updates must be posterior-consistent with the accumulated interaction history,
    \item the predictive kernel must be induced as a Bayesian mixture over the weighted hypothesis class and 
    \item policy selection should be prospectively guided by expected future reward under that universal predictive mixture, typically through an expectimax-style sequential evaluation.
\end{enumerate}
 
Moreover, unlike standard Markovian RL, AIXI is fundamentally history-based, so its informational assumptions concern the sufficiency and admissibility of history representations rather than state-transition factorization.
These restrictions are essential for distinguishing AIXI from broader classes of model-based or Bayesian agents, but in the present paper they are not yet internalized as explicit architectural equations or constraint objects.

\subsubsection{AIXI as object of ArchAgents}

Finally, the AIXI architecture would be described as the following object in ArchAgents:
\[AIXI=(\mathcal{G}_{AIXI},Know_{AIXI},\Phi_{AIXI})\]

\subsubsection{Properties of the AIXI Architecture}

The AIXI architecture provides an interesting contrast with the previous architectures. While it possesses an extremely rich informational layer, its syntactic organization is comparatively simple and highly centralized. This distinction becomes explicit in the present framework.

\paragraph{Structural Properties}

Examples of structural properties that characterize the AIXI architecture include:

\begin{itemize}

\item \textbf{Centralized decision structure.}
The architecture is organized around a single global decision-making loop in which perception, prediction, planning and action selection are tightly coupled. Unlike modular architectures such as SBL, no explicit decomposition into specialized subsystems is required.

\item \textbf{Non-factorized syntactic workflow.}
The syntax diagram is essentially monolithic. Predictions, utility evaluation and action selection are performed within a unified expectimax computation rather than through independent compositional modules.

\item \textbf{Absence of explicit hierarchical decomposition.}
The architecture does not impose intermediate abstraction layers, hierarchical controllers, or compositional planning structures. All reasoning occurs within a single decision process.

\item \textbf{Global dependence of decisions.}
Action selection depends on the complete interaction history rather than on locally isolated syntactic components. Consequently, architectural modifications tend to affect the entire decision process.

\item \textbf{Low modularity.}
The architecture lacks explicit mechanisms for independent specialization, reuse, or recombination of syntactic subcomponents. This property contrasts strongly with architectures based on schemas, causal modules, or factorized world models.

\end{itemize}

\paragraph{Informational Properties}

The informational layer of AIXI is substantially richer and constitutes its defining characteristic.

\begin{itemize}

\item \textbf{Universal model class.}
The knowledge layer admits a class of computable environment hypotheses that is intended to be universal. In principle, any computable environment can be represented within the hypothesis space.

\item \textbf{Bayesian knowledge aggregation.}
Knowledge is represented through a Bayesian mixture over environment hypotheses. Learning corresponds to updating posterior weights after observing new interaction histories.

\item \textbf{Algorithmic prior structure.}
The architecture imposes a preference toward simpler hypotheses through a Solomonoff-style prior, introducing an explicit complexity bias into the organization of knowledge.

\item \textbf{Predictive knowledge orientation.}
Knowledge units are primarily organized around prediction of future observations and rewards rather than around causal, symbolic, or schema-based representations.

\item \textbf{Utility-guided inference.}
Knowledge transformations are coupled to long-horizon utility maximization. The informational layer is not merely predictive but directly supports expectimax planning over future trajectories.

\item \textbf{Global knowledge integration.}
All available evidence contributes to a single posterior distribution. The architecture does not require explicit partitioning of knowledge into independent modules or local models.

\end{itemize}

From the perspective of the comparative framework, AIXI can therefore be viewed as an architecture with relatively low structural modularity but extremely high informational expressivity. Its distinctive properties arise primarily from strong constraints imposed on the knowledge layer rather than from sophisticated syntactic organization.

%% file: Parts/7_Agents_implementations.tex
\section{Agents Implementations Sketches}
\subsection{A Concrete Implementation Sketch: Tabular RL in $\mathbf{Meas}$}
\label{app:RL_agent}
In this section, we present a concrete implementation of the Reinforcement Learning (RL) architecture within the proposed framework. The goal is to illustrate how a standard tabular $Q$-learning agent can be understood as a semantic realization of the abstract architecture.
We emphasize that this example is intentionally simple. Its purpose is not to showcase the expressive power of the framework, but to demonstrate how the formal notions of syntax, knowledge, and constraint satisfaction interact in a familiar setting.

\subsubsection{Choice of semantic category}

We work in the symmetric monoidal category $\mathbf{Meas}$, whose objects are measurable spaces and whose morphisms are measurable maps. The monoidal product is given by the cartesian product.
All spaces in this construction are finite or finite-dimensional, and are equipped with their discrete or Borel $\sigma$-algebras. In particular:
\begin{itemize}
    \item the state space $S$ and action space $A$ are finite measurable spaces;
    \item the parameter and knowledge spaces are finite-dimensional Euclidean spaces;
    \item all maps defined below are measurable.
\end{itemize}

Stochastic components (such as environment interaction) are described via Markov kernels. Formally, these may be interpreted either as morphisms in the Kleisli category of the Giry monad or extensionally via measurable mappings into spaces of probability measures. For simplicity, we use the kernel notation.

\subsubsection{Syntactic realization}

We now define the interpretation
\[
I : \mathcal{G}_{RL} \to \mathbf{Meas}.
\]

\paragraph{Types.}
\begin{itemize}
    \item $I(S)$: a finite measurable state space.
    \item $I(A)$: a finite measurable action space.
    \item $I(E) := S \times A \times R \times S$, where $R \subseteq \mathbb{R}$ is a measurable reward space.
    \item $I(\Theta^s) := \mathbb{R}^{S \times A}$.
\end{itemize}

An element $\theta \in I(\Theta^s)$ is a table assigning values:
\[
Q_\theta(s,a) \in \mathbb{R}.
\]

\paragraph{Generators.}

\begin{itemize}
    \item \textbf{Policy.}
    \[
    I(\mathrm{Policy}) : S \times \Theta^s \to A,
    \quad
    I(\mathrm{Policy})(s,\theta) \in \arg\max_{a \in A} Q_\theta(s,a).
    \]

    Since $A$ is finite, a measurable selection exists.

    \item \textbf{Environment interaction.}
    \[
    I(\mathrm{EnvInteraction}) : S \times A \rightsquigarrow E,
    \]
    defined by a Markov kernel
    \[
    P(r,s' \mid s,a),
    \]
    so that
    \[
    (s,a) \mapsto (s,a,r,s').
    \]

    \item \textbf{Update.}
    \[
    I(\mathrm{Update}) : \Theta^s \times E \to \Theta^s,
    \]
    defined by the tabular $Q$-learning rule. For $e=(s,a,r,s')$:
    \[
    Q'(x,u)=
    \begin{cases}
    Q(s,a) + \alpha \left(r + \gamma \max_{u'} Q(s',u') - Q(s,a)\right),
    & (x,u)=(s,a),\\[4pt]
    Q(x,u), & \text{otherwise.}
    \end{cases}
    \]
\end{itemize}

\subsubsection{Knowledge realization}

We now define
\[
J : Know_{RL} \to \mathbf{Meas}.
\]

\paragraph{Types.}

\begin{itemize}
    \item $J(\Theta^k) := \mathbb{R}^{S \times A}$,
    interpreted as the space of action-value functions. This interpretation is valid since S and A are discrete spaces, and therefore $\mathbb{R}^{S \times A} = \{Q:S\times A \to \mathbb{R}\}$
    \item $J(E^k) := S \times A \times R \times S$.
\end{itemize}

\paragraph{Knowledge update.}

\[
J(\mathrm{Upd}) : \Theta^k \times E^k \to \Theta^k
\]

is defined by the same TD update rule:
\[
Q'(s,a)= Q(s,a) + \alpha (r + \gamma \max_{u} Q(s',u)-Q(s,a)).
\]

\begin{remark}[Degenerate syntax--knowledge alignment]
In this tabular realization,
\[
I(\Theta^s) \cong J(\Theta^k).
\]
Thus, the syntactic parameter carrier and the knowledge carrier coincide extensionally.

This reflects a structural degeneracy of tabular RL, where representation and represented knowledge collapse into the same object. In more expressive realizations (e.g., neural agents), this identification does not necessarily hold.
\end{remark}

\subsubsection{Syntax--knowledge compatibility}

The relational interface $\Phi_{RL}$ is supported on $(\Theta^s,\Theta^k)$.
We define the correspondence
\[
R : I(\Theta^s) \to J(\Theta^k)
\]
as the identity:
\[
R(\theta) = Q_\theta.
\]

The only knowledge-compatible syntactic generator is $\mathrm{Update}$.
Let us define:
\[
k_{\mathrm{Update}} := \mathrm{Upd} : \Theta^k \times E^k \to \Theta^k.
\]

Then, the compatibility condition of Definition 4.1.2 is satisfied:
\[
R(I(\mathrm{Update})(\theta,e))
=
J(\mathrm{Upd})(R(\theta),e).
\]

Thus, $I(\mathrm{Update})$ is induced by $J(\mathrm{Upd})$.

\subsubsection{Satisfaction of the constraint layer}

We now verify that this realization satisfies the constraint layer.
Each constraint is interpreted as a restriction on realizations.

\paragraph{(1) Value representability.}

\[
\rho^{RL}_{val}
=
(\Theta^k, \mathbb{R}^{S \times A}).
\]

This is satisfied by construction.

\paragraph{(2) Bellman consistency.}

Let $\mathcal{B}$ be the Bellman optimality operator:
\[
(\mathcal{B}Q)(s,a)
=
\mathbb{E}[r + \gamma \max_{a'} Q(s',a') \mid s,a].
\]

Define:
\[
\rho^{RL}_{Bell}
=
(\Theta^k, \{ Q \mid Q = \mathcal{B}Q \}).
\]

The update $J(\mathrm{Upd})$ defines a stochastic approximation process converging to this set under standard assumptions. Hence, the realization is Bellman-compatible.

\paragraph{(3) Policy--value compatibility.}

\[
\rho^{RL}_{pol}
=
(\mathrm{Policy}, \{ \pi \mid \pi(s) \in \arg\max_a Q(s,a) \}).
\]

This holds by construction of $I(\mathrm{Policy})$.

\paragraph{(4) Markov admissibility.}

\[
\rho^{RL}_{Markov}
=
(\mathrm{EnvInteraction}, \{ P \mid P(e \mid history) = P(e \mid s,a) \}).
\]

This is satisfied by the definition of the transition kernel.

\subsubsection{Conclusion}

We obtain a realization
\[
F^{RL}_{tab} = (I,J)
\]
in $\mathbf{Meas}$ which:

\begin{itemize}
    \item interprets the syntactic structure as a concrete learning system,
    \item interprets the knowledge layer as action-value functions,
    \item satisfies the syntax--knowledge compatibility condition,
    \item and fulfills the RL constraint layer.
\end{itemize}

Thus,
\[
F^{RL}_{tab} \models R_{RL},
\]
and constitutes an admissible agent of the enriched RL architecture.

This example illustrates how a classical tabular $Q$-learning agent fits naturally within the proposed framework, while also highlighting that in simple cases the distinction between syntax and knowledge may collapse.
Conceptually, this example illustrates the role of the framework very clearly. The RL architecture itself does not specify $Q$-learning, nor argmax policy, nor a tabular representation. It only specifies the admissible compositional organization and the relevant architectural commitments. The concrete choices made here belong to the implementation layer.
This distinction is precisely what allows the same architecture to admit many different agents: tabular agents, neural agents, model-based agents, actor-critic agents, and so on. All of them potentially realizing the same abstract RL architectural object under different semantic interpretations.

\subsubsection{Neural parameterization of the syntactic carrier}

We briefly describe how the previous tabular realization can be modified by replacing the syntactic parameter carrier with a neural network, while leaving the knowledge layer unchanged.

\paragraph{Syntactic carrier.}

We define $I(\Theta^s)$ as the parameter space of a three-layer feedforward neural network with one hidden layer of width $k$.

Let the input be a representation of $(s,a) \in S \times A$ in $\mathbb{R}^d$. The network is defined by:
\[
\begin{aligned}
h &= \sigma(W_1 x + b_1), \quad W_1 \in \mathbb{R}^{k \times d}, \; b_1 \in \mathbb{R}^k, \\
Q_\theta(s,a) &= W_2 h + b_2, \quad W_2 \in \mathbb{R}^{1 \times k}, \; b_2 \in \mathbb{R},
\end{aligned}
\]
where $\sigma$ is a fixed activation function (e.g., ReLU), and $\theta = (W_1,b_1,W_2,b_2)$.

Thus:
\[
I(\Theta^s) := \mathbb{R}^{k \times d} \times \mathbb{R}^k \times \mathbb{R}^{1 \times k} \times \mathbb{R}.
\]

\paragraph{Induced action-value function.}

Each $\theta \in I(\Theta^s)$ induces:
\[
Q_\theta : S \times A \to \mathbb{R},
\quad
Q_\theta(s,a) := \mathrm{NN}_\theta(s,a).
\]

\paragraph{Update.}

Given an experience $e=(s,a,r,s')$, define the TD target:
\[
y = r + \gamma \max_{u} Q_\theta(s',u),
\]
and the loss:
\[
\mathcal{L}(\theta;e) = \big(Q_\theta(s,a) - y\big)^2.
\]

The syntactic update is given by stochastic gradient descent:
\[
I(\mathrm{Update})(\theta,e)
=
\theta - \alpha \nabla_\theta \mathcal{L}(\theta;e).
\]

\paragraph{Knowledge layer.}

The knowledge realization remains unchanged:
\[
J(\Theta^k) = \mathbb{R}^{S \times A},
\quad
J(\mathrm{Upd}) \text{ given by the TD rule.}
\]

Thus, the intended transformation at the level of action-value functions is identical to the tabular case.

\paragraph{Interpretation.}

The key difference lies in the implementation mechanism. In the tabular case, the syntactic update directly realizes the knowledge-level transformation. In the neural case, the update operates on parameters and only indirectly affects the represented function.

This illustrates that different syntactic realizations may implement the same knowledge-level transformation through substantially different mechanisms, while preserving the overall architectural structure.

\begin{remark}[On weakened compatibility and neural realizations]
In the neural setting, the compatibility condition of Definition 4.1.2 is generally not satisfied strictly. In particular, although the syntactic update is designed to approximate the knowledge-level transformation, one typically has
\[
R(I(\mathrm{Update})(\theta,e)) \neq J(\mathrm{Upd})(R(\theta),e).
\]

Nevertheless, there exists a corresponding knowledge-level generator
\[
k_{\mathrm{Update}} = \mathrm{Upd} : \Theta^k \times E^k \to \Theta^k,
\]
which captures the intended transformation at the level of action-value functions.

This suggests a weaker notion of compatibility, in which admissibility is determined by the existence of a corresponding knowledge-level transformation explaining the syntactic behavior, rather than by exact equality of realizations.

Formalizing such a weakened notion of admissibility, possibly in terms of approximation or constraint satisfaction, remains an open direction and highlights a limitation of the current framework when applied to function approximation settings.
\end{remark}

%% file: Parts/Work_in_Progress.tex
\section{Work in Progress and Future Research Directions}
\label{sec:future}

The present work should be understood as an initial organizing step towards a category-theoretic framework for the comparative analysis of AGI architectures. In particular, Sections \ref{sect:ArchAgentCat} -\ref{sect:properties} introduce the core architectural, implementation and property-based layers. The following tasks extend these foundations in a controlled and incremental manner. We organize them according to increasing temporal horizon and conceptual scope.

\subsection{Very Short-Term Extensions}

\begin{itemize}
    
    \item \textbf{Development of Theoretical Results and Structural Theorems} The current paper introduces the categorical machinery required to compare AGI architectures, but does not yet fully exploit this machinery to derive non-trivial mathematical results. Therefore, a primary short-term objective is the development and proof of the structural theorems outlined in Appendix \ref{app:Foundational theorems}. These include results concerning architectural equivalence, admissibility preservation, realizability, expressivity hierarchies, and architecture–agent correspondences. Establishing such results is essential for demonstrating that the framework is not merely a descriptive language, but a genuine mathematical theory capable of deriving formal guarantees and comparative insights about different AGI architectures.
    
    \item \textbf{Better formalization of the relational interface}
    The syntax--knowledge interface is modeled via profunctors, which may be overly expressive and lack canonical instantiations; we are investigating simpler alternatives (e.g., Boolean relations or optics). 

    \item \textbf{Explicit specification of architectural morphisms.}
    The informal notion of translation between architectures introduced in Section \ref{sect:ArchAgentCat} can be refined by explicitly stating which generators, wiring constraints and informational access patterns are preserved or weakened by morphisms. Although morphisms in $\mathbf{ArchAgents}$ capture structural transformations, there is no theory yet describing property preservation or transfer across architectures. Developing this connection is essential for turning the framework into a predictive and comparative tool.

    \item \textbf{Deeper analysis of architectural properties.}
    Sections \ref{sect:properties} and \ref{sect:CaseStudies} motivate the distinction between structural, informational and semantic properties. A short-term task is to further formalize these classes of properties and their logical dependencies.

    \item \textbf{Property inheritance from architectures to agents.}
    As suggested in Section \ref{sect:CaseStudies}, structural or informational properties of architectures can be interpreted as hypotheses inherited by agents implementing those architectures.
    
\end{itemize}

\subsection{Short-Term Extensions}

\begin{itemize}
    
    \item \textbf{A category of properties.}
    Building on Section \ref{sect:properties}, properties can be organized into a category or preorder, where $P \le_A Q$ denotes abstraction or implication between properties relative to an architecture $A$.

    \item \textbf{Additional illustrative examples.}
    The architectural examples of Section \ref{sect:CaseStudies} can be extended to include HRL, MMRL, and active inference, as well as multiple agents sharing the same architecture but differing in their implementations.

\end{itemize}

\subsection{Mid-Term Extensions}

These directions introduce new categorical layers that connect the theoretical framework with empirical evaluation.

\begin{itemize}

    \item \textbf{Learning operators at the informational level.}
    Building on the treatment of information flow in Section \ref{sect:ArchAgentCat}, learning and update mechanisms can be formalized as structured endomorphisms (or optics/lenses) on internal state objects, making explicit the distinction between architectural learning capacity and concrete learning algorithms.

    \item \textbf{Environment and World abstraction.}

    While the this paper focuses on studying agents and architectures theoretically, it is also necessary/essential to formalize the empirical evaluation and comparison of agents. This requires an explicit notion of environments and world. A new categorical layer \textbf{World} can be introduced to model agent-environment interactions and controlled interventions, and will also enable the use of measurement instruments or realization of experiments that let users measure the performance of agents, or even alter the environments.

    This involves the following formalizations:
    \begin{itemize}
        \item \textbf{Categorical modeling of interactions.}: Objects of \textbf{World} can be defined as triples $(w,a,e)$ representing concrete interaction instances, with morphisms capturing transformations or evolution of interaction dynamics, extending the agent-environment coupling discussed in Section \ref{sect:ArchAgentCat}.
        
        \item \textbf{Higher-categorical interaction structure.}: World could also be seen as n-category where 0-morphisms are $(w,a,e)$, and 1-morphisms correspond to the interactions between agents and environments $(w,a,e)\to (w',a',e')$, etc. We can even have n-morphisms representing the evolution of a solely agent or the interactions between different agents: $(w^1 ,a_1,e) \to (w^1,a_1,e') \to (w^2,a'_2,e')\to \dots$. 
    
        Iterated or parallel interactions motivate higher morphisms representing
        evolving agents, environments or multi-agent scenarios.

        \item \textbf{Empirical measurement and testing.}:
        Building on the distinction between theoretical properties and empirical
        evaluation introduced in Section \ref{sect:properties}, experimental
        descriptions and testers can be defined to map interaction data to empirical scores, clearly separating logical comparison from benchmark-based evaluation.
        
        To empirically evaluate agents and measure the degree of its properties, we need a formalization that enable to describe the settings of the experiments in the world, how the environment will work, the length $n$ of the maximum number of interaction records, the external influence in the behaviour of agents. 
        It is also needed an object that behaves like a tester that, given the data from an experiment, maps it to the space of scores.
        Scores would be empirical evaluations of an agent's properties, that depend on the environment, dynamics and definition of the experiment. It does not depend on the environments, nor the experiments.

        \item \textbf{Formalization of environments.}
        The formalization of the World implies the existence of environments. Environments can be structured analogously to agents, potentially introducing architectural descriptions (\textbf{ArchEnv}), implementations (\textbf{Env}) and environment-level properties (pomdp-ness, causal smoothness, latent structure variability, stochasticity, etc), extending the symmetry suggested throughout the paper.

    \end{itemize}

    \item \textbf{Environment-dependent properties.}
    Section \ref{sect:properties} already hints at the role of environmental assumptions. An extension is to explicitly characterize properties whose validity depends on hypotheses about the environment or the agent-environment coupling.

    \item \textbf{Functorial comparison with related frameworks.}
    Functorial comparison with related frameworks: The relationship between $\mathbf{ArchAgents}$ and other categorical approaches to agency and control, such as categorical cybernetics, operadic wiring diagrams or optics-based formulations of reinforcement learning, can be made explicit via functors or adjunctions. This would clarify the scope and limitations of the proposed framework.

    The comparative perspective introduced in Section \ref{sect:CaseStudies} can be extended by making explicit the functorial relationships between $\mathbf{ArchAgents}$ and other categorical approaches to agency and control, such as categorical cybernetics or optics-based RL.
\end{itemize}

\subsection{Long-Term Extensions}

These directions correspond to the full maturation of the framework.

\begin{itemize}
    \item \textbf{Architectural design principles for AGI.}
    Architectural design principles for AGI: Ultimately, the framework aims not only to compare existing architectures, but also to identify structural principles necessary or sufficient for general intelligence. At this stage, the framework would function as a genuine theory of AGI architectures rather than a purely descriptive taxonomy.
    
    Synthesizing the results of Sections \ref{sect:ArchAgentCat}-\ref{sect:CaseStudies}, the framework aims to identify structural principles that are necessary or sufficient for general intelligence.

    \item \textbf{A unifying comparative theory of AGI architectures.}
    The long-term objective is to organize architectures within a category or lattice equipped with invariants and comparison principles, enabling rigorous statements about subsumption and equivalence.

    \item \textbf{Architectural expressivity.}
    Motivated by the comparative goals stated in Section \ref{sect:introduction} and Section \ref{sect:CaseStudies} a central open problem is the comparison of architectures in terms of the class of agents or behaviors they can express. This suggests defining suitable notions of simulation, expressivity or preorder relations between architectures, and studying whether certain architectures are universal or conservative extensions of others.

    \item \textbf{Collaborative experimental platform.}
    To operationalize the framework, a collaborative platform can be developed where architectures, agents and properties are shared, compared and evaluated through controlled experiments in common environments.
    
\end{itemize}

%% file: Parts/Conclusion.tex
\section{Conclusions}

\paragraph{Conclusions.}
Advancing towards Artificial General Intelligence requires not only improved algorithms, but a principled understanding of the structural organization of intelligent systems. In this work, we have proposed a category-theoretic framework that separates syntax, knowledge, and constraints, enabling a compositional and architecture-level analysis of agent designs.

This paper proposes an AGI comparative framework based on Category Theory. Rather than viewing an architecture as a concrete algorithm, we treat it as a structured theory of computational interconnections: a specification of admissible interfaces, primitive components, and compositional wiring patterns. This shifts the focus from implementation details to structural organization. Crucially, we distinguish two layers that are often conflated: on one hand, the \emph{syntactic architecture layer}, which governs how modules may be composed, and the \emph{knowledge management layer}, on the other hand, which governs how information is represented, transformed, and reused within that structure. Architectures may exhibit similar module flows while differing fundamentally in how they encapsulate models, aggregate evidence, or modularize experience. Thus, making this separation explicit is essential for identifying genuine structural differences and formally characterizing architectural properties.
Moreover, we add the constraint layer, which enables us to take into consideration the relationship between syntax and knowledge, and the fact that architectures also impose some theoretical axioms or constraints over the possible implementations.

\begin{wrapfigure}{r}{0.5\textwidth}
    \includegraphics[width=\linewidth]{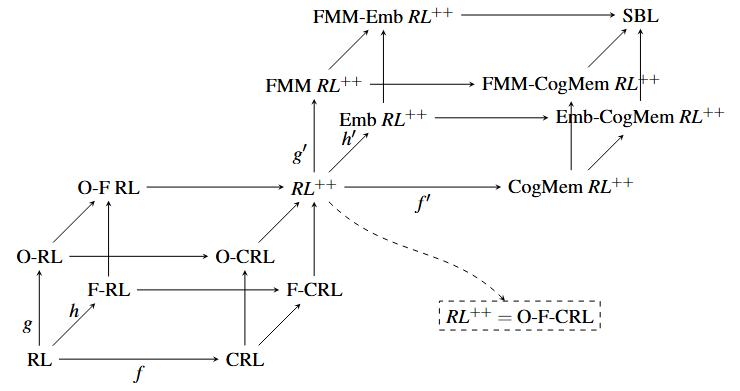}
    \caption[]{Example paths in the topological space of architectures.\footnotemark }
    \label{fig:Representation ArchAgents}
\end{wrapfigure}

\footnotetext{Morphisms denote architectural transformations composed in two stages: first among $f,g,h$, then among $f',g',h'$. Prefixes indicate architectural variants ($O$ offline, $F$ factorized, $FMM$ factored multimodel, $CogMem$ cognitive modules with memory augmented and $Emb$ embodied).}

The framework provides a first step towards a unifying formalism in which diverse AGI paradigms can be expressed and compared within a common mathematical setting. Crucially, it suggests that architectures themselves can be studied as elements of a structured space, where morphisms capture systematic transformations between them. As illustrated in Figure~\ref{fig:Representation ArchAgents}, this perspective opens the possibility of reasoning about paths, equivalences, and transformations in the space of architectures, rather than analyzing each paradigm in isolation.
We argue that such a unifying, category-theoretic perspective is not merely descriptive, but necessary for progressing towards AGI, as it enables the identification of structural principles, transfer of insights across paradigms, and a systematic exploration of the design space of intelligent agents.

The broader research program underlying this work seeks to provide a unified formal foundation for AGI systems, integrating architectural structure, informational organization, semantic/agent realization, agent–environment interaction, behavioural development over time, and the empirical evaluation of properties. This framework is also intended to support the definition of architectural properties, both syntactic and informational, as well as semantic properties of agents and their assessment in environments with explicitly characterized features.
We claim that Category Theory and AGI will have a very \textit{symbiotic relation}. That is, AGI will immensely benefit from a Category-theoretic general formalization, while, at the same time, Category Theory will become the front line mathematical paradigm thanks to the extremely wide interest in AGI.\\

\section*{Acknowledgments}
We would like to acknowledge Christoph von der Malsburg for his very inspiring conversations about SBL and Brain Theory in general. FC would also like to acknowledge Paul S. Rosenbloom for very enlightening discussions on Cognitive Architectures and Wei-Min Shen for  inspiring conversations on autonomous learning in agents and robots. 

\paragraph{Funding}
This research was supported by Cognodata Consulting SL.

\paragraph{Conflicts of Interest}
Pablo de los Riscos was employed by the company Cognodata Consulting

%% file: Parts/Framework_Theorems.tex
\section{Framework Theorems and Results}
\label{app:Foundational theorems}
\subsection{Foundational Structural Results}

\begin{theorem}[Constraint Preservation under Admissibility-Preserving Architecture Morphisms]
Let
\[F : A \to B\]
be an architecture morphism between agent architectures.

Assume that:
\begin{enumerate}
    \item $F$ preserves distinguished constraints, i.e.
    \[F^\sharp(R_A) \subseteq R_B,\]
    
    \item the induced scope translation functor
    \[F_C : C_A \to C_B\]
    preserves admissible reindexing of constraints.
\end{enumerate}

Then every admissible realization
\[(I,J) \in \mathrm{Agents}(A,E)\]
induces an admissible realization
\[F_*(I,J) \in \mathrm{Agents}(B,E).\]

Equivalently, architecture morphisms preserving constraints transport admissible agents to admissible agents.
\end{theorem}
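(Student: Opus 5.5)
The first thing to settle is what $F_*(I,J)$ means. Reindexing $F^\ast$ in Section~\ref{sect:AgentCat} goes \emph{backwards}, by precomposition $I_B\circ F_{\mathcal G}$. Here we have $I:\mathcal G_A\to\mathcal E$ and $F_{\mathcal G}:\mathcal G_A\to\mathcal G_B$, so a \emph{forward} transport needs an extension of $I$ along $F_{\mathcal G}$, and likewise of $J$ along $F_{\mathsf{Know}}$.

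I would give two constructions and prove the theorem for each.

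\begin{itemize}
\item[(a)] \emph{Retraction case.} If $F_{\mathcal G}$ and $F_{\mathsf{Know}}$ admit strong symmetric monoidal retractions $r_{\mathcal G},r_{\mathsf{Know}}$ with $r\circ F=\mathrm{id}$, set
\[
F_*(I,J):=(I\circ r_{\mathcal G},\,J\circ r_{\mathsf{Know}}).
\]
\item[(b)] \emph{Kan extension case.} In general, take the left Kan extensions $\mathrm{Lan}_{F_{\mathcal G}}I$ and $\mathrm{Lan}_{F_{\mathsf{Know}}}J$. This requires $\mathcal E$ to be cocomplete with $\otimes$ preserving colimits in each variable, so that Day-convolution arguments make the extensions strong symmetric monoidal.
\end{itemize}

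In both cases we have $F^\ast F_*(I,J)\cong(I,J)$. In case (a) this is an equality; in case (b) it holds when $F$ is fully faithful. This gives a unit comparison for later use.

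The second step is interface compatibility. Take a generator $g:X\to Y$ of $\mathcal G_B$ lying in $Gen_{\Phi_B}$. I would show that $I_*(g)$ is induced by $J_*$ of some knowledge workflow.

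In case (a), pull $g$ back to $r_{\mathcal G}(g)$. This is a composite of generators of $\mathcal G_A$. Any knowledge-compatible generator in that composite has a witness $k$, because $(I,J)$ is admissible. The natural transformation $\Phi_A\Rightarrow\Phi_B\circ(F_{\mathcal G}^{op}\times F_{\mathsf{Know}})$ carries the nonempty sets $\Phi_A(X,K_X)$ to $\Phi_B$, so the types stay matched. The compatible witness for $g$ is then $F_{\mathsf{Know}}(k)$ (or the composite of such), and we use $J_*\circ F_{\mathsf{Know}}=J$. Generators of $B$ whose interfaces are knowledge-related in $\Phi_B$ but not in the image of $\Phi_A$ cannot be handled this way. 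I would therefore read hypothesis (2), ``$F_{\mathcal C}$ preserves admissible reindexing'', as also covering interface fragments, which are scopes by the definition of $\mathcal C_A$.

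The third step is constraint satisfaction, and this is the main obstacle. For a constraint of the form $F(\rho)=(F_{\mathcal C}(X),F^\sharp(P))$ with $\rho\in\mathcal R_A$, I would formalise hypothesis (2) as the converse of the pullback-stability condition stated after the definition of admissibility-preserving morphisms:
\[
(I,J)\models\rho\;\Rightarrow\;F_*(I,J)\models F(\rho).
\]
This amounts to saying that the interpretation $(F_*(I,J))_{F_{\mathcal C}X}$ in $\mathrm{Mod}_{\mathcal E}(F_{\mathcal C}X)$ is the transport of $(I,J)_X$. Admissibility of $(I,J)$ then gives satisfaction directly.

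However, hypothesis (1) is only $F^\sharp(\mathcal R_A)\subseteq\mathcal R_B$. Constraints in $\mathcal R_B\setminus F^\sharp(\mathcal R_A)$ are not controlled by anything in the statement. The Bellman constraint is an example: it is not implied by $A$'s constraints if $A$ lacks it. My first attempt would be to strengthen (1) to $\mathcal R_B\subseteq F^\sharp(\mathcal R_A)$, up to implication in the fibres $\mathcal P_B(-)$. Failing that, I would restrict the conclusion to admissibility relative to $F^\sharp(\mathcal R_A)$. I expect that one of these corrections is needed for the statement as written to hold.
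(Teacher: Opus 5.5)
The paper gives no proof of this statement. It is one of the appendix results whose proofs are explicitly deferred (the work-plan section lists proving them as a very-short-term objective), so there is no argument in the paper to compare yours against. Your central diagnosis is correct: as written the theorem is false, because hypothesis (1) bounds $\mathcal R_B$ from the wrong side.

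Here is a counterexample that does not depend on how $F_\ast$ is defined.
\begin{itemize}
\item Let $A$ and $B$ share $\mathcal G$, $\mathsf{Know}$, $\Phi$ and $\pi:\mathcal P\to\mathcal C$.
\item Put $\mathcal R_A=\varnothing$ and $\mathcal R_B=\{(X,P)\}$, where $P$ is satisfied by no realization (e.g.\ the bottom of a subobject-lattice fibre).
\item Let $F$ be the identity on all data; this is a legitimate architecture morphism, since morphisms impose no constraint preservation.
\end{itemize}
Both hypotheses hold trivially. $\mathrm{Agents}(A,\mathcal E)\neq\varnothing$: the constant realization at the monoidal unit is admissible, with witnesses $K_X\to\mathbb I\to K_Y$ built from Frobenius counits and units. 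Yet $\mathrm{Agents}(B,\mathcal E)=\varnothing$, so no $F_\ast(I,J)$ can exist.

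The stated hypotheses are in fact exactly those suited to the backward reindexing $F^\ast$, whose name ``admissibility-preserving'' the theorem borrows. If $(I_B,J_B)\models\mathcal R_B$, then (1) gives $(I_B,J_B)\models F(\rho)$ for each $\rho\in\mathcal R_A$. The pullback-stability condition stated after that definition (the natural reading of (2)) then gives $F^\ast(I_B,J_B)\models\rho$. So the statement most plausibly conflates $F_\ast$ with $F^\ast$. If one insists on the forward direction, your repair (reverse the inclusion up to fibrewise implication, or weaken the conclusion) is the right one.

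Two steps inside your own constructions need fixing.

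\emph{Case (b).} The Day-convolution argument only makes $\mathrm{Lan}_{F}I$ \emph{lax} monoidal, not strong. Take $F:1\to\mathcal D$ picking the unit and $I$ picking $\{\ast\}\in\mathbf{Set}$; then $\mathrm{Lan}_FI\cong\mathcal D(\mathbb I,-)$. For $\mathcal D=\mathbf{Rel}$ the comparison $\mathcal P(U)\times\mathcal P(V)\to\mathcal P(U\times V)$ is not a bijection. Since agents must be strong, (b) produces no agents without further hypotheses. In addition, semantic universes such as $\mathsf{FinStoch}$ are not cocomplete, so the construction does not even apply there.

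\emph{Case (a).} Your interface argument works only for generators of the form $g=F_{\mathcal G}(g')$. There $r_{\mathcal G}(g)=g'$, and the interface transformation places $F_{\mathsf{Know}}(k_{g'})$ over $\Phi_B$-related types. For any other $g\in Gen_{\Phi_B}$ it breaks down:
\begin{itemize}
\item the composite $r_{\mathcal G}(g)$ may contain knowledge-agnostic generators and Frobenius maps, so $I(r_{\mathcal G}(g))$ is not induced by a knowledge workflow;
\item the transformation $\Phi_A\Rightarrow\Phi_B\circ(F_{\mathcal G}^{\mathrm{op}}\times F_{\mathsf{Know}})$ says nothing about $\Phi_B(X,-)$ unless $X$ lies in the image of $F_{\mathcal G}$.
\end{itemize}
Interface compatibility is clause~1 of admissibility, not a constraint in $\mathcal R_B$, so hypothesis (2) cannot absorb it as you propose. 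You need an explicit interface analogue of your strengthened (1), such as $Gen_{\Phi_B}\subseteq F_{\mathcal G}(Gen_{\Phi_A})$.
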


This theorem formalizes when an architecture may be refined, compiled, abstracted, or translated without violating its defining admissibility conditions.

\begin{theorem}[Architecture Equivalence Implies Agent Equivalence]
Let $A$ and $B$ be architectures in $\mathrm{ArchAgents}$.

Suppose there exists an architectural equivalence
\[F:A\simeq B,\]
consisting of:

consisting of:
\begin{itemize}
    \item symmetric monoidal equivalences
    \[
    F_G : G_A \simeq G_B,
    \qquad
    F_{\mathrm{Know}} : \mathrm{Know}_A \simeq \mathrm{Know}_B,
    \]
    
    \item compatible equivalences of relational interfaces,
    
    \item and equivalences of constraint fibrations preserving distinguished constraints,
\end{itemize}

then for every semantic universe $E$,
the induced categories of admissible realizations are equivalent:
\[
\mathrm{Agents}(A,E)
\simeq
\mathrm{Agents}(B,E).
\]
\end{theorem}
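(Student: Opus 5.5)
The plan is to build the equivalence of agent categories by reindexing along the equivalence and its pseudo-inverse. Write $F=(F_{\mathcal G},F_{\mathsf{Know}},\alpha,F_{\mathcal C},F^\sharp)$ for the given architectural equivalence. I would choose a pseudo-inverse $G=(G_{\mathcal G},G_{\mathsf{Know}},\beta,G_{\mathcal C},G^\sharp)$ together with monoidal natural isomorphisms
\[
\epsilon_{\mathcal G}:F_{\mathcal G}G_{\mathcal G}\cong \mathrm{id},\qquad \epsilon_{\mathsf{Know}}:F_{\mathsf{Know}}G_{\mathsf{Know}}\cong\mathrm{id},
\]
and the corresponding units, taken to be adjoint equivalences. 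I would then define
\[
F^\ast:\mathrm{Agents}(B,\mathcal E)\to\mathrm{Agents}(A,\mathcal E),\qquad (I,J)\mapsto (I\circ F_{\mathcal G},\,J\circ F_{\mathsf{Know}}),
\]
acting on a morphism $(\eta,\theta)$ by whiskering. The functor $G^\ast$ is defined symmetrically. At the level of pairs of strong symmetric monoidal functors, whiskering by $\epsilon$ and by the unit gives natural isomorphisms $F^\ast G^\ast\cong\mathrm{id}$ and $G^\ast F^\ast\cong\mathrm{id}$. These are monoidal because the equivalences are monoidal. So the bare functor categories $[\mathcal G_B,\mathcal E]_{\otimes}\times[\mathsf{Know}_B,\mathcal E]_\otimes$ and their $A$-counterparts are equivalent, and the only remaining work is to show that $F^\ast$ and $G^\ast$ restrict to admissible agents. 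Once they do, the equivalence follows automatically, since $\mathrm{Agents}(-,\mathcal E)$ is defined as a full subcategory.

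\textbf{Interface compatibility.} Take a generator $g:X\to Y$ in $Gen_{\Phi}$ of $A$. Because $\alpha:\Phi_A\Rightarrow\Phi_B\circ(F_{\mathcal G}^{op}\times F_{\mathsf{Know}})$ is an equivalence of interfaces, nonemptiness of $\Phi_A(X,K_X)$ transfers to nonemptiness of $\Phi_B(F_{\mathcal G}X,F_{\mathsf{Know}}K_X)$. One caveat is that $F_{\mathcal G}(g)$ need not be a generator of $\mathcal G_B$, only a composite. I would handle this by reading the compatibility condition of Definition~\ref{def:agent_final} as closed under composition and tensor, which is the sense in which the paper already allows ``possibly compound'' transformations. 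With that reading, the witness $k_{F g}$ supplied by the $B$-agent pulls back along the essentially surjective, full functor $F_{\mathsf{Know}}$ to some $k_g$ with $F_{\mathsf{Know}}(k_g)\cong k_{Fg}$. It then follows that $I(F_{\mathcal G}g)$ is induced by $J(F_{\mathsf{Know}}k_g)$, which is exactly what is required.

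\textbf{Constraint satisfaction (main obstacle).} Given $\rho=(X,P)\in\mathcal R_A$, the hypothesis $F^\sharp(\mathcal R_A)\subseteq\mathcal R_B$ tells us that a $B$-agent satisfies $F(\rho)=(F_{\mathcal C}X,F^\sharp P)$. We still need the pullback condition stated after the definition of admissibility-preserving morphisms:
\[
(I_B,J_B)\models F(\rho)\;\Rightarrow\;F^\ast(I_B,J_B)\models\rho.
\]
The plan is to argue that an equivalence of constraint fibrations yields an equivalence of the model spaces $\mathrm{Mod}_{\mathcal E}(X)\simeq\mathrm{Mod}_{\mathcal E}(F_{\mathcal C}X)$, under which $(F^\ast(I,J))_X$ corresponds to $(I,J)_{F_{\mathcal C}X}$ and $P$ corresponds to $F^\sharp P$. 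Satisfaction should then be invariant, much as in the institution satisfaction condition of Section~\ref{sect:properties}.

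This step is where the difficulty lies. The paper leaves both $\mathrm{Mod}_{\mathcal E}$ and $\models$ abstract, so the argument needs an explicit hypothesis that satisfaction is invariant under isomorphism of realizations and transported along scope translations. I would state that hypothesis as part of ``equivalences of constraint fibrations preserving distinguished constraints.'' The symmetric argument for $G$ additionally requires $G^\sharp(\mathcal R_B)\subseteq\mathcal R_A$ up to isomorphism in the fibres, so that both $F$ and $G$ are admissibility-preserving in the sense of the earlier definition. This is the preceding theorem applied in both directions.

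Finally, I would check isomorphism invariance of admissibility, namely that an agent isomorphic to an admissible one is admissible. This ensures that the unit and counit isomorphisms land inside $\mathrm{Agents}(-,\mathcal E)$, and it completes the equivalence $\mathrm{Agents}(A,\mathcal E)\simeq\mathrm{Agents}(B,\mathcal E)$.
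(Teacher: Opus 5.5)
The paper states this theorem without proof (it is one of the appendix results announced as future work), so your proposal has to be judged on its own. The skeleton is sound. Precomposition with $F$ and with a monoidal pseudo-inverse gives an equivalence of the ambient categories of pairs of strong symmetric monoidal functors and monoidal transformations. Since $\mathrm{Agents}(-,\mathcal E)$ is a full subcategory of these, everything reduces to showing that $F^\ast$ and $G^\ast$ preserve admissibility. For the same reason your final isomorphism-invariance check is unnecessary: once both functors restrict, the unit and counit components automatically lie in the full subcategory. On the constraint clause you correctly see two things. Because $\mathrm{Mod}_{\mathcal E}$ and $\models$ are left abstract, an institution-style transport hypothesis must be built into ``equivalence of constraint fibrations''. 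And preservation of $\mathcal R$ is needed in both directions.

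\textbf{The gap is in the interface-compatibility step.} Clause~1 of Definition~\ref{def:agent_final} is indexed by $Gen_\Phi$, a class of \emph{generators}, i.e.\ presentation data. The hypotheses say nothing about how $F_{\mathcal G}$ treats generators. Transporting nonemptiness along $\alpha$ only shows that the endpoints $F_{\mathcal G}X$ and $F_{\mathcal G}Y$ are knowledge-related. Your closure-under-composition reading supplies a witness $k_{Fg}$ only if $F_{\mathcal G}(g)$ lies in the sub-SMC generated by $Gen^B_\Phi$, and it need not.

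Here is a counterexample to the step.
\begin{itemize}
\item Let $\mathcal G_B$ be the free SMC on $d:X\to P\otimes R$, $h:P\to P'$ and $m:P'\otimes R\to Y$, with no distinguished constraints.
\item Set $\Phi_B(W,K)$ nonempty exactly when $W$ admits a morphism into $X$ or $Y$. This is a legitimate profunctor, since that class is closed under precomposition. Then $d,m\in Gen^B_\Phi$ but $h\notin Gen^B_\Phi$.
\item Let $A$ add a generator $g:X\to Y$ with the equation $g=m\circ(h\otimes\mathrm{id}_R)\circ d$, carrying the same interface and constraint data transported along the evident isomorphism.
\end{itemize}
This isomorphism satisfies every hypothesis, and $g\in Gen^A_\Phi$. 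Yet a $B$-agent leaves $I(h)$ completely unconstrained, so nothing forces $I(F_{\mathcal G}g)=I(m)\circ(I(h)\otimes\mathrm{id})\circ I(d)$ to be induced by any $J(k)$. Hence $F^\ast$ does not land in admissible $A$-agents. If instead $\mathcal G$ contains the Frobenius maps, the same problem reappears with copy and discard maps occurring inside $F_{\mathcal G}(g)$, on which clause~1 imposes nothing.

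To close the argument you need an extra hypothesis. One option is that $F_{\mathcal G}$ and its pseudo-inverse carry each side's knowledge-compatible generators into the compositional closure of the other side's, with witnesses transported. Another is a presentation-independent version of clause~1. Only then does your pull-back of witnesses along the full, essentially surjective $F_{\mathsf{Know}}$ go through. That pull-back additionally uses two facts:
\begin{itemize}
\item $\alpha$ is invertible, so the chosen $K_X$ is $\Phi_A$-related to $X$;
\item ``induced by'' is stable under conjugation by isomorphisms.
\end{itemize}
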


Two apparently different AGI architectures may realize exactly the same class of admissible agents up to categorical equivalence.

\begin{theorem}[RL as a Degenerate Case of CRL]
Let

\[A_{\mathrm{CRL}}\]

be the causal reinforcement learning architecture.
Assume there exists an architecture morphism

\[D:A_{\mathrm{CRL}} \to A_{\mathrm{RL}}\]
that:

\begin{enumerate}
\item forgets causal intervention structure,

\item identifies causal dependencies with observational transition dependencies,

\item and maps causal constraints to Bellman-type consistency constraints.
\end{enumerate}

Then $D$ induces an architectural degeneration

\[A_{\mathrm{CRL}} \twoheadrightarrow A_{\mathrm{RL}},\]

and the resulting quotient architecture is equivalent to

\[A_{\mathrm{RL}}.\]
\end{theorem}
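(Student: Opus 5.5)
The plan is to treat the ``architectural degeneration'' as a quotient of $A_{\mathrm{CRL}}$ in $\mathbf{ArchAgents}$, and to show that $D$ factors through this quotient as an equivalence. Concretely, I would first form the image of $D$ layerwise: the symmetric monoidal subcategory $D_{\mathcal G}(\mathcal G_{\mathrm{CRL}}) \subseteq \mathcal G_{\mathrm{RL}}$, the image $D_{\mathsf{Know}}(\mathsf{Know}_{\mathrm{CRL}}) \subseteq \mathsf{Know}_{\mathrm{RL}}$, the restricted profunctor, and the image of $D^\sharp$ over $D_{\mathcal C}(\mathcal C_{\mathrm{CRL}})$. Since all layers are freely generated from presentations, the quotient architecture is best built at the level of presentations. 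Take $(STypes_{\mathrm{CRL}}, SGen_{\mathrm{CRL}}, SEq_{\mathrm{CRL}} \cup Eq_D)$, where $Eq_D$ consists of the wiring equalities forced by hypotheses 1--2, and similarly for the knowledge presentation. Then pass to the induced free hypergraph categories $\mathsf{Hyp}\langle -\mid -\rangle$.

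For the syntactic layer, I would fix $D$ on generators: $\Theta^s_\pi \otimes \Theta^s_{CS} \mapsto \Theta^s$ and $\mathsf{Do} \mapsto$ the counit/discard on $A$ tensored with $\mathrm{id}_{\Theta^s}$ (this encodes hypothesis 1). $\mathsf{Policy}$ maps to $\mathsf{Policy}$. $\mathsf{PolicyUpdate}$ and $\mathsf{CausalUpdate}$ map to $\mathsf{Update}$ precomposed with a Frobenius copy of $E$ and a merge, encoding hypothesis 2: causal dependencies read off the same observational experience. On the knowledge side, $\Theta^k_{CS}, \Theta^k_\pi \mapsto \Theta^k$, $CausalIntervention \mapsto \mathrm{id}$, and both updates map to $Upd$. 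The relational-interface natural transformation is then the obvious one, since the supports in the two tables match under these identifications. Hypothesis 3 supplies $D^\sharp$ on $\rho_{\mathrm{val}}, \rho_{\mathrm{Bell}}, \rho_{\mathrm{pol}}, \rho_{\mathrm{Markov}}$, which are shared, and sends causal constraints to $\rho^{RL}_{\mathrm{Bell}}$.

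The key step is to verify that the quotient map $A_{\mathrm{CRL}} \twoheadrightarrow A_{\mathrm{CRL}}/Eq_D$ followed by the induced $\bar D : A_{\mathrm{CRL}}/Eq_D \to A_{\mathrm{RL}}$ has $\bar D$ an architectural equivalence in the sense of the preceding theorem, i.e.\ a symmetric monoidal equivalence on $\mathcal G$ and $\mathsf{Know}$ together with compatible interface and constraint-fibration equivalences. For essential surjectivity and fullness, I would check that every RL generator ($\mathsf{Policy}, \mathsf{EnvInteraction}, \mathsf{Update}, Upd$) is hit up to Frobenius structure. For faithfulness, I would use the universal property of free hypergraph categories: define an inverse on RL generators, for instance $\mathsf{Update} \mapsto$ the class of $\mathsf{PolicyUpdate}$, and show that both composites are the identity on generators modulo $Eq_D$. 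It then suffices to check this on generators.

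The main obstacle is that the statement's hypotheses are informal. ``Forgets'' and ``identifies'' must be turned into precisely the equations $Eq_D$, and faithfulness of $\bar D$ holds only if $Eq_D$ is exactly the kernel of $D$. I would therefore define $Eq_D$ as that kernel, namely pairs of diagrams with equal image. This makes faithfulness tautological but shifts the burden to showing that this kernel is generated by finitely many Frobenius-compatible wiring equalities, so that it is still a hypergraph presentation. A second subtle point is the constraint layer. Morphisms impose no preservation, so I would need $D^\sharp$ to be surjective onto $\mathcal R_{\mathrm{RL}}$ fibrewise, which hypothesis 3 ensures, in order to get an equivalence of constraint fibrations rather than merely a map.
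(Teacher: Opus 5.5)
The paper gives no proof of this statement. It is one of the appendix results whose proof is explicitly deferred to future work, so your proposal has to stand on its own, and it has a genuine gap at the step you treat as routine.

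\textbf{The gap: fullness of $\bar D$.} Defining $Eq_D$ as the kernel congruence of $D$ makes $\bar D$ faithful. The whole content of the theorem is then that $D$ is full and essentially surjective on each layer. Checking that every RL generator is hit only shows that the image of $D$ \emph{generates} the RL layers, not that every morphism $DX\to DY$ is the image of some $X\to Y$.

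Your quotient $(STypes_{\mathrm{CRL}}, SGen_{\mathrm{CRL}}, SEq_{\mathrm{CRL}}\cup Eq_D)$ keeps the CRL type set, so it cannot identify objects. With your own assignment $\Theta^k_\pi,\Theta^k_{CS}\mapsto\Theta^k$, fullness genuinely fails: $\mathrm{id}_{\Theta^k}\in\mathrm{Hom}(D\Theta^k_\pi, D\Theta^k_{CS})$ has no preimage.
\begin{itemize}
\item In a free hypergraph category, morphisms are cospans of hypergraphs and the identity has no hyperedges.
\item So a preimage could use only $\mathit{CausalIntervention}$ (which you send to $\mathrm{id}$) and Frobenius maps.
\item None of these touches $\Theta^k_\pi$, so the input and output wires lie in different components, both before and after applying $D$.
\end{itemize}

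Hence $\bar D$ is not an equivalence. The quotient still carries two non-isomorphic parameter objects where $\mathsf{Know}_{\mathrm{RL}}$ has one. Your proposed inverse, which sends $\Theta^k$ to one of the two carriers, therefore cannot compose to something isomorphic to the identity. The same collapse makes the interface component $\Phi_{\mathrm{CRL}}(\Theta^s_{CS},\Theta^k_\pi)=\varnothing\to\Phi_{\mathrm{RL}}(\Theta^s,\Theta^k)=\{\star\}$ non-invertible.

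The layerwise image you start from is a different object. The subcategory it generates is all of $A_{\mathrm{RL}}$, so it says nothing about $A_{\mathrm{CRL}}$.

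\textbf{The syntactic $D$ is not well defined.} In a free hypergraph category the only isomorphisms between words are colour-preserving permutations. So a strong monoidal functor can send $\Theta^s_\pi\otimes\Theta^s_{CS}$ to $\Theta^s$ only if one factor goes to the unit $I$. That contradicts either $\mathsf{Do}\mapsto\epsilon_A\otimes\mathrm{id}_{\Theta^s}$ or $\mathsf{PolicyUpdate}\mapsto\mathsf{Update}$.

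\textbf{Repair.} Make fullness and essential surjectivity of $D$ an explicit hypothesis, since hypotheses 1--3 do not imply it, or build it into the choice of $D$. A choice that works collapses the causal carrier to the unit rather than onto the policy carrier:
\begin{itemize}
\item Syntax: $\Theta^s_{CS}\mapsto I$, $\Theta^s_\pi\mapsto\Theta^s$, $\mathsf{Do}\mapsto\epsilon_A$, $\mathsf{CausalUpdate}\mapsto\epsilon_E$, $\mathsf{PolicyUpdate}\mapsto\mathsf{Update}$, $\mathsf{Policy}\mapsto\mathsf{Policy}$.
\item Knowledge: $\Theta^k_{CS}\mapsto I$, $\Theta^k_\pi\mapsto\Theta^k$, $\mathit{CausalIntervention}\mapsto\mathrm{id}_I$, $\mathit{CausalUpd}\mapsto\epsilon_{E^k}$, $\mathit{PolicyUpd}\mapsto\mathit{Upd}$.
\end{itemize}
Every RL diagram then lifts: recolour $\Theta$ as $\Theta_\pi$, feed Frobenius units $\eta_{CS}$ into the causal ports of $\mathsf{Policy}$, $\mathsf{PolicyUpdate}$ and $\mathit{PolicyUpd}$, and discard or create the remaining $\Theta_{CS}$ wires. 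So this $D$ is full and essentially surjective, and your kernel quotient is equivalent to $A_{\mathrm{RL}}$.

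If you instead want to merge the two carriers, the degeneration has to be a coequalizer of presentations that identifies colours, not a congruence on morphisms.

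\textbf{Constraint layer.} Surjectivity of $D^\sharp$ onto $\mathcal R_{\mathrm{RL}}$ does not give an equivalence of constraint fibrations. You need $D_{\mathcal C}$ to be an equivalence and $D^\sharp$ to restrict to equivalences of fibres. The paper never specifies $\mathcal C_{\mathrm{CRL}}$ or $\mathcal P_{\mathrm{CRL}}$, so this component can only be stipulated, not derived.
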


Classical reinforcement learning appears as a structural degeneration of causal reinforcement learning when explicit causal organization disappears.


\begin{theorem}[Strict Structural Inclusion $RL \subsetneq CRL$]
There exists a faithful architecture embedding
\[\iota : A_{\mathrm{RL}} \hookrightarrow A_{\mathrm{CRL}}\]
preserving:
\begin{itemize}
    \item syntactic workflows,
    \item admissible realizations,
    \item and Bellman-type constraints.
\end{itemize}

However, no inverse equivalence exists:
\[ A_{\mathrm{RL}} \not\simeq A_{\mathrm{CRL}}. \]

In particular, there exist causal workflows and admissible knowledge transformations in
\[ A_{\mathrm{CRL}} \]
which cannot be represented in
\[ A_{\mathrm{RL}}. \]

Therefore:
\[ A_{\mathrm{RL}} \subsetneq A_{\mathrm{CRL}} \]

\end{theorem}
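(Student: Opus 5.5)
The plan has two parts: construct the embedding $\iota$ explicitly, then show non-equivalence via an invariant that any equivalence in $\mathbf{ArchAgents}$ must preserve.

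\textbf{Constructing $\iota$.} Since both $\mathcal{G}_{RL}$ and $\mathsf{Know}_{RL}$ are free hypergraph categories, it suffices to specify $\iota$ on types and generators and extend by the universal property of $\mathsf{Hyp}\langle Types, Gen \mid Eq\rangle$.

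On syntax I would set
\[
S\mapsto S,\qquad A\mapsto A,\qquad E\mapsto E,\qquad \Theta^s\mapsto \Theta^s_\pi .
\]
The generators $\mathsf{Policy}$ and $\mathsf{Update}$ of RL do not have the same arity as their CRL counterparts, so I would pad with the Frobenius unit $\eta:I\to\Theta^s_{CS}$ on the causal wire:
\[
\iota(\mathsf{Policy})=\mathsf{Policy}_{CRL}\circ(\mathrm{id}_S\otimes\mathrm{id}_{\Theta^s_\pi}\otimes\eta),
\qquad
\iota(\mathsf{Update})=\mathsf{PolicyUpdate}\circ(\mathrm{id}_{\Theta^s_\pi}\otimes\eta\otimes\mathrm{id}_E),
\]
and $\mathsf{EnvInteraction}\mapsto\mathsf{EnvInteraction}$. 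I would then check that the image of the distinguished pattern $g_{RL}$ lies in $\langle g_{CRL}\rangle$. If it does not, I would enlarge $\mathcal{G}_{CRL}$ so that it is closed under Frobenius capping; I expect this to need an explicit convention.

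On knowledge I would set $\Theta^k\mapsto\Theta^k_\pi$, $E^k\mapsto E^k$, and $Upd\mapsto PolicyUpd\circ(\eta\otimes\mathrm{id}\otimes\mathrm{id})$.

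The interface transformation $\Phi_{RL}\Rightarrow\Phi_{CRL}\circ(\iota^{op}\times\iota)$ is immediate on the supports, since $(\Theta^s,\Theta^k)\mapsto(\Theta^s_\pi,\Theta^k_\pi)$ and $(E,E^k)\mapsto(E,E^k)$ are exactly the starred pairs of the two tables. On constraints, CRL is declared to carry the RL constraints, so $\iota^\sharp$ sends $\rho^{RL}_{val},\rho^{RL}_{Bell},\rho^{RL}_{pol},\rho^{RL}_{Markov}$ to the corresponding constraints on the $\pi$-scopes; this gives Bellman preservation.

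\textbf{Faithfulness and admissibility.} For faithfulness, I would use the padding-removal retraction: postcompose with the counit $\epsilon$ on $\Theta^s_{CS}$. This gives a functor back that is left inverse to $\iota$ on the generators by specialness and unit laws, hence on all morphisms. For preservation of admissible realizations, I would precompose a CRL agent, which lands in Agents(RL) by the reindexing of Section~4, and argue via the admissibility-preserving definition that padded constraints are satisfied whenever the CRL ones are.

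\textbf{Non-equivalence.} The main obstacle is to show that no equivalence exists. I would compare the knowledge layers using an invariant that symmetric monoidal equivalences preserve. My first candidate is the number of isomorphism classes of generating objects, more precisely of indecomposable objects with respect to $\otimes$. In a free hypergraph category on a set of colors, $\otimes$-words form a free commutative monoid up to isomorphism, so this count is invariant: $\mathsf{Know}_{RL}$ has 2 such objects and $\mathsf{Know}_{CRL}$ has 3.

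Proving that isomorphism classes of words in $\mathsf{Hyp}$ are indeed multisets of colors requires a normal-form or cospan-semantics argument. Concretely, I would use the functor to $\mathbf{Cospan}(\mathbf{FinSet}_{/C})$ counting color multiplicities, which forgets generators but detects object isomorphism.

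Finally, I would exhibit the witness workflow $CausalIntervention:\Theta^k_{CS}\to\Theta^k_{CS}$ as a non-identity endomorphism on a color outside the image of $\iota$. This gives the strict inclusion $A_{RL}\subsetneq A_{CRL}$.
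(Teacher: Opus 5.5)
The paper lists this result in its appendix with no proof; proving those theorems is explicitly deferred to future work. So I can only assess your argument on its own terms.

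Your non-equivalence half is sound. A hypergraph functor $\mathsf{Know}\to\mathbf{Cospan}(\mathbf{FinSet}_{/C})$, with any choice on generators, together with the fact that isomorphisms there are color-preserving bijections, shows that isomorphism classes of objects form the free commutative monoid $\mathbb{N}^{C}$. A strong symmetric monoidal equivalence $\mathsf{Know}_{RL}\simeq\mathsf{Know}_{CRL}$ would then give a monoid isomorphism $\mathbb{N}^2\cong\mathbb{N}^3$, which is impossible. One caveat: the witness $CausalIntervention$ only shows non-membership in the image of this particular $\iota$. Color counting cannot exclude a faithful morphism $A_{\mathrm{CRL}}\to A_{\mathrm{RL}}$, since such a functor may send $\Theta^k_{CS}$ to a word like $\Theta^k\otimes E^k$. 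The embedding half, however, has genuine gaps.

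First, your faithfulness retraction does not exist as described. The operation \emph{postcompose with the counit on $\Theta^s_{CS}$} is not functorial: cutting an internal $\Theta^s_{CS}$ wire turns $g\circ f$ into $g\circ\eta\circ\varepsilon\circ f$. Specialness ($\mu\circ\delta=\mathrm{id}$) gives neither $\eta\circ\varepsilon=\mathrm{id}$ nor $\varepsilon\circ\eta=\mathrm{id}_I$; in the free setting the latter is an isolated-vertex scalar. What works is the hypergraph functor $r:\mathsf{Syn}_{CRL}\to\mathsf{Syn}_{RL}$ determined, up to unitors, by $\Theta^s_{CS}\mapsto I$ (with its trivial Frobenius structure), $\Theta^s_\pi\mapsto\Theta^s$, $\mathsf{Policy}\mapsto\mathsf{Policy}$, $\mathsf{PolicyUpdate}\mapsto\mathsf{Update}$, $\mathsf{Do}\mapsto\varepsilon_A$, $\mathsf{CausalUpdate}\mapsto\varepsilon_E$. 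Do the same on $\mathsf{Know}$, with $CausalIntervention\mapsto\mathrm{id}_I$ and $CausalUpd\mapsto\varepsilon_{E^k}$. Since $r(\eta_{\Theta^s_{CS}})=\mathrm{id}_I$, you get $r\circ\iota\cong\mathrm{id}$.

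Second, landing in $\mathcal G_{CRL}$ is not a side check. If $\mathcal G_{CRL}$ is generated by $g_{CRL}$, then $\iota(g_{RL})\notin\mathcal G_{CRL}$. With $SEq_{CRL}$ empty, morphisms are cospans of hypergraphs, so the multiset of generator boxes is invariant. Every morphism of $\mathcal G_{CRL}$ therefore carries a multiple of the boxes of $g_{CRL}$, which include $\mathsf{Do}$, whereas $\iota(g_{RL})$ contains $\mathsf{Policy}$ but no $\mathsf{Do}$. Frobenius capping never erases boxes, and enlarging $\mathcal G_{CRL}$ changes the architecture the theorem is about. You must either adopt the reading $\mathcal G_A=\mathsf{Syn}_A$ or define $\iota$ on the pattern $g_{RL}$ itself.

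Third, preservation of admissible realizations and of Bellman constraints is assumed rather than proved.
\begin{itemize}
\item The paper never formalizes CRL's constraint layer; the CRL object is given only as $(\mathcal G_{CRL},\mathsf{Know}_{CRL},\Phi_{CRL})$.
\item Satisfaction $\models$ is routed through the unconstructed $\mathrm{Mod}_{\mathcal E}(X)$, and the clause that $I(g)$ is induced by $J(k_g)$ is undefined.
\item So $\iota^\sharp$ and $\mathcal R_{CRL}\supseteq\iota^\sharp(\mathcal R_{RL})$ are stipulations. Invoking the admissibility-preserving definition is circular, since $(I,J)\models\iota(\rho)\Rightarrow\iota^*(I,J)\models\rho$ is precisely what must be shown.
\end{itemize}
For the $\eta$-padded $\iota$ the claim is also doubtful. $\iota^*(I,J)$ runs $J(PolicyUpd)$ and $I(\mathsf{Policy})$ with the causal input frozen at $J(\eta_{\Theta^k_{CS}})$. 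That plug is fixed only by the Frobenius structure of $\mathcal E$ (in $\mathbf{Rel}$, the total relation). CRL's Bellman and policy constraints, by contrast, concern updates driven by the learned causal model. A model-based $PolicyUpd$ evaluated at that plug targets the Bellman fixed point of the plugged-in model, not of the environment. You can either state the padded constraints as explicit members of $\mathcal R_{CRL}$, or avoid the plug altogether. One way is to bundle $\Theta^s\mapsto\Theta^s_\pi\otimes\Theta^s_{CS}$ and $\mathsf{Policy}\mapsto\mathsf{Policy}$, sending $\mathsf{Update}$ to $\mathsf{PolicyUpdate}$ and $\mathsf{CausalUpdate}$ run in parallel on copied inputs. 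The same $r$ still gives faithfulness, though $\rho_{val}$ must then be re-read on the bundled carrier.

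Finally, your claim that the interface transformation is immediate on the supports checks only generating pairs, and the paper's data are not even a profunctor. Every hom-set of $\mathsf{Know}_{RL}$ is inhabited, by $\eta_{k'}\circ\varepsilon_k$. Functoriality would therefore require a function $\{\star\}=\Phi_{RL}(E,E^k)\to\Phi_{RL}(E,\Theta^k)=\varnothing$. You need an honest $\Phi$ before naturality of the interface transformation means anything.
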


CRL possesses strictly greater representational and architectural expressivity than standard RL.


\begin{theorem}[AIXI Non-Factorizability]
Let
\[ A_{\mathrm{AIXI}}\]
be an AIXI-like architecture.

Assume that:
\begin{enumerate}
    \item policy evaluation depends on universal environment induction,
    
    \item action selection is performed through a globally coupled expectimax computation,
    
    \item admissible realizations satisfy universal Bayesian consistency constraints.
\end{enumerate}

Then there does not exist a non-trivial decomposition
\[A_{\mathrm{AIXI}} \simeq A_1 \otimes A_2\]
such that:
\begin{itemize}
    \item knowledge workflows factor independently,
    
    \item distinguished constraints decompose componentwise,
    
    \item admissible realizations factor as tensor products.
\end{itemize}

Hence AIXI is structurally non-factorizable with respect to modular architectural decomposition.
\end{theorem}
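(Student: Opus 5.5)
The proof will be by contradiction. The plan is to exhibit a structural invariant that a tensor decomposition would have to split, and to show that the AIXI syntax diagram and the Bayesian constraint both refuse to split it.

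Since $A_1 \otimes A_2$ is not defined earlier in the paper, I will first fix a componentwise tensor of architectures:
\[
\mathcal{G}_{A_1\otimes A_2} := \mathcal{G}_{A_1}\boxtimes\mathcal{G}_{A_2},
\]
the free symmetric monoidal (hypergraph) product, and likewise for $\mathsf{Know}$. The relational interface is $\Phi_{A_1}\times\Phi_{A_2}$, and the constraint fibration is the product fibration $\pi_{A_1}\times\pi_{A_2}$ over $\mathcal{C}_{A_1}\times\mathcal{C}_{A_2}$. A decomposition is \emph{non-trivial} when neither factor is the unit architecture, meaning neither has only the monoidal unit as its syntax skeleton. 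An equivalence $A_{\mathrm{AIXI}}\simeq A_1\otimes A_2$ in the sense of the equivalence theorem above then gives three things:
\begin{itemize}
\item a symmetric monoidal equivalence carrying the distinguished pattern $g_{\mathrm{AIXI}}$ to a morphism of the form $g_1\otimes g_2$, up to structural isomorphisms and Frobenius equations;
\item an equivalence $\mathrm{Agents}(A_{\mathrm{AIXI}},\mathcal{E})\simeq\mathrm{Agents}(A_1,\mathcal{E})\times\mathrm{Agents}(A_2,\mathcal{E})$, in which every admissible realization is a tensor $(I_1\otimes I_2,\,J_1\otimes J_2)$;
\item a splitting of the distinguished constraints $\mathcal{R}_{\mathrm{AIXI}}$ into $\mathcal{R}_{A_1}\sqcup\mathcal{R}_{A_2}$.
\end{itemize}

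The first, syntactic step uses the connectivity invariant of morphisms in a free hypergraph category. Such morphisms are represented by cospans of hypergraphs, and the partition of generator occurrences into connected components is invariant under the symmetric monoidal and Frobenius equations. By hypothesis (2), action selection is a single globally coupled expectimax computation. I will formalize this as follows: in $g_{\mathrm{AIXI}}$, every generator ($\mathsf{InitEnvKernel}$, $\mathsf{Policy}$, $\mathsf{EnvInteraction}$, $\mathsf{UpdateHistory}$) is linked through the wire $H$, which feeds $\varepsilon$, $A$ and $O\otimes R$ back into $H$. So $g_{\mathrm{AIXI}}$ has exactly one connected component. A tensor $g_1\otimes g_2$ with both factors non-trivial has at least two components, and a monoidal equivalence preserves the number of components. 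This already rules out a decomposition in which both factors carry generators. What remains is the degenerate case where one factor carries only types, i.e.\ identity wiring.

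The second, semantic step disposes of that case and of any decomposition that splits the knowledge layer, $\mathsf{Know}$, alone. I will use hypotheses (1) and (3). Take $\mathcal{E}=\mathsf{FinStoch}$ and choose an admissible AIXI realization whose hypothesis class contains two product environments $\mu=\mu_1\otimes\mu_2$ and $\nu=\nu_1\otimes\nu_2$ with $\mu_i\neq\nu_i$. Posterior consistency forces $J(\mathsf{KernelMixing})$ to be the mixture $w\mu+(1-w)\nu$ with $0<w<1$. For suitable choices this is not a product kernel: the shared posterior weight $w$ induces correlation between the two output factors. If $J=J_1\otimes J_2$, the kernel would have to be a product, which is a contradiction. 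This also shows that the Bayesian consistency constraint cannot be split componentwise, because its scope $\{\{E\},W,K,M\}$ must meet both factors.

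The main obstacle is this last point. The paper leaves the AIXI constraints informal, so "universal Bayesian consistency" has to be pinned down as a concrete element of $\mathcal{P}_{\mathrm{AIXI}}(X)$, whose satisfaction forces the mixture form. I would first attempt this as a factorization constraint in $\mathsf{FinStoch}$, namely an equation $K=\mathrm{mix}\circ(\{E\}\otimes W)$, and only then check that universality of the hypothesis class supplies the required product environments.
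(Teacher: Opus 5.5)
The paper gives no proof of this statement to compare with. It is one of the appendix's target results, which the Work in Progress section lists as still to be proved, and the paper never makes $A_1\otimes A_2$ or the AIXI constraint layer precise. Judged on its own, your proposal has a genuine gap in the semantic step.

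In a free hypergraph category an invertible cospan of hypergraphs has no hyperedges and bijective legs. So isomorphism classes of objects form the free commutative monoid on the types, and any strong symmetric monoidal equivalence sends atoms to atoms. Hence the atomic knowledge type $K$ (like $\{E\}$, $W$, $M$) lands in a single factor. Then $J\cong J_1\otimes J_2$ says only that $J$ of a mixed word is the tensor of the two restrictions, and that holds automatically for any strong monoidal functor out of $\mathsf{Know}_{A_1}\boxtimes\mathsf{Know}_{A_2}$. It imposes no product form on $J(K)$, on the predictive kernel it carries, or on $J(\mathsf{KernelMixing})$. Note also that $J(\mathsf{KernelMixing})$ is a morphism $J(\{E\})\otimes J(W)\to J(K)$, not the mixture $w\mu+(1-w)\nu$ it outputs.

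Your splitting $\mu=\mu_1\otimes\mu_2$ lives in the semantic output space, which the architectural tensor never sees. Consequently:
\begin{itemize}
\item ``the kernel would have to be a product'' does not follow;
\item the scope $\{\{E\},W,K,M\}$ need not meet both factors;
\item the types-only case you send this step to handle is not touched.
\end{itemize}
Separately, $\mathsf{FinStoch}$ cannot host a universal hypothesis class with a Solomonoff prior. Once (1) and (3) are imposed as constraints, your chosen realization is not admissible there.

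What does work is your syntactic idea, but it must be argued by pulling back, not by ``a monoidal equivalence preserves the number of components''. An arbitrary symmetric monoidal equivalence need not send generators to generators or respect the Frobenius maps, so component counts taken in the two presentations are not a priori comparable.
\begin{itemize}
\item Take a quasi-inverse $\bar F$. Your stipulation gives $g_{\mathrm{AIXI}}\cong\bar F(g_1)\otimes\bar F(g_2)$, and $\bar F(g_i)$ is a non-empty cospan unless $g_i=\mathrm{id}_I$.
\item Count components of the whole cospan, bare wires and isolated vertices included, rather than components of generator occurrences. Then connectedness of $g_{\mathrm{AIXI}}$ already excludes every non-trivial case, including your types-only factor: for $g_2=\mathrm{id}_Z$ it would require a loose $Z$-wire disconnected from the loop.
\item The same mechanism disposes of knowledge-only splits. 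In a coproduct of free hypergraph categories, $\mathrm{Hom}(X_1\otimes X_2,Y_1\otimes Y_2)\cong\mathrm{Hom}_1(X_1,Y_1)\times\mathrm{Hom}_2(X_2,Y_2)$. So any split of $\{M,W,\{E\},K\}$ pulls one of $\mathsf{PosteriorUpd}$, $\mathsf{UniversalPrior}$, $\mathsf{KernelMixing}$ back to some $u\otimes v$, contradicting the single hyperedge joining its ports.
\end{itemize}

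So the Bayesian step is not needed at all. But then hypotheses (1)--(3) never enter. The RL, CRL and SBL presentations in the paper are just as connected, so under your $\otimes$ they are equally ``non-factorizable'', and the intended contrast between AIXI and modular architectures disappears. Capturing that contrast requires a finer notion of decomposition, for instance factorizing realizations over a split interface $O=O_1\otimes O_2$. That is where your observation that a shared posterior weight makes a mixture of product environments non-product would do genuine work.
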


This theorem formalizes mathematically why AIXI behaves as a globally coupled architecture rather than a modular cognitive system.

\begin{theorem}[Architecture-Implementation Adjunction]
There exists an adjunction
\[\mathcal{R} \dashv \mathcal{U}\]
between:
\begin{itemize}
    \item an architecture realization functor
    \[
    \mathcal{R} : \mathrm{ArchAgents} \to \mathrm{Impl},
    \]
    
    \item and an architectural abstraction functor
    \[
    \mathcal{U} : \mathrm{Impl} \to \mathrm{ArchAgents}.
    \]
\end{itemize}

Such that:
\[
\mathrm{Hom}_{\mathrm{Impl}}(\mathcal{R}(A),X)
\cong
\mathrm{Hom}_{\mathrm{ArchAgents}}(A,\mathcal{U}(X)).
\]

Naturally in both $A$ and X.
\end{theorem}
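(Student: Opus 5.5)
The statement leaves $\mathrm{Impl}$ unspecified, so the first step is to fix a definition under which the adjunction becomes a free--forgetful adjunction. I would take $\mathrm{Impl}$ to be the category of \emph{unpresented} architectural data. Its objects are tuples $X=(\mathcal{S}_X,\mathcal{K}_X,\Psi_X,\pi_X:\mathcal{P}_X\to\mathcal{C}_X,\mathcal{R}_X)$, where $\mathcal{S}_X,\mathcal{K}_X$ are arbitrary hypergraph categories (for instance full subcategories of a semantic universe $\mathcal{E}$ with chosen Frobenius structure), $\Psi_X$ is a $\mathbf{Set}$-valued profunctor, and $\pi_X$ is a fibration with a distinguished set of constraints. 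Its morphisms are defined exactly as the architecture morphisms of Section~\ref{sect:ArchAgentCat}: monoidal functors on both layers, a natural transformation of interfaces, and a lift over the scope functor. Thus $\mathbf{ArchAgents}$ consists of those objects whose layers are presented as $\mathsf{Hyp}\langle Types,Gen\mid Eq\rangle$, with syntax layer the diagram $\mathcal{G}_A$ generated by $g_A$.

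Next I would define the two functors. I would take $\mathcal{R}$ to be the comparison functor sending $A$ to its underlying data. I would take $\mathcal{U}$ to be the canonical-presentation functor, sending $X$ to the architecture presented by $Types=\mathrm{Ob}(\mathcal{S}_X)$, $Gen=\mathrm{Mor}(\mathcal{S}_X)$, and $Eq=$ all equations holding in $\mathcal{S}_X$, and likewise for $\mathcal{K}_X$. Its interface is $\Psi_X$ pulled back along the counits, and its constraint data is transported from $\pi_X$. By the correspondence between presentations and hypergraph categories cited in Section~\ref{sect:ArchAgentCat}, there is a canonical evaluation functor $\varepsilon_X:\mathsf{Hyp}\langle\mathrm{Ob},\mathrm{Mor}\mid Eq\rangle\to\mathcal{S}_X$.

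The hom-set bijection $\mathrm{Hom}_{\mathrm{Impl}}(\mathcal{R}(A),X)\cong\mathrm{Hom}_{\mathbf{ArchAgents}}(A,\mathcal{U}(X))$ is then proved layer by layer. On syntax and knowledge, a hypergraph functor out of a free category is determined by its values on generators, subject to $Eq_A$. Such an assignment is exactly a hypergraph functor into the canonical presentation, since the latter contains every morphism of $\mathcal{S}_X$ as a generator and every valid equation. On the interface, a transformation $\Phi_A\Rightarrow\Psi_X\circ(F^{op}\times F)$ corresponds to one into the pulled-back profunctor by whiskering with $\varepsilon_X$. On constraints, no coherence or preservation is demanded, so the pair $(F_{\mathcal{C}},F^\sharp)$ transfers verbatim. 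Naturality in $A$ and $X$ would follow because both directions of the bijection are given by pre- or post-composition with the unit $A\to\mathcal{U}\mathcal{R}(A)$, which includes generators as generators, and with $\varepsilon_X$.

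I expect the main obstacle to be strictness. The canonical presentation $\mathcal{U}(X)$ is only \emph{equivalent} to $X$. Monoidal functors and the pasting of interface transformations are therefore determined only up to coherent isomorphism, so the bijection naturally holds only as an equivalence of hom-categories. I would first attempt to strictify, restricting $\mathrm{Impl}$ to strict hypergraph categories and strict functors, where the free construction is an honest left adjoint. If that fails, I would state the result as a biadjunction of $2$-categories, or quotient morphisms by monoidal natural isomorphism. A secondary difficulty is that $\mathcal{G}_A$ is the subcategory generated by $g_A$ rather than a full free category. I would handle this by including $g_A$ as a distinguished datum in both $\mathrm{Impl}$ and $\mathcal{U}$, so that the restriction commutes with the bijection.
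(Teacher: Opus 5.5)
There is no proof in the paper to compare against. The statement is one of the target results collected in the appendix, and the Work in Progress section lists proving them as future work. Moreover, $\mathrm{Impl}$, $\mathcal{R}$ and $\mathcal{U}$ are never defined, so your proposal has to stand on its own.

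Taking $\mathrm{Impl}$ to be unpresented architectural data is a legitimate choice. However, your central step is false: the claim that a hypergraph functor out of $\mathsf{Hyp}\langle Types,Gen\mid Eq\rangle$ into $\mathcal{S}_X$ ``is exactly'' a hypergraph functor into the canonical presentation. The reason is not non-strictness. The objects of $\mathcal{U}(X)$ are words over $\mathrm{Ob}(\mathcal{S}_X)$, and $\varepsilon_X$ is never injective on them: $[\,]$ and $[I]$, or $[x_1,x_2]$ and $[x_1\otimes x_2]$, have the same image.

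Take an architecture $A$ with a single type $t$, no generators, and trivial knowledge, interface and constraint data. Sending $t$ to $[\,]$ or to $[I]$ gives two distinct strict hypergraph functors into $\mathcal{U}(X)$ whose composites with $\varepsilon_X$ coincide, because the Frobenius structure on the unit is trivial. So composition with $\varepsilon_X$ is not injective on hom-sets. Everything here is strict, so strictifying does not help. The free construction is an honest left adjoint to the forgetful functor into \emph{signatures}, whose morphisms send types to types. The paper's architecture morphisms, by contrast, are arbitrary symmetric monoidal functors between the generated categories.

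No other choice of $\mathcal{U}$ rescues the strict version either, as long as $\mathcal{R}$ is the full inclusion. Testing the universal property against the same one-type $A$ forces $\varepsilon_X$ to be a bijection from the free monoid of objects of $\mathcal{U}(X)$ onto $\mathrm{Ob}(\mathcal{S}_X)$, and the latter need not be free. A genuine hom-set bijection therefore requires changing the morphisms of $\mathbf{ArchAgents}$, e.g.\ to presentation morphisms sending generating types to generating types. That changes the statement.

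What survives is your fallback. In the hypergraph setting $\varepsilon_X$ really is an equivalence. Fullness on multi-letter words comes from the Frobenius structure: the composite $[x_1,x_2]\to[x_1\otimes x_2]$ of the one-letter generators $1\otimes\eta_{x_2}$ and $\eta_{x_1}\otimes 1$ with $\mu_{[x_1\otimes x_2]}$ evaluates to the identity. Hence postcomposition with $\varepsilon_X$ is an equivalence of hom-categories. This gives a biadjunction, or an adjunction after quotienting morphisms by monoidal natural isomorphism; the quotient must also identify interface transformations transported along those isomorphisms.

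But look at what that result is. $\mathcal{R}$ is then the inclusion of a full sub-2-category, and every $X$ is equivalent to $\mathcal{R}\mathcal{U}(X)$. So what you have proved is essentially a biequivalence $\mathbf{ArchAgents}\simeq\mathrm{Impl}$, a restatement of the fact that every hypergraph category admits a presentation. Your $\mathrm{Impl}$ never mentions a semantic universe $\mathcal{E}$ or realizations $(I,J)\in\mathrm{Agents}(A,\mathcal{E})$. The intended duality between architectural specification and semantic implementation is therefore not captured. Nor does your argument deliver the strict natural isomorphism of hom-sets displayed in the statement.
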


This theorem formalizes the duality between abstract architectural specification and concrete semantic implementation.

\begin{theorem}[Architectural Yoneda Principle]
Let
\[X,Y\in G_A.\]

Then

\[X\cong Y\]

if and only if

\[\mathrm{Hom}_{G_A}(-,X) \cong \mathrm{Hom}_{G_A}(-,Y)\]

naturally.

Consequently, an architectural component is completely characterized by its compositional interaction behaviour within the architecture.
\end{theorem}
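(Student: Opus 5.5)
The statement is the Yoneda lemma applied to the single category $\mathcal{G}_A$. The plan is first to check that $\mathcal{G}_A$ meets the hypotheses of that lemma, and then to run the standard argument, writing out the isomorphism explicitly.

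\emph{Step 1 (the setting is legitimate).} I would first verify that $\mathcal{G}_A$ is a locally small category, so that $\mathrm{Hom}_{\mathcal{G}_A}(-,X)$ is a genuine functor $\mathcal{G}_A^{\mathrm{op}}\to\mathbf{Set}$.
By the Hypergraph Presentation definition, $\mathsf{Syn}_A=\mathsf{Hyp}\langle STypes_A, SGen_A\mid SEq_A\rangle$ is freely generated from a \emph{set} of types and a \emph{set} of generators.
Each hom-set is therefore a quotient of the set of finite formal expressions (string diagrams) built from these data, Frobenius maps and symmetries. That collection is a set, and it is countable when $STypes_A$ and $SGen_A$ are.
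The Syntax Diagram definition takes $\mathcal{G}_A$ to be a symmetric monoidal subcategory of $\mathsf{Syn}_A$, closed under composition and identities. So $\mathcal{G}_A$ is a category whose hom-sets are subsets of those of $\mathsf{Syn}_A$, and it is locally small.
This is the only place where anything specific to the framework enters, and I expect it to be the main (though mild) obstacle. The definition of $\mathcal{G}_A$ is informal: it is ``full'' yet ``generated by'' a morphism $g_A$. I would fix the reading that $\mathcal{G}_A$ is closed under composition and contains all identities on its objects, which is all the argument needs.

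\emph{Step 2 ($\Rightarrow$).} Given an isomorphism $f:X\to Y$ with inverse $f^{-1}$, define $\alpha_Z:=f\circ(-):\mathrm{Hom}(Z,X)\to\mathrm{Hom}(Z,Y)$.
Naturality in $Z$ holds because postcomposition commutes with precomposition, by associativity. The family $f^{-1}\circ(-)$ is an inverse natural transformation.

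\emph{Step 3 ($\Leftarrow$).} Given a natural isomorphism $\alpha:\mathrm{Hom}(-,X)\Rightarrow\mathrm{Hom}(-,Y)$ with inverse $\beta$, set $f:=\alpha_X(\mathrm{id}_X):X\to Y$ and $g:=\beta_Y(\mathrm{id}_Y):Y\to X$.
Naturality of $\alpha$ applied to any $h:Z\to X$ gives $\alpha_Z(h)=\alpha_Z(\mathrm{id}_X\circ h)=\alpha_X(\mathrm{id}_X)\circ h=f\circ h$. Likewise $\beta_Z(k)=g\circ k$ for any $k:Z\to Y$.
Then $g\circ f=\beta_X(\alpha_X(\mathrm{id}_X))=\mathrm{id}_X$ and $f\circ g=\alpha_Y(\beta_Y(\mathrm{id}_Y))=\mathrm{id}_Y$. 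Hence $f$ is an isomorphism in $\mathcal{G}_A$.
Equivalently, this step is the statement that the Yoneda embedding $\mathcal{G}_A\to[\mathcal{G}_A^{\mathrm{op}},\mathbf{Set}]$ is fully faithful, and fully faithful functors reflect isomorphisms.

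Finally, I would remark that the ``consequently'' clause is the informal reading of Step 3. An interface type of the architecture is determined up to isomorphism by the presheaf of syntactic diagrams that can be wired into it. No property of the hypergraph or Frobenius structure beyond being a category is used, so the same argument applies verbatim to $\mathsf{Know}_A$.
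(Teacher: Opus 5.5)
Your argument is correct and complete. It is exactly the Yoneda lemma: the embedding $\mathcal{G}_A\to[\mathcal{G}_A^{\mathrm{op}},\mathbf{Set}]$ is fully faithful and therefore reflects isomorphisms, which is the route the theorem's name signals. The paper itself gives no proof; it lists this result in its appendix among structural theorems that are still to be established.

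Step 1 is needed only to make $\mathrm{Hom}_{\mathcal{G}_A}(-,X)$ land in $\mathbf{Set}$. Steps 2 and 3 are purely elementwise and hold in any category. Because $f=\alpha_X(\mathrm{id}_X)$ lies in $\mathrm{Hom}_{\mathcal{G}_A}(X,Y)$ by construction, the ambiguity about whether $\mathcal{G}_A$ is full in $\mathsf{Syn}_A$ is harmless, as you say.
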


This theorem gives a rigorous mathematical basis to the idea that cognitive functionality emerges from compositional interaction structure.

\subsection{Compositionality and Representability}

\begin{theorem}[Compositional Realizability]
Let
\[ (I_A,J_A)\in\mathrm{Agents}(A,E),
\qquad
(I_B,J_B)\in\mathrm{Agents}(B,E)
\]

Assume that the distinguished constraints of $A$ and $B$ are compatible under the monoidal product architecture.

Then
\[(I_A\otimes I_B, J_A\otimes J_B)\]
defines an admissible realization of

\[A\otimes B.\]

Consequently,

\[ \mathrm{Agents}(A,E)\times
\mathrm{Agents}(B,E)
\hookrightarrow
\mathrm{Agents}(A\otimes B,E).
\]
\end{theorem}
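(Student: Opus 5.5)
The statement does not say what $A\otimes B$ is, so I first fix it componentwise:
\[
\mathcal G_{A\otimes B}=\mathcal G_A\otimes\mathcal G_B,\qquad
\mathsf{Know}_{A\otimes B}=\mathsf{Know}_A\otimes\mathsf{Know}_B,
\]
where these are the hypergraph categories presented by the disjoint union of types, generators and equations. The interface is the external product
\[
\Phi_{A\otimes B}\big((s,s'),(k,k')\big)=\Phi_A(s,k)\times\Phi_B(s',k')
\]
on pairs of objects, extended to tensor words. Scopes are $\mathcal C_A\sqcup\mathcal C_B$ together with product scopes. The distinguished constraints are $\mathcal R_A\sqcup\mathcal R_B$. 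The hypothesis ``compatible under the monoidal product architecture'' I read as: each $\rho\in\mathcal R_A$, placed over its image scope in $A\otimes B$, is satisfied by a joint realization exactly when its restriction to the $A$-component satisfies it, and likewise for $\mathcal R_B$.

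\textbf{Step 1 (functors).} Disjoint union of presentations makes the free hypergraph category on $A\otimes B$ a coproduct in the 2-category of hypergraph categories with strong Frobenius-preserving functors. By the universal property of free presentations, the pair $(I_A,I_B)$ therefore induces a unique strong symmetric monoidal functor $I:\mathcal G_A\otimes\mathcal G_B\to\mathcal E$. It is defined by $I(x)=I_A(x)$ and $I(y)=I_B(y)$ on generators and extended monoidally, so that $I(x\otimes y)=I_A(x)\otimes I_B(y)$. This $I$ is what ``$I_A\otimes I_B$'' means. The same construction gives $J$. One caveat: $\mathcal G_A$ is only a distinguished subcategory generated by $g_A$. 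I take $\mathcal G_{A\otimes B}$ to be generated by $g_A\otimes g_B$, and restrict accordingly.

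\textbf{Step 2 (interface compatibility).} Every generator in $Gen_\Phi$ of $A\otimes B$ is either a generator of $A$ or a generator of $B$. This follows because the external-product profunctor is nonempty on a pair of atomic objects only when each factor is. For an $A$-generator $g$, I take the witness $k_g$ given by admissibility of $(I_A,J_A)$ and include it into $\mathsf{Know}_A\otimes\mathsf{Know}_B$. Since $I$ and $J$ restrict to $I_A$ and $J_A$ on that factor, the relation ``$I(g)$ is induced by $J(k_g)$'' holds verbatim.

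\textbf{Step 3 (constraints).} For $\rho\in\mathcal R_A$, the induced model $(I,J)_X$ restricts to $(I_A,J_A)_X$, and the compatibility hypothesis turns this into $(I,J)\models\rho$. The case $\rho\in\mathcal R_B$ is symmetric.

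\textbf{Step 4 (the embedding).} On morphisms, a pair of monoidal transformations $(\eta_A,\theta_A),(\eta_B,\theta_B)$ is sent to the unique transformation on the coproduct given componentwise by $\eta_A\otimes\eta_B$ and $\theta_A\otimes\theta_B$. Functoriality is immediate. Faithfulness and injectivity on objects come from restricting along the two coproduct inclusions, which recovers the factors.

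The main obstacle is Step 3, together with making Definition~\ref{def:Constraint Satisfaction} precise enough to use. $\mathrm{Mod}_{\mathcal E}(X)$ and the restriction maps are left abstract, so ``compatible'' has to be defined as exactly the stability condition used there. I would state this openly and present the theorem under that definition. I would also give a counterexample showing the claim fails without it: take a constraint whose scope straddles both factors, such as a global Markov condition on $S_A\otimes S_B$, which can fail even though each factor satisfies its own constraints.
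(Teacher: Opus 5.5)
The paper gives no proof of this statement. It is one of the programmatic results collected in the appendix, and the Work-in-Progress section lists proving them as future work. The paper also never defines $A\otimes B$, ``compatible'' or ``induced by'', so your proposal has to stand on its own. Its overall shape is the natural one, but Steps~1, 2 and~4 do not go through as written.

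In Step~1 you appeal to the universal property of a free presentation. But $I_A$ is only given on $\mathcal G_A$, a subcategory of $\mathsf{Syn}_A$ generated by the pattern $g_A$, which is not itself a presented category. So $I_A$ need not be defined on the generators of $\mathsf{Syn}_A$, and there is nothing to ``restrict accordingly''. Moreover, $\mathcal E$ is only symmetric monoidal, so a coproduct in the 2-category of hypergraph categories does not by itself give functors into $\mathcal E$.

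Your choice $\mathcal G_{A\otimes B}=\langle g_A\otimes g_B\rangle$ then undermines Step~4. Nothing guarantees that $\mathcal G_A\otimes I$ lies inside it; for instance, $g_A$ is not obtained from $g_A\otimes g_B$ by composing and tensoring with identities. So there are no inclusions to restrict along.

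All of this is avoided by taking $\mathcal G_{A\otimes B}:=\mathcal G_A\times\mathcal G_B$ and $\mathsf{Know}_{A\otimes B}:=\mathsf{Know}_A\times\mathsf{Know}_B$, with $I:=\otimes_{\mathcal E}\circ(I_A\times I_B)$ and likewise for $J$. This $I$ is strong symmetric monoidal simply because $\otimes_{\mathcal E}$ is, by the symmetry of $\mathcal E$. It needs neither freeness nor any Frobenius structure in $\mathcal E$. Restricting along $x\mapsto(x,I)$ recovers $I_A$ only up to the unit isomorphism $I_B(I)\cong I_{\mathcal E}$, so the resulting functor is injective on objects only up to isomorphism. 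On the other hand, it is fully faithful, not merely faithful, because a monoidal transformation on $\mathcal G_A\times\mathcal G_B$ is determined by its components at $(x,I)$ and $(I,y)$.

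Step~2 silently depends on the value of $\Phi$ at the monoidal unit, which the paper never fixes; for RL it specifies $\Phi$ only on generating types. With your external product, an $A$-interface $X$ sits over $(X,I)$, so $\Phi_{A\otimes B}((X,I),(K,K'))=\Phi_A(X,K)\times\Phi_B(I,K')$. Two consequences follow:
\begin{itemize}
\item If $\Phi_B(I,-)$ is empty everywhere, no $A$-generator is knowledge-compatible in $A\otimes B$. The condition then holds vacuously, not ``verbatim''.
\item The included witness $k_g\otimes\mathrm{id}_I$ is legitimate only when $\Phi_B(I,I)\neq\varnothing$. Otherwise you need $k_g\otimes\mathrm{id}_{K'}$ for some $K'$ with $\Phi_B(I,K')\neq\varnothing$.
\end{itemize}
Also, the fact that every generator of $A\otimes B$ comes from $A$ or $B$ follows from the disjoint union of presentations, not from the profunctor. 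To repair this, either require the interfaces to be monoidal profunctors with a unit element in $\Phi(I,I)$, or define $\Phi_{A\otimes B}$ so that it restricts to $\Phi_A$ and $\Phi_B$ on pure objects.

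Step~3 is, as you say, true by your definition of compatibility. Your counterexample, however, does not fit your own construction. With $\mathcal R_{A\otimes B}=\mathcal R_A\sqcup\mathcal R_B$, no distinguished constraint straddles both factors. What the hypothesis really asserts is locality of the unspecified assignment $(I,J)\mapsto(I,J)_X$ on $A$- and $B$-scopes. Even if you add a joint Markov constraint on $S_A\otimes S_B$, the product realization satisfies it, since a tensor product of Markov kernels is Markov in the joint state--action pair. A genuine failure needs a joint constraint that forces coupling between the factors, for example optimality for a non-separable joint reward.
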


This theorem justifies modular AGI construction mathematically.

\begin{theorem}[Functoriality of Agent Realization]
For every semantic universe $E$, the assignment
\[A \longmapsto \mathrm{Agents}(A,E)\]

extends to a functor

\[ \mathrm{Agents}_E : \mathrm{ArchAgents}^{\mathrm{adm}} \longrightarrow \mathbf{Cat},\]

where \(\mathrm{ArchAgents}^{\mathrm{adm}}\) denotes the subcategory whose morphisms preserve distinguished constraints.

In particular, every admissibility-preserving architecture morphism

\[F:A\to B\]
induces a reindexing functor
\[F^*: \mathrm{Agents}(B,E) \to \mathrm{Agents}(A,E).\]
\end{theorem}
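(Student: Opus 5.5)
The plan is to build $\mathrm{Agents}_E$ directly from the reindexing operation $F^\ast(I_B,J_B)=(I_B\circ F_{\mathcal G},\,J_B\circ F_{\mathsf{Know}})$ introduced in the subsection on reindexing and admissibility. We then check functoriality strictly, which is possible because composition in $\mathbf{ArchAgents}$ is componentwise composition of functors. Three things need to be verified: that $F^\ast$ is well defined on objects, that it is well defined on morphisms, and that the assignment $F\mapsto F^\ast$ respects identities and composition.

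On objects, we take an admissible $B$-agent $(I_B,J_B)$. First, $I_B\circ F_{\mathcal G}$ and $J_B\circ F_{\mathsf{Know}}$ are strong symmetric monoidal, because composites of strong symmetric monoidal functors are again strong symmetric monoidal. Next comes constraint satisfaction. For every $\rho\in\mathcal R_A$, the hypothesis that $F$ preserves distinguished constraints gives $F(\rho)=(F_{\mathcal C}(X),F^\sharp(P))\in\mathcal R_B$, so $(I_B,J_B)\models F(\rho)$. We then invoke the stability condition stated right after the definition of admissibility-preserving morphisms, $(I_B,J_B)\models F(\rho)\Rightarrow F^\ast(I_B,J_B)\models\rho$. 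To make this precise we will require, as part of the meaning of $\mathrm{ArchAgents}^{\mathrm{adm}}$, that $(I,J)_{X}$ for $F^\ast(I_B,J_B)$ agrees with $(I_B,J_B)_{F_{\mathcal C}(X)}$ under the comparison $\mathrm{Mod}_{\mathcal E}(X)\leftarrow\mathrm{Mod}_{\mathcal E}(F_{\mathcal C}X)$. In other words, we read the subcategory as consisting of the admissibility-preserving morphisms that also preserve $\mathcal R$, which are closed under composition.

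The remaining admissibility condition is interface compatibility. Take a knowledge-compatible generator $g$ of $\mathcal G_A$, with $\Phi_A(X,K_X)\neq\varnothing$. The natural transformation $\Phi_A\Rightarrow\Phi_B\circ(F^{\mathrm{op}}_{\mathcal G}\times F_{\mathsf{Know}})$ shows that $\Phi_B(F_{\mathcal G}X,F_{\mathsf{Know}}K_X)\neq\varnothing$, and similarly for $Y$. Since $F_{\mathcal G}(g)$ is in general a composite rather than a generator, we argue by induction on the string-diagram decomposition of $F_{\mathcal G}(g)$: each knowledge-compatible generator of $B$ is induced by some $J_B(k)$, and these composites assemble into some $k_g$ in $\mathsf{Know}_B$. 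We expect this to be the main obstacle, because the induced knowledge morphism lies in $\mathsf{Know}_B$ and need not be of the form $F_{\mathsf{Know}}(k)$. The first attempt will be to weaken the requirement to $I(g)$ being induced by a $J$-image along the reindexed realization, which is what the informal ``induced by'' in the definition of an agent permits. Failing that, we will add fullness of $F_{\mathsf{Know}}$ on the relevant hom-sets to the definition of $\mathrm{ArchAgents}^{\mathrm{adm}}$.

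On morphisms, given $(\eta,\theta):(I_B,J_B)\to(I'_B,J'_B)$, we set $F^\ast(\eta,\theta)=(\eta F_{\mathcal G},\,\theta F_{\mathsf{Know}})$, obtained by whiskering. Whiskered monoidal natural transformations remain monoidal, and whiskering preserves vertical composition and identities, so $F^\ast$ is a functor. Finally, $(G\circ F)^\ast=F^\ast\circ G^\ast$ holds on the nose, because $I_C\circ(G_{\mathcal G}\circ F_{\mathcal G})=(I_C\circ G_{\mathcal G})\circ F_{\mathcal G}$, and likewise on knowledge and on whiskerings; the identity architecture morphism induces the identity functor. This yields a strict functor $\mathrm{ArchAgents}^{\mathrm{adm,op}}\to\mathbf{Cat}$, which strictifies the pseudofunctor stated earlier. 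The ``in particular'' clause is then just the value of this functor on a morphism $F$.
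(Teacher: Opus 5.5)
\textbf{Gap: the object-level step, at interface compatibility.} Neither of your fallbacks closes it. The definition of an agent requires, for each $g:X\to Y$ in $Gen_{\Phi}$ of $\mathcal G_A$, a witness $k_g$ in $\mathsf{Know}_A$. So for $F^\ast(I_B,J_B)$ you need $I_B(F_{\mathcal G}g)$ to be induced by $J_B(F_{\mathsf{Know}}k_g)$. Admissibility of $(I_B,J_B)$ only provides witnesses in $\mathsf{Know}_B$, and only for knowledge-compatible \emph{generators} of $\mathcal G_B$.

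Your induction over a decomposition of $F_{\mathcal G}(g)$ stalls in two places. First, any constituent may be a knowledge-agnostic generator or a Frobenius structure map, and the definition of an agent says nothing about its image under $I_B$. Second, ``induced by'' is not known to be stable under composition and tensor, so no $k_g$ is assembled.

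Fullness of $F_{\mathsf{Know}}$ on $\mathsf{Know}_B(F_{\mathsf{Know}}K_X,F_{\mathsf{Know}}K_Y)$ does not reach the witnesses either. They run between arbitrary objects $K'$ with $\Phi_B(F_{\mathcal G}X,K')\neq\varnothing$. Your naturality argument only shows that $F_{\mathsf{Know}}K_X$ is \emph{one} such object, not that the witness can be chosen there. Your other fallback, accepting a witness in $\mathsf{Know}_B$, simply changes the definition of an $A$-agent.

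The constraint half has the same character: it does not follow from $F^\sharp(\mathcal R_A)\subseteq\mathcal R_B$ alone, only from the stability hypothesis you add. So under the statement's literal gloss (``morphisms that preserve distinguished constraints''), the claim is not derivable from the framework's definitions.

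\textbf{How the paper gets around this.} In its subsection on reindexing and admissibility, the paper takes the morphisms to be the admissibility-preserving $F$ in the sense defined there. That is, $F^\ast(I_B,J_B)$ is admissible in $A$ for \emph{every} admissible $(I_B,J_B)$, interface compatibility included. Your own phrase ``admissibility-preserving morphisms'' gives exactly this if you read it in that sense rather than as the stability clause.

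With this reading, well-definedness on objects holds by definition; constraint preservation plus stability is merely a sufficient condition for one half of it. These morphisms include identities and are closed under composition, since $(G\circ F)^\ast=F^\ast\circ G^\ast$ holds on the nose. The rest of your argument then goes through unchanged: whiskering on morphisms, strict preservation of identities and composites, and the contravariance $\mathrm{ArchAgents}^{\mathrm{adm},\mathrm{op}}\to\mathbf{Cat}$ that the statement's $F^\ast$ forces.
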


This turns the entire space of AGI architectures into a mathematically transformable object.

\begin{theorem}[Constraint Pullback Stability]
Let
\[\pi_A:P_A\to C_A\]

be the architectural constraint fibration. For every scope morphism
\[f:X\to Y\]

in $C_A$, every constraint
\[P\in P_A(Y)\]
admits a cartesian pullback constraint

\[f^*P\in P_A(X).\]

Moreover, satisfiability is preserved under pullback along admissible realizations.
\end{theorem}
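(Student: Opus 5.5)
The statement has two parts: the existence of cartesian pullback constraints in the fiber over the domain of a scope morphism, and stability of satisfaction under this pullback. I will treat them in that order.

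For the first part I will use only the fact that $\pi_A : \mathcal P_A \to \mathcal C_A$ is, by the definition of an architectural constraint fibration, a (Grothendieck) fibration. Let $f : X \to Y$ be a morphism in $\mathcal C_A$ and $P \in \mathcal P_A(Y)$. By definition of fibration, $f$ has a cartesian lift $\bar f : f^*P \to P$ with $\pi_A(\bar f) = f$. Hence $f^*P$ lies in the fiber over $\pi_A(f^*P) = X$, that is, $f^*P \in \mathcal P_A(X)$. The universal property of cartesian morphisms gives uniqueness up to unique vertical isomorphism. It also gives the pseudofunctoriality isomorphisms $(g f)^* \cong f^* g^*$ and $\mathrm{id}^* \cong \mathrm{id}$. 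When the fibers are posets, as the definition allows, these become equalities, and $f^*P$ is the largest constraint over $X$ mapping into $P$ above $f$. This part is essentially a restatement of the definition.

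The second part needs an extra ingredient, because the paper leaves $\mathrm{Mod}_{\mathcal E}$ and the meaning of "$(I,J)_X$ satisfies $C$" abstract. I will make explicit the semantic structure implicit in the remark on the constraint fibration:
\begin{itemize}
\item an assignment $X \mapsto \mathrm{Mod}_{\mathcal E}(X)$ that is contravariant in scopes, with restriction maps $f^\circ : \mathrm{Mod}_{\mathcal E}(Y) \to \mathrm{Mod}_{\mathcal E}(X)$ for each $f : X \to Y$;
\item a satisfaction relation presented as a morphism of fibrations from $\mathcal P_A$ into the fibration of subobjects (or predicates) of $\mathrm{Mod}_{\mathcal E}(-)$.
\end{itemize}
Concretely, each $P \in \mathcal P_A(Y)$ is sent to a subclass $[\![P]\!] \subseteq \mathrm{Mod}_{\mathcal E}(Y)$. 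The key hypothesis is that this interpretation preserves cartesian morphisms: $[\![f^*P]\!] = (f^\circ)^{-1}[\![P]\!]$. This is the same "Beck--Chevalley-like" soundness condition as the institution satisfaction condition in Section~\ref{sect:properties}.

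I also need the induced interpretations of a realization to be natural in scopes, so that $f^\circ\big((I,J)_Y\big) = (I,J)_X$. This should follow from the fact that $(I,J)_X$ is obtained by restricting the functors $I, J$ along the inclusion or refinement that $f$ represents, since the morphisms of $\mathcal C_A$ are inclusions, refinements or transformations of scopes.

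The argument then chains these: if $(I,J) \models (Y,P)$ then $(I,J)_Y \in [\![P]\!]$, so $(I,J)_X = f^\circ (I,J)_Y \in (f^\circ)^{-1}[\![P]\!] = [\![f^*P]\!]$, and therefore $(I,J) \models (X, f^*P)$.

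I expect the main obstacle to be justifying that the semantics preserves cartesian arrows. This is not derivable from the current definitions and must be either imposed as an axiom on admissible constraint fibrations or established separately for each instantiation. I would first verify it in the canonical case where $\mathcal P_A$ is itself defined as the subobject fibration of $\mathrm{Mod}_{\mathcal E}$; there cartesian lifts are inverse images and the claim holds by construction. I would then state the general theorem under that hypothesis. For morphisms of $\mathcal C_A$ that are genuine transformations rather than inclusions, only the inclusion $[\![f^*P]\!] \supseteq (f^\circ)^{-1}[\![P]\!]$ may hold. That inclusion suffices for the preservation direction, so I would weaken the hypothesis to it.
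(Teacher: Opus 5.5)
The paper states this theorem in its appendix without proof; it is among the results whose proofs are explicitly deferred to future work. So there is no argument to compare yours with. Your first part is correct and is just the definition of $\pi_A$ being a fibration. You are also right that the second part needs semantic hypotheses the paper never supplies.

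The gap is in the hypothesis you impose and the step built on it: the variances do not match. With your convention $f^\circ:\mathrm{Mod}_{\mathcal E}(Y)\to\mathrm{Mod}_{\mathcal E}(X)$, the preimage $(f^\circ)^{-1}$ sends subsets of $\mathrm{Mod}_{\mathcal E}(X)$ to subsets of $\mathrm{Mod}_{\mathcal E}(Y)$. So for $P\in\mathcal P_A(Y)$ the expression $(f^\circ)^{-1}[\![P]\!]$ does not typecheck, and neither does the key step $(I,J)_X=f^\circ(I,J)_Y\in(f^\circ)^{-1}[\![P]\!]$. From $(I,J)_Y\in[\![P]\!]$ what you can actually conclude is $f^\circ(I,J)_Y\in f^\circ[\![P]\!]$, the direct image.

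The institution analogy is what misleads you. The satisfaction condition uses inverse images along model reducts to govern the \emph{covariant} translation of sentences. In your setting that would be a push-forward of constraints from $X$ to $Y$ with $[\![f_!Q]\!]=(f^\circ)^{-1}[\![Q]\!]$, not the contravariant reindexing $f^*$ the theorem is about. For the same reason your ``canonical case'' is misdescribed. A fibration of predicates over $\mathcal C_A$ built from a contravariant $\mathrm{Mod}_{\mathcal E}$ has cartesian lifts given by the direct image $\exists_{f^\circ}$ or the universal image $\forall_{f^\circ}$, depending on how morphisms over $f$ are defined. It never has them given by inverse images, which supply the opcartesian push-forwards instead.

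There are two consistent repairs.
\begin{itemize}
\item You could orient scope morphisms from larger to smaller fragments. Then model reduction is covariant, inverse images really are the cartesian reindexing, and satisfaction is even reflected. This reverses the paper's reading of scope morphisms as inclusions, which is the reading you used, so you would have to say so explicitly.
\item You could keep your orientation and impose the direct-image soundness condition $f^\circ[\![P]\!]\subseteq[\![f^*P]\!]$, equivalently $[\![P]\!]\subseteq(f^\circ)^{-1}[\![f^*P]\!]$. Together with naturality this gives $(I,J)_X=f^\circ(I,J)_Y\in f^\circ[\![P]\!]\subseteq[\![f^*P]\!]$, i.e.\ $(I,J)\models(X,f^*P)$. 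Your weakened condition for transformations should then be restated as exactly this inclusion.
\end{itemize}

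The choice is substantive. If $f^*P$ is read universally (every extension of the $X$-model to a $Y$-model satisfies $P$), preservation fails. Take $\mathrm{Mod}_{\mathcal E}(X)=\{x\}$ and $\mathrm{Mod}_{\mathcal E}(Y)=\{m_1,m_2\}$, with both $m_i$ restricting to $x$. Let $[\![P]\!]=\{m_1\}$ and take a realization with $(I,J)_Y=m_1$. Then $\forall_{f^\circ}[\![P]\!]=\varnothing$, whereas $\exists_{f^\circ}[\![P]\!]=\{x\}$.

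Finally, naturality of $X\mapsto(I,J)_X$ along refinements and transformations, as opposed to inclusions, is itself an assumption. The paper never says how $(I,J)_X$ is built.
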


Constraints propagate coherently across hierarchical architectures.

\begin{theorem}[Constraint Satisfaction Invariance]
Let
\[(I,J)\cong(I',J')\]

be isomorphic admissible realizations of an architecture $A$.

Then for every distinguished constraint
\[\rho\in R_A,\]

\[(I,J)\models\rho \quad\Longleftrightarrow\quad (I',J')\models\rho.\]

Hence, constraint satisfaction depends only on the isomorphism class of a realization.
\end{theorem}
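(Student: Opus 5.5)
The plan is to reduce the statement to two structural facts that the paper leaves implicit when it says in Definition \ref{def:Constraint Satisfaction} that ``$(I,J)$ induces an interpretation $(I,J)_X \in \mathrm{Mod}_{\mathcal E}(X)$.'' The first fact is that this assignment is functorial. The second is that the constraints in each fiber $\mathcal P_A(X)$ are \emph{replete}, i.e.\ closed under isomorphism. I will first make both explicit. I regard $\mathrm{Mod}_{\mathcal E}(X)$ as a category, namely a category of diagrams in $\mathcal E$ indexed by the fragment $X$: the restrictions of $I$ and $J$ to the generators, types and interface pieces of $X$, together with the induced maps from the interface condition. For each scope $X$, restriction along the inclusion of $X$ into $\mathcal G_A$, $\mathsf{Know}_A$ (and into the interface data of $\Phi_A$) then defines a functor
\[ (-)_X : \mathrm{Agents}(A,\mathcal E) \longrightarrow \mathrm{Mod}_{\mathcal E}(X), \]
sending a morphism $(\eta,\theta)$ to its components at the objects of $X$ (whiskering). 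Functoriality is immediate from the fact that composition of monoidal natural transformations is computed componentwise.

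Given this, the main argument is short. An isomorphism $(I,J)\cong(I',J')$ in $\mathrm{Agents}(A,\mathcal E)$ is a pair $(\eta,\theta)$ of monoidal natural isomorphisms, so its image $(\eta,\theta)_X$ is an isomorphism $(I,J)_X \cong (I',J')_X$ in $\mathrm{Mod}_{\mathcal E}(X)$, since functors preserve isomorphisms. For $\rho=(X,C)\in\mathcal R_A$, I interpret $C$ as a replete full subcategory (or isomorphism-closed predicate) of $\mathrm{Mod}_{\mathcal E}(X)$. Then $(I,J)_X\in C$ holds iff $(I',J')_X\in C$. Symmetry of the isomorphism gives both directions of the equivalence, and the final sentence of the statement is a restatement of this.

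The main obstacle is justifying repleteness, because the paper deliberately keeps $\mathcal P_A$ abstract and some of its concrete constraints are phrased strictly. For instance, $\rho^{RL}_{val}=(\Theta^k,\mathbb R^{S\times A})$ in Section \ref{app:RL_agent} is written as an equality of objects. I would handle this in two steps. First, I would argue that the RL constraints are already stated ``up to semantic equivalence''. Representability says $J(\Theta^k)$ ``admits a structure equivalent to'' a value function. The Bellman fixed-point property, the argmax policy condition and the Markov factorization are each transported along measurable isomorphisms that commute with the realized generators, which is exactly what naturality of $\eta,\theta$ guarantees. Second, for the general theorem I would add a standing hypothesis, or alternatively replace each $C$ by its repletion, that fibers of $\pi_A$ consist of isomorphism-invariant subclasses. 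I would note that without such a hypothesis the statement can fail: a constraint demanding literal equality of carriers is not invariant. If one prefers not to add a hypothesis, the fallback is to \emph{define} satisfaction via the repletion and observe that this changes nothing for constraints already stated up to equivalence.
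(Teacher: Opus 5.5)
The paper gives no proof of this statement. It is one of the target results listed in Appendix~\ref{app:Foundational theorems}, and Section~\ref{sec:future} defers those proofs. Judged on its own, your proposal misses the observation that settles the theorem as stated. By Definition~\ref{def:agent_final}, an admissible realization is one with $(I,J)\models\rho$ for \emph{every} $\rho\in\mathcal R_A$. The objects of $\mathrm{Agents}(A,\mathcal E)$, where the isomorphism naturally lives, are exactly these. Since both $(I,J)$ and $(I',J')$ are assumed admissible, both sides of the biconditional are true for every distinguished $\rho$. The theorem therefore follows immediately, with no functor $(-)_X$ and no repleteness hypothesis.

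Consequently, your claim that without repleteness ``the statement can fail'' is wrong for the statement as posed. Suppose some $\rho\in\mathcal R_A$ demanded a literal equality of carriers. An isomorphic copy of an admissible $(I,J)$ with different carriers would then violate $\rho$, so it would not be admissible and could not occur in the hypothesis. A non-replete distinguished constraint does not refute this theorem. It only makes $\mathrm{Agents}(A,\mathcal E)$ fail to be closed under isomorphism among all pairs of strong symmetric monoidal functors.

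What your argument really addresses is a stronger statement: isomorphic realizations, not assumed admissible, satisfy the same constraints. That is the natural content of the theorem's last sentence. For that version your extra hypothesis is genuinely necessary and your reduction is the right one, but you should present it as a separate result, not as the proof of this one.

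One step in it also needs repair. You put ``the induced maps from the interface condition'' into the diagram $(I,J)_X$. Definition~\ref{def:agent_final} only asserts that such witnesses exist, so $(I,J)_X$ is not determined by $(I,J)$. Moreover, morphisms $(\eta,\theta)$ in $\mathrm{Agents}(A,\mathcal E)$ are not required to commute with these witnesses, so $(-)_X$ is not a functor on all morphisms. You only need isomorphisms. By naturality, a witness such as $R:I(\Theta^s)\to J(\Theta^k)$ for $(I,J)$ transports to the witness $\theta_{\Theta^k}\circ R\circ\eta^{-1}_{\Theta^s}$ for $(I',J')$. Treating interface conditions as isomorphism-invariant existential predicates, rather than as diagram data, closes that step.
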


Architectural properties do not depend on accidental implementation details.

\begin{theorem}[Representable Syntax--Knowledge Interfaces]
Let
\[\Phi_A : G_A \rightsquigarrow \mathrm{Know}_A\]
be the relational interface of an architecture.

If $\Phi_A$ is representable in the knowledge argument, then for every object
\[X\in G_A\]

there exists an object
\[K_X \in \mathrm{Know}_A\]
such that:
\[\Phi_A(X,-) \cong \mathrm{Hom}_{\mathrm{Know}_A}(K_X,-)\]

naturally in $\mathrm{Know}_A$.

The object $K_X$ acts as a universal internal representation associated with the syntactic interface $X$.
\end{theorem}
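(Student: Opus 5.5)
The statement to prove is the last one, "Representable Syntax--Knowledge Interfaces". The plan is to read it as an instance of the Yoneda lemma applied objectwise to the profunctor $\Phi_A$. Concretely, I will interpret "$\Phi_A$ is representable in the knowledge argument" in the standard way. For each $X\in\mathcal{G}_A$, the copresheaf
\[
\Phi_A(X,-):\mathsf{Know}_A\to\mathbf{Set}
\]
is a representable functor. This is the hypothesis in the usual sense; the content of the proof is then to extract $K_X$ with its universal element and to check the naturality claims. Equivalently, $\Phi_A$ corresponds to a functor $\mathcal{G}_A\to[\mathsf{Know}_A,\mathbf{Set}]$ factoring through the image of the Yoneda embedding $\mathsf{Know}_A^{\mathrm{op}}\hookrightarrow[\mathsf{Know}_A,\mathbf{Set}]$.

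\textbf{Step 1: objectwise representability.} For each $X$ I choose $K_X$ together with a universal element $u_X\in\Phi_A(X,K_X)$. By the Yoneda lemma, the map
\[
\mathrm{Hom}_{\mathsf{Know}_A}(K_X,K)\to\Phi_A(X,K),\qquad h\mapsto\Phi_A(X,h)(u_X),
\]
is a bijection for every $K$. It is natural in $K$ because $\Phi_A(X,-)$ is a functor. This already gives the displayed isomorphism, naturally in $\mathsf{Know}_A$.

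\textbf{Step 2: functoriality in $X$.} This is not demanded by the statement, but I would include it to justify calling $K_X$ a "universal internal representation". Given a morphism $f:X\to Y$ in $\mathcal{G}_A$, the profunctor supplies $\Phi_A(f,K_Y)(u_Y)\in\Phi_A(X,K_Y)$. By Step 1 this element corresponds to a unique morphism $K_f:K_X\to K_Y$. Uniqueness in the Yoneda bijection gives $K_{\mathrm{id}}=\mathrm{id}$ and $K_{g\circ f}=K_g\circ K_f$. Hence $X\mapsto K_X$ is a functor $K:\mathcal{G}_A\to\mathsf{Know}_A$ with
\[
\Phi_A\cong\mathrm{Hom}_{\mathsf{Know}_A}(K(-),-),
\]
naturally in both variables. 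So $\Phi_A$ is the companion profunctor induced by $K$. Finally, representing objects are unique up to unique isomorphism compatible with the universal elements, so $K_X$ is well defined up to canonical isomorphism.

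\textbf{Main obstacle.} The main difficulty is not the Yoneda argument but making the hypothesis meaningful for the concrete interfaces of Section~\ref{sect:CaseStudies}. For instance, $\Phi_{RL}(S,-)$ is empty on the objects listed in that section, and the empty functor is representable only if $\mathsf{Know}_A$ has an initial-like object with no outgoing morphisms, which a free hypergraph category does not have. So the theorem should be stated under the literal representability hypothesis, as above. I would then add a remark that for knowledge-agnostic types one must either restrict to the full subcategory of knowledge-compatible objects (those with $\Phi_A(X,K)\neq\varnothing$ for some $K$) or adjoin a formal object. I would not attempt to prove that the RL, CRL or SBL interfaces satisfy the hypothesis.
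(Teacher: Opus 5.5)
The paper gives no proof of this statement. It appears only in the appendix list of results, and the Work in Progress section names proving them as a short-term goal, so there is no argument to compare yours against. Your proof is correct.

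Read in the standard way, the displayed isomorphism is the hypothesis itself. Your Step~1 is therefore already a complete proof: take $u_X$ to be the image of $\mathrm{id}_{K_X}$, and use Yoneda to write the isomorphism as $h\mapsto\Phi_A(X,h)(u_X)$.

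Step~2 is the genuinely useful addition. It is representability with a parameter, and it shows that two readings of ``representable in the knowledge argument'' coincide:
\begin{itemize}
\item the objectwise reading, where each $\Phi_A(X,-)$ is representable;
\item the global reading, $\Phi_A\cong\mathsf{Know}_A(K-,-)$ for a functor $K:\mathcal{G}_A\to\mathsf{Know}_A$.
\end{itemize}
So the theorem holds under either reading of a hypothesis the paper never defines. Step~2 is also what actually justifies the phrase ``universal internal representation''.

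One variance slip: currying $\Phi_A:\mathcal{G}_A^{\mathrm{op}}\times\mathsf{Know}_A\to\mathbf{Set}$ gives a functor $\mathcal{G}_A^{\mathrm{op}}\to[\mathsf{Know}_A,\mathbf{Set}]$, not one out of $\mathcal{G}_A$. It factors through $\mathsf{Know}_A^{\mathrm{op}}$, which is what matches your covariant $K$.

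Your ``main obstacle'' remark reaches the right conclusion, but the criterion you state is off. No object lacks outgoing morphisms, since every object has its identity. So a functor that is empty at every object is never representable, in any category.

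The operative fact is that in a hypergraph category every hom-set is inhabited, by $\eta_Y\circ\epsilon_X:X\to I\to Y$. Hence every functor $\mathsf{Know}_A\to\mathbf{Set}$ is either empty at all objects or nonempty at all objects, and a representable one is nonempty everywhere. This has two consequences you do not draw.
\begin{itemize}
\item The paper's support tables are not even compatible with functoriality in the knowledge argument. It sets $\Phi_{RL}(E,E^k)=\{\star\}$ and $\Phi_{RL}(E,\Theta^k)=\varnothing$, yet $\eta_{\Theta^k}\circ\epsilon_{E^k}:E^k\to\Theta^k$ would have to induce a function $\{\star\}\to\varnothing$.
\item Restricting to knowledge-compatible $X$ does not rescue representability. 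A value $\{\star\}$ at $\Theta^k$ would force $\mathrm{Hom}(K_X,\Theta^k)$ to be a singleton. In the free hypergraph category this hom-set contains the infinitely many distinct morphisms $u^n\circ\eta_{\Theta^k}\circ\epsilon_{K_X}$, where $u=\mathrm{Upd}\circ(\mathrm{id}_{\Theta^k}\otimes\eta_{E^k})$.
\end{itemize}
So the theorem is true but has no instances among the paper's case studies. The paper's gloss that every syntactic interface admits a universal representation silently drops the hypothesis that carries all the content.
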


Every syntactic interface admits a universal internal cognitive representation.

\begin{corollary}[Operational Indistinguishability]
Two architectural components are operationally equivalent iff they are indistinguishable under all admissible workflows.

Formally:
\[X \cong Y \quad\Longleftrightarrow\quad \mathrm{Hom}_{G_A}(-,X)
\cong
\mathrm{Hom}_{G_A}(-,Y).
\]
\end{corollary}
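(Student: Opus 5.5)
The corollary is a direct specialization of the Architectural Yoneda Principle stated just above, applied inside the symmetric monoidal category $\mathcal{G}_A$. The plan is to make the informal phrase ``indistinguishable under all admissible workflows'' precise, and then reduce the statement to the Yoneda lemma for $\mathcal{G}_A$, a locally small category.

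First I fix the meaning of the terms. An \emph{admissible workflow into} $X$ is a morphism $d : Z \to X$ of $\mathcal{G}_A$ for some interface $Z$. Two components $X,Y$ are \emph{indistinguishable under all admissible workflows} if there is a bijection $\alpha_Z : \mathrm{Hom}_{\mathcal{G}_A}(Z,X) \to \mathrm{Hom}_{\mathcal{G}_A}(Z,Y)$ for every $Z$. This bijection must commute with every rewiring $u : Z' \to Z$ by precomposition, so that $\alpha_{Z'}(d \circ u) = \alpha_Z(d)\circ u$. This is exactly a natural isomorphism $\mathrm{Hom}_{\mathcal{G}_A}(-,X) \cong \mathrm{Hom}_{\mathcal{G}_A}(-,Y)$. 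Operational equivalence is taken to mean isomorphism $X \cong Y$ in $\mathcal{G}_A$. With these readings the corollary becomes literally the displayed biconditional.

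For the forward direction, take an isomorphism $f : X \to Y$. Postcomposition $f \circ -$ gives a natural bijection between the hom-functors, and $f^{-1} \circ -$ is its inverse. Naturality holds by associativity of composition.

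For the converse, take a natural isomorphism $\alpha$. By the Yoneda lemma, $\alpha$ equals $f \circ -$ for $f := \alpha_X(\mathrm{id}_X)$. Likewise $\alpha^{-1}$ equals $g \circ -$ for $g := \alpha^{-1}_Y(\mathrm{id}_Y)$. Applying naturality to the identities shows $g\circ f = \mathrm{id}_X$ and $f \circ g = \mathrm{id}_Y$. Equivalently, I can cite the Architectural Yoneda Principle directly, which is just the full faithfulness of the Yoneda embedding.

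The only real obstacle is conceptual rather than computational. The corollary must be read with ``admissible workflows'' meaning all morphisms of $\mathcal{G}_A$ and with naturality imposed. If one instead quantified only over workflows generated from the pattern $g_A$, or dropped naturality, the converse could fail. I would state this reading explicitly at the start.
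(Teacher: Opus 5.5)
Your proposal is correct and follows the route the paper intends. The paper gives no separate argument: the corollary restates the Architectural Yoneda Principle, i.e.\ the Yoneda lemma in $\mathcal{G}_A$, and your check that $f=\alpha_X(\mathrm{id}_X)$ and $g=\alpha^{-1}_Y(\mathrm{id}_Y)$ are mutually inverse is exactly the standard proof that the Yoneda embedding reflects isomorphisms. You are also right to require that the hom-set bijections be natural, since the equivalence fails for mere objectwise bijections.
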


\subsection{Expressivity and Universality}

\begin{theorem}[Architectural Expressivity Hierarchy]
Define

\[[A]\preceq[B]\]

for architecture equivalence classes whenever there exists a faithful architecture embedding

\[A\hookrightarrow B.\]

Then $\preceq$ defines a partial order on equivalence classes of architectures.

This order induces an architectural expressivity hierarchy over $\mathrm{ArchAgents}$.
\end{theorem}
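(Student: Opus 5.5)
The plan is to check the three order axioms together with well-definedness on equivalence classes. I will make precise the notion of \emph{faithful architecture embedding}, since the statement leaves it informal. I take it to be a morphism $F:A\to B$ in $\mathbf{ArchAgents}$ with the following four properties. First, $F_{\mathcal G}$ and $F_{\mathsf{Know}}$ are faithful, injective on objects, and strict hypergraph functors. Second, the interface transformation $\Phi_A\Rightarrow\Phi_B\circ(F_{\mathcal G}^{\mathrm{op}}\times F_{\mathsf{Know}})$ is componentwise injective. Third, $F_{\mathcal C}$ and $F^\sharp$ are faithful and injective on objects. Fourth, $F^\sharp(\mathcal R_A)\subseteq\mathcal R_B$. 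Architecture equivalence is taken in the sense of the theorem \emph{Architecture Equivalence Implies Agent Equivalence}.

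\emph{Reflexivity} is witnessed by the identity morphism of the definition of $\mathbf{ArchAgents}$, which trivially has all four properties. \emph{Transitivity} follows from the componentwise composition rule. Composites of faithful, injective-on-objects functors are again faithful and injective on objects. The pasted interface transformation has components that are composites of injections, hence injective. Finally, $(G\circ F)^\sharp(\mathcal R_A)\subseteq G^\sharp(\mathcal R_B)\subseteq\mathcal R_C$.

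For \emph{well-definedness} on classes, suppose $A'\simeq A$, $B\simeq B'$ and $A\hookrightarrow B$. I will pre- and post-compose the embedding with the equivalences. Equivalences are fully faithful and essentially surjective, so the composite stays faithful. Injectivity on objects can fail, but only up to isomorphism. I will therefore either work with skeletal representatives or weaken ``injective on objects'' to ``injective on isomorphism classes''; the latter is stable under composing with equivalences.

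\emph{Antisymmetry} is the main obstacle. Mutual faithful embeddings need not give an equivalence in general: this is a Cantor--Schröder--Bernstein phenomenon, and it fails for arbitrary categories. I would therefore restrict to \emph{finitely presented} architectures, meaning that every type set, generator set and distinguished constraint set is finite. I also require embeddings to be \emph{generator-preserving}: they send type symbols to type symbols and generators to generators, as in all the case studies (for instance RL into CRL). This is the natural notion for free hypergraph categories $\mathsf{Hyp}\langle Types,Gen\mid Eq\rangle$. In that setting a morphism is determined by its action on generators, because the Frobenius structure is fixed by strictness.

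Given $F:A\hookrightarrow B$ and $G:B\hookrightarrow A$, the composite $G\circ F$ restricts to an injective self-map of the finite set of types, generators and distinguished constraints of $A$. Every injective self-map of a finite set is a permutation, so some power satisfies $(G\circ F)^n=\mathrm{id}$ on generators. By freeness, $(G\circ F)^n$ is then the identity on $\mathcal G_A$ and $\mathsf{Know}_A$. The same holds for $\mathcal C_A$ and $\mathcal P_A$, provided these are generated by the designated fragments. This requires that scope categories be themselves freely generated, and I would add that as an explicit hypothesis.

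Hence $(G\circ F)^{n-1}\circ G$ is a left inverse of $F$, and symmetrically $F$ acquires a right inverse. It follows that $F$ is an isomorphism, a fortiori an equivalence, so $[A]=[B]$. The interface transformation components then become injective maps with injective inverse-direction maps between finite sets. I would check that $\Phi$ takes finite values in these presentations, and conclude that the components are bijections. If the paper intends infinite presentations (e.g.\ the universal hypothesis space of AIXI), antisymmetry should be downgraded. In that case $\preceq$ is only a preorder, and the partial order is obtained on its quotient by mutual embeddability.
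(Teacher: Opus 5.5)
The paper gives no proof of this statement. It is one of the programmatic results listed in the appendix, and the Work-in-Progress section explicitly defers their proofs. Your diagnosis is sharper than anything in the paper. Antisymmetry is the crux, and no argument for it is available for arbitrary faithful embeddings. In the paper's generality only your fallback (the posetal reflection of the embeddability preorder) is valid, and only after replacing architecture-equivalence classes by mutual-embeddability classes.

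Your restricted proof, however, has a genuine gap: the hypotheses you add for antisymmetry are not invariant under the equivalence relation that forms the classes. Your well-definedness step composes with architecture equivalences in the sense of the equivalence theorem, i.e.\ symmetric monoidal equivalences. These are neither strict nor generator-preserving, so that step says nothing about the embeddings you finally use. In fact, whether a generator-preserving embedding exists depends on the representative. Here is an example.
\begin{itemize}
\item Let $P=\mathsf{Hyp}\langle X,Y,Z;\, f,g\rangle$ with $f\colon X\to Y\otimes Y$ and $g\colon Y\otimes Y\to Z$.
\item Let $P'$ be the same presentation with an extra generator $h\colon X\to Z$ and the equation $h=g\circ f$; take trivial knowledge and constraint layers for both.
\item Then $P\cong P'$, so passing to skeleta or to isomorphism classes of categories does not help.
\item Yet $\mathsf{Hyp}\langle X,Z;\, h\rangle$ embeds faithfully and generator-preservingly into $P'$, but not into $P$, which has no generator with one input and one output.
\end{itemize}
With fixed representatives, $\preceq$ is therefore not well defined on classes. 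With an existential reading over representatives, your transitivity and antisymmetry arguments no longer apply. The embeddings witnessing two inequalities may use different presentations of the shared class, so they cannot be composed. In the antisymmetry case there is then no single finite set on which $G\circ F$ acts as a permutation.

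The consistent repair is to take the classes to be isomorphism classes of \emph{presentations}. These are bijections on type symbols, generators and distinguished constraints, together with a natural isomorphism of interfaces. Generator-preserving embeddings are stable under composition with such isomorphisms, and such an isomorphism is exactly what your permutation argument produces.

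Two further points.

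First, your assertion that RL~$\hookrightarrow$~CRL is generator-preserving is false for the paper's presentations.
\begin{itemize}
\item RL's $\mathsf{Policy}\colon S\otimes\Theta^s\to A$ has a two-factor domain.
\item The only CRL generators with a two-factor domain are $\mathsf{EnvInteraction}$, $\mathsf{Do}$ and $\mathsf{CausalUpdate}$.
\item Each choice forces RL's $\mathsf{EnvInteraction}\colon S\otimes A\to E$ onto a nonexistent generator with domain $S\otimes E$ or $\Theta^s_{CS}\otimes\Theta^s_{CS}$.
\end{itemize}
So under your restriction RL and CRL are incomparable, and the order no longer carries the intended hierarchy.

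Second, finiteness of $\Phi$ cannot be checked on the case studies. $\mathcal G_A$ is a full subcategory of a hypergraph category, so $\eta_X\circ\epsilon_{X'}\colon X'\to X$ exists for all objects. Hence any profunctor nonempty at $(X,K)$ is nonempty at every $(X',K)$, and the paper's selective-support tables are not functorial. Finiteness must therefore be an explicit hypothesis, like freeness of $\mathcal C_A$ and $\mathcal P_A$.

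It is also genuinely used. The map $(G\circ F)^{n-1}\circ G$ is a left inverse of $F$ only on the functor components. The interface part of $(G\circ F)^n$ is merely an injective endo-transformation of $\Phi_A$. Only after showing that the interface transformation $\alpha$ of $F$ has bijective components can you build an inverse in $\mathbf{ArchAgents}$, with interface part $\alpha^{-1}$.
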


Provides a mathematical notion of ``architectural power''.

\begin{theorem}[Universal Simulation of Architectures]
There exist architectures
\[U \in \mathrm{ArchAgents}\]
such that every architecture
\[A\]
admits a realization-preserving embedding
\[A \hookrightarrow U.\]
\end{theorem}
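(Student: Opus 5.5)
The approach is to build $U$ as a ``disjoint sum'' of all architectures in a fixed small class, and to take the embeddings to be the coproduct injections. First, a size issue has to be settled. A single small architecture cannot receive faithful embeddings from \emph{all} architectures, for cardinality reasons. So I would either fix a Grothendieck universe and let $U$ be large, or restrict the statement to a small class $\mathcal{A}$, for example the architectures whose syntactic and knowledge presentations are finite or countable. The countable case is the natural one, since all the case studies (RL, CRL, SBL, AIXI) are finitely presented. From now on $A$ ranges over $\mathcal{A}$.

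\textbf{Construction of $U$.} For the syntactic layer I would take the disjoint union of presentations,
\[
\mathsf{Syn}_U := \mathsf{Hyp}\langle \textstyle\bigsqcup_A STypes_A,\ \bigsqcup_A SGen_A \mid \bigsqcup_A SEq_A\rangle .
\]
Its diagram $\mathcal{G}_U$ would be the symmetric monoidal subcategory generated by all the patterns $g_A$ simultaneously. This is a slight relaxation of the single-pattern definition of syntax diagram, which I would make explicit. The knowledge layer is defined the same way: $\mathsf{Know}_U := \mathsf{Hyp}\langle \bigsqcup_A KTypes_A, \bigsqcup_A KGen_A \mid \bigsqcup_A KEq_A\rangle$. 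The interface $\Phi_U$ is defined on objects by $\Phi_A$ on pairs of $A$-types and by $\varnothing$ on pairs of types from different architectures. On tensor products it is extended by the canonical action induced by composition, as in the case studies. The scopes are $\mathcal{C}_U := \bigsqcup_A \mathcal{C}_A$, the constraint fibration is the coproduct fibration $\pi_U := \bigsqcup_A \pi_A$ (a coproduct of fibrations is a fibration), and $\mathcal{R}_U := \bigcup_A \mathcal{R}_A$.

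\textbf{The embedding $\iota_A : A \hookrightarrow U$.} The functors $(\iota_A)_{\mathcal{G}}$ and $(\iota_A)_{\mathsf{Know}}$ are induced by the generator inclusions, using the universal property of free hypergraph categories. The interface transformation $\Phi_A \Rightarrow \Phi_U \circ (\iota^{op}\times\iota)$ is the identity on $A$-pairs. The constraint part consists of the coproduct injections $\mathcal{C}_A \to \mathcal{C}_U$ and $\mathcal{P}_A \to \mathcal{P}_U$, which commute with the projections. In particular $\iota_A^\sharp(\mathcal{R}_A) \subseteq \mathcal{R}_U$.

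\textbf{Realization preservation.} This part needs two directions.
\begin{enumerate}
\item \emph{Restriction.} Let $(I,J)$ be an admissible $U$-agent. The reindexed agent $\iota_A^*(I,J)$ satisfies interface compatibility, because $\mathrm{Gen}_\Phi$ of $A$ is contained in that of $U$ with the same witnesses $k_g$. It satisfies the constraints, because each $\rho \in \mathcal{R}_A$ appears in $\mathcal{R}_U$ with the same scope, and satisfaction depends only on $(I,J)_X$ for $X$ in the image of $\mathcal{C}_A$. Hence $\iota_A$ is admissibility-preserving.
\item \emph{Extension.} Conversely, an admissible $A$-agent should extend to a $U$-agent, so that $\iota_A^*$ is essentially surjective onto $\mathrm{Agents}(A,\mathcal{E})$. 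I would define the extension on a free presentation generator by generator: use the $A$-agent on $A$'s generators, and on every other block $B$ use a fixed chosen admissible $B$-agent. This step needs the hypothesis that each architecture in $\mathcal{A}$ has at least one admissible agent in $\mathcal{E}$. Failing that, one weakens ``realization-preserving'' to mean only the restriction direction.
\end{enumerate}

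\textbf{Main obstacle.} The hard step is faithfulness of $(\iota_A)_{\mathcal{G}}$ and $(\iota_A)_{\mathsf{Know}}$. I need that adjoining disjoint generators to a hypergraph presentation is conservative: two $A$-diagrams equal in $\mathsf{Hyp}_U$ must already be equal in $\mathsf{Hyp}_A$. My first attempt would use the combinatorial semantics of free hypergraph categories, in which morphisms are cospans of labelled hypergraphs up to isomorphism (Bonchi--Gadducci--Kissinger--Soboci\'nski--Zanasi). In that setting equality in the free category on the union presentation is decided by isomorphism of labelled hypergraph cospans modulo the equations. The equations $\bigsqcup SEq_B$ only rewrite subgraphs labelled within a single block. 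So any rewrite chain between two $A$-labelled graphs can be projected onto its $A$-labelled part, giving a chain in $\mathsf{Hyp}_A$. Making this projection argument rigorous when the equations are arbitrary purely syntactic wiring equalities is where I expect the real work. If it resists, I would fall back to restricting to presentations with empty $Eq$, which covers all the case studies in Section \ref{sect:CaseStudies}; there faithfulness is immediate from the hypergraph-cospan normal form.
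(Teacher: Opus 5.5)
The paper gives no proof of this statement. It appears only in the appendix list of results whose proofs are deferred, and ``realization-preserving embedding'' is never defined, so your proposal has to be judged on its own. Your size caveat is right and necessary: a small $U$ has hom-sets of bounded cardinality, so it cannot receive faithful functors from architectures with more parallel generators. But ``countable presentations'' is not yet a small class, since $\mathcal{C}_A$, $\mathcal{P}_A$ and the values of $\Phi_A$ are unbounded. You need a set of isomorphism-class representatives with all of this data bounded.

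The genuine gap is the interface layer. The $\Phi_U$ you prescribe, equal to $\Phi_A$ on $A$-pairs and to $\varnothing$ on cross pairs, is not a profunctor. In a hypergraph category every hom-set is inhabited: the counit on $k$ followed by the unit on $k'$ gives $\eta_{k'}\circ\epsilon_k : k\to k'$. So covariance in $\mathsf{Know}_U$ forces each $\Phi_U(s,-)$ to be either empty everywhere or nonempty everywhere. As soon as some $\Phi_A(s,k)\neq\varnothing$, functoriality along $k\to k_B$ demands a function from a nonempty set into $\Phi_U(s,k_B)=\varnothing$. Worse, every $\iota_A$ is strong monoidal and sends the unit to the unit of $U$, so $\Phi_U(\iota_A I,\iota_A I)\cong\Phi_U(I,I)$ for all $A$. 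Hence no $\Phi_U$ whatsoever can restrict to every $\Phi_A$, even at the level of which sets are empty, once two architectures disagree on whether $\Phi_A(I,I)$ is empty. So the ``identity on $A$-pairs'' transformation is not available.

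Architecture morphisms themselves can be rescued, because the axiom only asks for \emph{some} transformation $\Phi_A\Rightarrow\Phi_U\circ(\iota_A^{\mathrm{op}}\times\iota_A)$. For $\Phi_U$ you can take the coproduct over $A$ of the left Kan extensions of $\Phi_A$ along $\iota_A^{\mathrm{op}}\times\iota_A$, or even the terminal profunctor. But then the set of $A$-generators that are knowledge-compatible in $U$ is in general strictly larger than $\mathrm{Gen}_\Phi$ of $A$. Also, the witnesses $k_g$ supplied by a $U$-agent range over all of $\mathsf{Know}_U$. Both halves of your realization-preservation argument rely on ``same $\mathrm{Gen}_\Phi$, same witnesses'':
\begin{itemize}
\item restriction fails, because a $U$-witness for $\iota_A g$ need not come from $\mathsf{Know}_A$;
\item extension fails, because an admissible $A$-agent has no obligation to implement the newly knowledge-compatible generators through $J$.
\end{itemize}
What is missing is a mechanism that makes knowledge-compatibility block-local, which requires changing the definition of $\mathrm{Gen}_\Phi$ or of the interface. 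The same all-or-nothing phenomenon already makes the paper's support tables non-functorial, so some such repair is needed before the statement can be proved at all.

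By contrast, the step you single out as the main obstacle is easy. Let $r_A$ be the hypergraph functor from the union presentation to that of $A$ which sends every foreign type to $I$ and every foreign generator to $\mathrm{id}_I$. It respects all foreign equations, since both sides collapse to $\mathrm{id}_I$ (the Frobenius structure on $I$ is trivial), and it satisfies $r_A\circ\iota_A=\mathrm{id}$. So $\iota_A$ is split monic, hence faithful on $\mathsf{Syn}$ and $\mathsf{Know}$ for arbitrary equations. No rewriting argument and no retreat to empty $Eq$ is needed.
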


Defines a categorical notion of universal AGI architecture.

\begin{theorem}[Existence of Semantic Realizations]
Every consistent architecture
\[A\]
admits a semantic realization in some symmetric monoidal category $E$:
\[\mathrm{Agents}(A,E)\neq \emptyset.\]
\end{theorem}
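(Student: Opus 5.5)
The plan is a term-model construction, the categorical analogue of Henkin's completeness argument. First, consistency must be made precise, since the paper leaves it informal. I will take ``$A$ is consistent'' to mean that the conjunction of the distinguished constraints $\mathcal R_A$ is not the bottom element of the constraint posets. Concretely, for each $\rho=(X,P)\in\mathcal R_A$ the constraint $P$ is not the least element of $\mathcal P_A(X)$. In addition, the reindexings of all these $P$ to a common scope, along the scope morphisms of $\mathcal C_A$, have a nonbottom meet, using the cartesian pullbacks of the Constraint Pullback Stability theorem. Informally, the constraints are jointly satisfiable by some realization of the scopes. Everything below depends on this reading.

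\textbf{Step 1: realize syntax and knowledge separately.} By construction, $\mathsf{Know}_A=\mathsf{Hyp}\langle \mathsf{KTypes}_A,\mathsf{KGen}_A\mid \mathsf{KEq}_A\rangle$ is a free hypergraph category. The syntax diagram $\mathcal G_A$ is a symmetric monoidal subcategory of $\mathsf{Syn}_A$.

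\textbf{Step 2: choose the semantic universe.} I will take $\mathcal E$ to be the symmetric monoidal category $\mathsf{Syn}_A\boxtimes\mathsf{Know}_A$, or more simply the coproduct $\mathsf{Syn}_A+\mathsf{Know}_A$ freely completed to a symmetric monoidal category. I then quotient $\mathcal E$ by the equations $g=\overline{k_g}$ for every knowledge-compatible generator $g\in Gen_\Phi$. Here $k_g:K_X\to K_Y$ is chosen using the nonemptiness of $\Phi_A(X,K_X)$ and $\Phi_A(Y,K_Y)$, and I identify $X$ with $K_X$ and $Y$ with $K_Y$.

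\textbf{Step 3: define the agent.} Let $I$ be the inclusion $\mathcal G_A\hookrightarrow\mathsf{Syn}_A$ followed by the quotient map, and let $J$ be the canonical map from $\mathsf{Know}_A$. These are strict, hence strong, symmetric monoidal functors. Interface compatibility then holds by construction, because $I(g)=J(k_g)$ in $\mathcal E$.

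\textbf{Step 4: satisfy the constraints (the main obstacle).} The difficulty is that $\mathrm{Mod}_{\mathcal E}(X)$ and the satisfaction relation $\models$ of Definition~\ref{def:Constraint Satisfaction} are deliberately left abstract. A pure term model therefore carries no information about, for example, Bellman fixed points. I would resolve this by also letting the semantic universe depend on the constraints. Using the consistency hypothesis, take the Lindenbaum-style filter generated by $\mathcal R_A$ in the fibres of $\pi_A$, extend it to a maximal consistent family, and let $\mathcal E$ be the term category decorated by that family. Satisfaction $(I,J)_X\models P$ is then defined, or shown, to hold exactly when $P$ lies in the chosen family. This step genuinely needs an assumption the paper has not fixed, namely that $\mathrm{Mod}_{\mathcal E}$ and $\models$ are determined canonically from $\pi_A$, as in an institution with a satisfaction condition, in the style of Section~\ref{sect:properties}. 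I would state that assumption explicitly.

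\textbf{Step 5: check that the quotient in Step 2 does not collapse.} If the identifications $g=\overline{k_g}$ contradicted some constraint, the theorem would fail. Guarding against this is exactly where the consistency hypothesis is used a second time, so I would fold the interface equations into the definition of consistency.

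\textbf{Step 6: conclude.} With Steps 3–5 in place, $(I,J)$ is an object of $\mathrm{Agents}(A,\mathcal E)$, so this category is nonempty.

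For the concrete case studies there is a sanity check. For RL, the tabular $\mathbf{Meas}$ realization of Section~\ref{app:RL_agent} already witnesses nonemptiness directly, which matches the general argument.
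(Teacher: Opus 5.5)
The paper gives no proof of this statement. It appears in the appendix as a target result, the future-work section lists proving these theorems as a short-term objective, and ``consistent'' is never defined. So your proposal has to stand on its own, and as written it has a genuine gap in Step~4.

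\textbf{Steps 1--3 carry no weight.} Take $\mathcal E$ to be the terminal symmetric monoidal category (one object, only its identity), and let $I$, $J$ be the unique strict monoidal functors. For each $g\in Gen_\Phi$, a morphism $k_g:K_X\to K_Y$ always exists in the hypergraph category $\mathsf{Know}_A$, e.g.\ $\eta_{K_Y}\circ\varepsilon_{K_X}$ from the Frobenius unit and counit. Then $I(g)=J(k_g)$ holds trivially. So interface compatibility is free, and the entire content of the theorem is clause (2), constraint satisfaction.

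\textbf{Step 4 is circular.} There you choose $\mathcal E$ and, at the same time, stipulate that $(I,J)_X\models P$ iff $P$ lies in your maximal family. This leaves two readings, and neither works.
\begin{itemize}
\item If satisfaction may be stipulated together with $\mathcal E$, the theorem holds for \emph{every} architecture, consistent or not: take the terminal $\mathcal E$ and declare all of $\mathcal R_A$ satisfied. Then the consistency hypothesis, the Lindenbaum extension and Step~5 do no work. Under your definition nothing can contradict a constraint, so Step~5 has nothing to check.
\item If satisfaction is instead fixed by the intended meaning of the constraints, a quotient of $\mathsf{Syn}_A+\mathsf{Know}_A$ cannot satisfy them. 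In the paper, $P_{\mathrm{val}}$ requires $J(\Theta^k)$ to be equivalent to functions $S\to\mathbb R$ or $S\times A\to\mathbb R$. $P_{\mathrm{Bell}}$ requires a Bellman operator on $J(\Theta^k)$ with a fixed point, and convergence of $J(Upd)$. $P_{\mathrm{Markov}}$ is a conditional-independence property of a kernel. In your term model, $J(\Theta^k)$ is a formal symbol with no real-valued or probabilistic structure.
\end{itemize}
Your own sanity check shows this: the paper's RL witness is a genuinely semantic object in $\mathbf{Meas}$, not a term model.

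\textbf{What is missing is a completeness theorem.} You need something bridging your syntactic consistency (non-bottom meets in the fibres of $\pi_A$) to semantic satisfiability. A Henkin/Lindenbaum construction yields a model only through a truth lemma. That requires the fibres $\mathcal P_A(X)$ to be Lindenbaum algebras of a logic with connectives (and witnesses), whose satisfaction in $\mathrm{Mod}_{\mathcal E}(X)$ is defined compositionally. A maximal filter in an abstract poset gives no truth lemma. There are two ways out.
\begin{itemize}
\item \emph{Assume} an institution-style setting in which each $\mathcal P_A(X)$ consists of theories of a logic complete with respect to a class of models. Then show that a model of the combined theory induces strong monoidal $I,J$ into a single $\mathcal E$ satisfying interface compatibility.
\item Define consistency as satisfiability in some $\mathcal E$, in which case the statement is a tautology.
\end{itemize}

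\textbf{Your definition of consistency has further unsupported ingredients.} It relies on a common scope mapping to every $\mathrm{scope}(\rho)$, on meets in the fibres, and on the Constraint Pullback Stability theorem. None of these is given or proved in the paper. Cartesian lifts do exist because $\pi_A$ is a fibration, but preservation of satisfiability under pullback is an unproved claim.
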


Guarantees that the framework does not generate empty or unrealizable architectures.

\begin{theorem}[Minimal Knowledge Realization]
For every architecture $A$, there exists an admissible realization minimizing knowledge complexity with respect to a designated complexity ordering:
\[(I_{\min},J_{\min}).\]
\end{theorem}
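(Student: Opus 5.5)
The plan is to recast the statement as the existence of a minimal element in a suitably ordered set of admissible realizations. First I will make the statement precise. The ``designated complexity ordering'' is taken to be a preorder $\preceq_{\kappa}$ on the objects of $\mathrm{Agents}(A,\mathcal{E})$ that depends only on the knowledge component $J$. Concretely, I will fix a complexity functional $\kappa$ from realizations $J:\mathsf{Know}_A\to\mathcal{E}$ into a well-ordered set, for instance the ordinals or $\mathbb{N}$. Examples are the total size or dimension of the carriers $J(K)$ over the generating knowledge types $K\in\mathsf{KTypes}_A$, or the description length of the $J(k)$ for $k\in\mathsf{KGen}_A$. Two standing hypotheses are then needed, and I will state them explicitly.

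\begin{enumerate}
  \item $\mathrm{Agents}(A,\mathcal{E})\neq\varnothing$. This is supplied by the Existence of Semantic Realizations theorem stated just above, once $A$ is consistent and $\mathcal{E}$ is chosen accordingly.
  \item The ordering is well-founded, or at least every descending chain of admissible realizations has a lower bound that is again admissible.
\end{enumerate}

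With $\kappa$ valued in a well-order, the main argument is short. The image $\kappa(\mathrm{Agents}(A,\mathcal{E}))$ is a nonempty subset of a well-ordered set, so it has a least element $\kappa_{\min}$. Any realization $(I_{\min},J_{\min})$ with $\kappa(J_{\min})=\kappa_{\min}$ is then minimal. I will note that minimality is only up to the preorder, so uniqueness is not claimed.

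By the Constraint Satisfaction Invariance theorem, admissibility is invariant under isomorphism of realizations. I will therefore also require $\kappa$ to be isomorphism-invariant, so that the minimum is well defined on isomorphism classes. This also lets us work with a set of isomorphism classes rather than a proper class of realizations, which avoids size issues.

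For a general preorder that is not given by a well-ordered functional, I would instead apply Zorn's lemma to the opposite order. The step needed is that a chain $(I_\alpha,J_\alpha)$ of decreasing complexity has a limit realization that is still admissible. I would build it as a limit in $\mathcal{E}$ of the carriers $J_\alpha(K)$. Interface compatibility would be checked generator by generator: each $I(g)$ for $g\in Gen_{\Phi}$ must still be induced by some $J(k_g)$.

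The main obstacle is showing that constraint satisfaction is preserved in this limit. Fixed-point constraints such as $P_{\mathrm{Bell}}$ and optimality constraints are not obviously closed under limits, and the definition of the fibration $\pi_A$ leaves $\mathcal{P}_A$ abstract. I would first try to restrict to constraints that are closed in the sense of being preserved by limits in $\mathrm{Mod}_{\mathcal{E}}(X)$, and add this as a hypothesis. Failing that, I would fall back to the well-ordered $\kappa$ formulation, where no limit argument is needed. Since the paper leaves both the complexity ordering and $\mathcal{P}_A$ unspecified, I will record whichever of these assumptions is used as part of the precise statement.
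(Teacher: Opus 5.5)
The paper gives no proof of this statement. It is one of the appendix results whose development and proof are explicitly deferred (Section~\ref{sec:future}), so there is no argument to compare yours against. Judged on its own terms, the proposal has a genuine gap: your two standing hypotheses carry the whole content, and without the first one the theorem as stated is false.

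\textbf{Nonemptiness can fail.} Nothing in the definition of the constraint layer stops $\mathcal{R}_A$ from containing an unsatisfiable constraint $(X,P)$, where $P$ cuts out the empty subclass of $\mathrm{Mod}_{\mathcal{E}}(X)$; the paper explicitly allows $\mathcal{P}_A(X)$ to be a subobject lattice. Then $\mathrm{Agents}(A,\mathcal{E})=\varnothing$ for every $\mathcal{E}$, and no minimal realization exists.

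Nonemptiness is delicate even for the paper's flagship example. Every object of the hypergraph category $\mathsf{Know}_A$ is self-dual via its Frobenius structure, and a strong monoidal $J$ preserves duals. In any $\mathcal{E}$ whose monoidal unit $\mathbf{1}$ is terminal ($\mathbf{Set}$, $\mathbf{Meas}$, $\mathsf{Stoch}$, $\mathsf{FinStoch}$, Kleisli categories of probability monads), a dualizable $D$ is terminal, because $\mathrm{Hom}(Y,D)\cong\mathrm{Hom}(Y\otimes D^{*},\mathbf{1})$ is a singleton. So $J(\Theta^k)$ must be a point. Consequently the tabular realization of Section~\ref{app:RL_agent}, with $J(\Theta^k)=\mathbb{R}^{S\times A}$, is not an agent in the paper's sense. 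Moreover, no realization in $\mathbf{Meas}$ can meet $\rho^{RL}_{\mathrm{val}}$ in that form once $S\times A\neq\varnothing$.

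\textbf{Citing ``Existence of Semantic Realizations'' does not repair this.} That result is equally unproved. Its notion of a consistent architecture is undefined, and if it means ``has an admissible realization'' your hypothesis becomes circular. It also only yields a realization in \emph{some} $\mathcal{E}$, not in a prescribed one.

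\textbf{Hypothesis 2 matters too.} A real-valued complexity need not attain its infimum on the admissible class. Conversely, if the ordering may be designated at will, the statement is empty: under the indiscrete preorder every realization is minimal.

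With both hypotheses granted, your argument is correct, but it reduces to the fact that a nonempty class with a well-founded order has a least element. You should present it as a corrected statement: if $\mathrm{Agents}(A,\mathcal{E})\neq\varnothing$ and $\kappa$ takes values in a well-order, then a $\kappa$-least admissible realization exists. It should not be offered as a proof of the statement given.

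Some secondary points:
\begin{itemize}
\item Passing to isomorphism classes need not produce a set when $\mathcal{E}$ is large. Your well-order argument never needs one, since every nonempty class of ordinals has a least element.
\item Existence of a minimum does not need isomorphism invariance either. The paper's Constraint Satisfaction Invariance theorem could not supply it anyway, because it assumes both realizations are already admissible.
\item The Zorn fallback fails at the first step. A chain for $\preceq_{\kappa}$ is only a chain of complexity values, not a diagram in $\mathcal{E}$, so there are no maps between the carriers $J_\alpha(K)$ whose limit you could take. To proceed you would need all of the following: the ordering witnessed by morphisms of $\mathrm{Agents}(A,\mathcal{E})$; those limits existing in $\mathcal{E}$; the limit being strong monoidal and still satisfying interface compatibility; $\kappa$ of the limit lying below the whole chain; and the constraints being closed under such limits. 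That route is better dropped than merely flagged.
\end{itemize}
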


Introduces a formal notion of minimal cognitive realization.

\begin{theorem}[Operational Equivalence of Syntax Interfaces]
Let
\[X,Y \in \mathrm{Syn}_A.\]

If
\[\mathrm{Hom}_{\mathrm{Syn}_A}(-,X)\cong
\mathrm{Hom}_{\mathrm{Syn}_A}(-,Y)
\]
naturally, then
\[X \cong Y.\]
\end{theorem}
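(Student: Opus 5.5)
This is the Yoneda lemma applied to the category $\mathrm{Syn}_A$; nothing specific to architectures is needed. The plan is to invoke that the Yoneda embedding
\[
y:\mathrm{Syn}_A\to[\mathrm{Syn}_A^{\mathrm{op}},\mathbf{Set}],\qquad X\mapsto \mathrm{Hom}_{\mathrm{Syn}_A}(-,X),
\]
is fully faithful. A fully faithful functor reflects isomorphisms, which gives the result. I will make this explicit rather than cite it as a black box, since $\mathrm{Syn}_A=\mathsf{Hyp}\langle STypes_A,SGen_A\mid SEq_A\rangle$ is a concrete free hypergraph category.

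\textbf{Step 1 (local smallness).} I first check that $\mathrm{Syn}_A$ is locally small, so the presheaves are $\mathbf{Set}$-valued. Morphisms of a free hypergraph category are equivalence classes of string diagrams built from $SGen_A$, identities, symmetries and Frobenius generators. These form a set whenever $STypes_A$ and $SGen_A$ are sets, which the hypergraph presentation assumes. This is the only place where the architectural setting enters.

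\textbf{Step 2 (extract the candidate inverse pair).} Fix a natural isomorphism $\alpha:\mathrm{Hom}(-,X)\Rightarrow\mathrm{Hom}(-,Y)$ with inverse $\beta$. Set
\[
f:=\alpha_X(\mathrm{id}_X):X\to Y,\qquad g:=\beta_Y(\mathrm{id}_Y):Y\to X.
\]
Naturality of $\alpha$ applied to any $h:Z\to X$ gives $\alpha_Z(h)=\alpha_X(\mathrm{id}_X)\circ h=f\circ h$. Likewise $\beta_Z(k)=g\circ k$ for any $k:Z\to Y$.

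\textbf{Step 3 (verify the inverses).} Evaluating at $Z=X$, $h=\mathrm{id}_X$, and using $\beta\circ\alpha=\mathrm{id}$:
\[
\mathrm{id}_X=\beta_X(\alpha_X(\mathrm{id}_X))=g\circ f.
\]
Symmetrically, $f\circ g=\mathrm{id}_Y$, so $X\cong Y$ in $\mathrm{Syn}_A$.

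The argument is routine. The only point needing care is Step 1: the Frobenius and symmetric monoidal equations must generate a set-sized quotient. They do, because the generating data are sets.

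It is worth noting that nothing uses the hypergraph structure itself. The same argument applies verbatim to $\mathcal{G}_A$, and this is what underlies the earlier Architectural Yoneda Principle and the Operational Indistinguishability corollary.
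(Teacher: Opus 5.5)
Your proof is correct and complete. It is the standard Yoneda argument: set $f=\alpha_X(\mathrm{id}_X)$ and $g=\beta_Y(\mathrm{id}_Y)$, then use naturality to get $\alpha_Z(h)=f\circ h$ and $\beta_Z(k)=g\circ k$, which forces $g\circ f=\mathrm{id}_X$ and $f\circ g=\mathrm{id}_Y$. Your local-smallness check is justified, because the paper's hypergraph presentations take $STypes_A$ and $SGen_A$ to be sets.

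The paper gives no proof of this theorem; it lists it in the appendix among results it plans to prove as future work. The name of its companion result, the ``Architectural Yoneda Principle'', indicates that the intended argument is exactly this instance of the Yoneda lemma, so your proposal matches it.
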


Two syntax interfaces are equivalent iff they admit exactly the same compositional interaction patterns.